\DeclareMathOperator*{\argmin}{\mathrm{argmin}}
\newcommand{\diag}{\mat{diag}}
\newcommand{\unif}{U}
\newcommand{\holder}{H\"{o}lder }
\def\R{\mathbb{R}}
\def\cE{\mathcal{E}}
\def\cF{\mathcal{F}}
\def\cG{\mathcal{G}}
\def\cM{\mathcal{M}}
\def\cZ{\mathcal{Z}}
\newcommand{\card}[1]{\lvert#1\rvert}
\newcommand{\set}[1]{\{#1\}}
\newcommand{\bigBracks}[1]{\bigl[#1\bigr]}
\newcommand{\Bracks}[1]{\left[#1\right]}
\newcommand{\bigParens}[1]{\bigl(#1\bigr)}
\newcommand{\BigParens}[1]{\Bigl(#1\Bigr)}
\newcommand{\given}{\mathbin{\vert}}
\newcommand{\bigGiven}{\mathbin{\bigm\vert}}
\newcommand{\norm}[1]{\left\|#1\right\|}
\newcommand{\bigNorm}[1]{\bigl\lVert#1\bigr\rVert}
\newcommand{\abs}[1]{\left|#1\right|}
\newcommand{\BigAbs}[1]{\Bigl\lvert#1\Bigr\rvert}
\newcommand{\mat}[1]{\mathbf{#1}}
\newcommand{\vect}[1]{\mathbf{#1}}
\newcommand{\expect}{\mathbb{E}}
\newcommand{\prob}{\mathbb{P}}
\newcommand{\gclass}{\mathcal{G}}
\newcommand{\states}{\mathcal{S}}
\newcommand{\actions}{\mathcal{A}}
\newcommand{\contexts}{\mathcal{X}}
\newcommand{\simplex}{\triangle}
\newtheorem{thm}{Theorem}[section]
\newtheorem{lem}{Lemma}[section]
\newtheorem{cor}{Corollary}[section]
\newtheorem{asmp}{Assumption}[section]
\newtheorem{defn}{Definition}[section]
\newtheorem{claim}{Claim}[section]
\newtheorem{condition}{Condition}[section]
\newtheorem{corollary}{Corollary}[section]
\newcommand{\oracleq}{\textsc{OracleQ}\xspace}
\newcommand{\qlearning}{\textsc{QLearning}\xspace}
\newcommand{\decoding}{\textsc{PCID}\xspace}
\newcommand{\hats}{\ensuremath{\hat{s}}}
\newcommand{\hatset}{\ensuremath{\widehat{\states}}}
\newcommand{\hatf}{\ensuremath{\hat{f}}}
\newcommand{\emb}{\boldsymbol{\phi}}
\newcommand{\embh}{\widehat{\emb\,}\!}
\newcommand{\decodeTrue}{f^*}
\newcommand{\vp}{\mathbf{p}}
\newcommand{\vq}{\mathbf{q}}
\newcommand{\p}{p}
\newcommand{\hvp}{\widehat{\vp}}
\newcommand{\q}{q}
\newcommand{\vb}{\mathbf{b}}
\newcommand{\vg}{\mathbf{g}}
\newcommand{\ta}{\tilde{a}}
\newcommand{\ts}{\tilde{s}}
\newcommand{\vy}{\mathbf{y}}
\newcommand{\hvg}{\hat{\vg}}
\newcommand{\hp}{\hat{p}}
\newcommand{\hf}{\hat{f}}
\newcommand{\epsp}{\epsilon_{\textup{p}}}
\newcommand{\epsdec}{\epsilon_{\textup{f}}}
\newcommand{\ve}{\mathbf{e}}
\newcommand{\tO}{\tilde{O}}
\newcommand{\tOmega}{\tilde{\Omega}}
\newcommand{\tTheta}{\tilde{\Theta}}
\newcommand{\Nexp}{N_{\textup{g}}}
\newcommand{\Dexp}{D_{\textup{g}}}
\newcommand{\Ncluster}{N_{\phi}}
\newcommand{\Np}{N_{\textup{p}}}
\newcommand{\Dp}{D_{\textup{p}}}
\newcommand{\Nboost}{N_{\textup{b}}}
\newcommand{\Dboost}{D_{\textup{b}}}
\newcommand{\hvz}{\widehat{\mathbf{z}}}
\newcommand{\vz}{\mathbf{z}}
\newcommand{\vv}{\mathbf{v}}
\newcommand{\tnu}{\tilde{\nu}}
\newcommand{\vx}{\mathbf{x}}
\newcommand{\mW}{\mathbf{W}}
\newcommand{\vc}{\mathbf{c}}
\newcommand{\hatb}{\hat{b}}
\newcommand{\hvb}{\hat{\vb}}
\newcommand{\vQ}{\mathbf{Q}}
\icmltitlerunning{Provably efficient RL with Rich Observations via Latent State Decoding}
\newenvironment{itemize*}%
  {\begin{itemize}[leftmargin=*,topsep=0pt]%
    \setlength{\itemsep}{0pt}%
    \setlength{\parskip}{0pt}}%
  {\end{itemize}}
\begin{document}

\twocolumn[
\icmltitle{Provably efficient RL with Rich Observations via Latent State Decoding}




\begin{icmlauthorlist}
	\icmlauthor{Simon S. Du}{cmu}
	\icmlauthor{Akshay Krishnamurthy}{msrnyc}
	\icmlauthor{Nan Jiang}{uiuc}
	\icmlauthor{Alekh Agarwal}{msrred}
	\icmlauthor{Miroslav Dud\'{i}k}{msrnyc}
	\icmlauthor{John Langford}{msrnyc}

\end{icmlauthorlist}

\icmlaffiliation{cmu}{Carnegie Mellon University}
\icmlaffiliation{msrnyc}{Microsoft Research, New York}
\icmlaffiliation{uiuc}{University of Illinois at Urbana-Champaign}
\icmlaffiliation{msrred}{Microsoft Research,
	Redmond}
\icmlcorrespondingauthor{Simon S. Du}{ssdu@cs.cmu.edu}

\icmlkeywords{exploration}

\vskip 0.3in
]



\printAffiliationsAndNotice{}  
\begin{abstract}
	We study the exploration problem in episodic MDPs with rich observations generated from a small number of latent states. Under certain identifiability assumptions, we demonstrate how to estimate a mapping from the observations to latent states inductively through a sequence of regression and clustering steps---where previously decoded latent states provide labels for later regression problems---and use it to construct good exploration policies. We provide finite-sample guarantees on the quality of the learned state decoding function and exploration policies, and complement our theory with an empirical evaluation on a class of hard exploration problems.
Our method exponentially improves over $Q$-learning with na\"ive exploration, even when $Q$-learning has cheating access to latent states.

\end{abstract}

\setlength{\textfloatsep}{10pt plus 2pt minus 2pt}

\section{Introduction}
\label{sec:intro}
We study reinforcement learning (RL) in episodic environments with rich  observations, such as images and texts.
While many modern empirical RL algorithms are designed to handle such settings~\citep[see, e.g.,][]{mnih2015human},
only few works study how to explore well in these environments~\citep{ostrovski2017count,osband2016deep}
and the sample efficiency of these techniques is not theoretically understood.\looseness=-1
%

From a theoretical perspective, strategic exploration algorithms for provably sample-efficient RL have long existed in the classical tabular setting \citep{kearns2002near, brafman2002r}. However, these methods are difficult to adapt to rich observation spaces, because they all require a number of interactions polynomial in the number of \emph{observed states},
and, without additional structural assumptions, such a dependency is unavoidable \citep[see, e.g.,][]{jaksch2010near, lattimore2012pac}.
Consequently, treating the observations directly as unique states makes this class of methods unsuitable for most settings of practical interest.

In order to avoid the dependency on the observation space, one must exploit some inherent structure in the problem.
The recent line of work on contextual decision processes~\citep{krishnamurthy2016pac,jiang2017contextual,dann2018polynomial} identified certain low-rank structures that enable exploration algorithms with sample complexity polynomial in the rank parameter. 
Such low-rank structure is crucial to circumventing information-theoretic hardness, and is typically found in problems where complex observations are emitted from a small number of \emph{latent states}.
Unlike tabular approaches, which require the number of states to be small and observed, these works are able to handle settings where the observation spaces are uncountably large or continuous and the underlying states never observed during learning.
They achieve this by exploiting the low-rank structure implicitly, operating only in the observation space. The resulting algorithms are sample-efficient, but either provably computationally intractable, or practically quite cumbersome even under strong assumptions~\citep{dann2018polynomial}.

In this work, we take an alternative route: we recover the latent-state structure explicitly
by learning a \emph{decoding function} (from a large set of candidates) that maps a rich observation to the corresponding latent state; note that if such a function is learned perfectly, the rich-observation problem is reduced to a tabular problem where exploration is tractable. We show that our algorithms are:

\emph{Provably sample-efficient:} Under certain identifiability assumptions, we recover a mapping from the observations to underlying latent states as well as a good exploration policy using a number of samples which is polynomial in the number of latent states, horizon and the complexity of the decoding function class with no explicit dependence on the observation space size. Thus we significantly generalize beyond the works of~\citet{dann2018polynomial} who require deterministic dynamics and~\citet{azizzadenesheli2016pomdp} whose guarantees scale with the observation space size.

\emph{Computationally practical:} Unlike many prior works in this vein, our algorithm is easy to implement and substantially outperforms na\"ive exploration in experiments, even when the baselines have cheating access to the latent states.

In the process, we introduce a formalism called \emph{block Mar\-kov decision process} (also implicit in some prior works), and a new solution concept for exploration called $\epsilon$--policy cover.\looseness=-1

The main challenge in learning the decoding function is that the hidden states are never directly observed.
Our key novelty is the use of a backward conditional probability vector (Equation~\ref{eqn:bnu}) as a representation for latent state, and learning the decoding function via conditional probability estimation, which can be solved using least squares regression.
While learning a low-dimensional representations of rich observations has been explored in recent empirical works \citep[e.g.,][]{silver2017predictron, oh2017value, pathak2017curiosity}, our work provides a precise mathematical characterization of the structures needed for such approaches to succeed and comes with rigorous sample-complexity guarantees.




\section{Setting and Task Definition}
\label{sec:pre}
We begin by introducing some basic notation. We write $[h]$ to denote the set
$\left\{1,\ldots,h\right\}$.
For any finite set $S$, we write $U(S)$ to denote the uniform distribution over $S$.
We write $\simplex_d$ for the simplex in $\R^d$. Finally, we write
$\norm{\cdot}$ and $\norm{\cdot}_1$, respectively, for the Euclidean and the $\ell_1$ norms of a vector.

\subsection{Block Markov Decision Process}
\label{sec:cdp}

In this paper we introduce and analyze a \emph{block Markov decision process} or \emph{BMDP}.
It refers to an environment described by a
finite, but unobservable \emph{latent state space} $\states$, a finite \emph{action space} $\actions$, with $\card{\actions}=K$,
and a possibly infinite, but observable \emph{context space} $\contexts$. The dynamics of a BMDP is described by the \emph{initial state} $s_1\in\states$ and two conditional probability functions:
the \emph{state-transition function} $p$ and \emph{context-emission function} $q$, defining conditional probabilities $p(s'\given s,a)$ and $q(x\given s)$ for all $s,s'\in\states$, $a\in\actions$, $x\in\contexts$.\footnote{%
 For continuous context spaces, $q(\cdot\given s)$ describes a density function relative to a suitable measure (e.g., Lebesgue measure).}

The model may further include a distribution of reward conditioned on context and action. However, rewards do not play a role in the central task of the paper, which is the exploration of all latent states. Therefore, we omit rewards from our formalism, but we discuss in a few places how our techniques apply in the presence of rewards (for a thorough discussion see Appendix~\ref{app:rewards}).

We consider episodic learning tasks with a finite horizon $H$. In each episode, the environment starts in the state $s_1$. In the step $h\in[H]$ of an episode, the environment generates a context $x_h\sim\q(\cdot\given s_h)$, the agent observes the context $x_h$ (but not the state $s_h$), takes an action $a_h$, and the environment transitions to a new state $s_{h+1}\sim p(\cdot\given s_h,a_h)$. The sequence $(s_1,x_1,a_1,\dotsc,s_H,x_H,a_H,s_{H+1},x_{H+1})$ generated in an episode is called a \emph{trajectory}. We emphasize that a learning agent does not observe components $s_h$ from the trajectory.

So far, our description resembles that of a partially observable Markov decision process (POMDP). To finish the definition of BMDP, and distinguish it from a POMDP, we make the following assumption:
\begin{asmp}[Block structure]
\label{asmp:block}
Each context $x$ uniquely determines its generating state $s$. That is, the context space $\contexts$ can be partitioned into disjoint blocks $\contexts_s$, each containing the support of the conditional distribution $q(\cdot\given s)$.
\end{asmp}

The sets $\contexts_s$ are unique up to the sets of measure zero under $q(\cdot\given s)$. In the paper, we say ``\emph{for all $x\in\contexts_s$}'' to mean ``\emph{for all $x\in\contexts_s$ up to a set of measure zero under $q(\cdot\given s)$}.''\looseness=-1

The block structure implies the existence of a \emph{perfect decoding function} $\decodeTrue:\contexts\to\states$, which maps contexts into their generating states.
This means that a BMDP is indeed an MDP with the transition
operator $P(x'\given x,a)=\q\bigParens{x'\given\decodeTrue(x')}\p\bigParens{\decodeTrue(x')\bigGiven \decodeTrue(x),a}$.
Hence 
the contexts $x$ observed by the agent form valid Markovian states, but the size of $\contexts$ is too large, so  only learning the MDP parameters in the smaller, latent space $\states$ is tractable.

The BMDP model is assumed in several prior works~\citep[e.g.,][]{krishnamurthy2016pac,azizzadenesheli2016pomdp,dann2018polynomial}, without the explicit name. It naturally captures
visual grid-world environments studied in empirical RL~\citep[e.g.,][]{johnson2016malmo}, and also models noisy observations of the latent state due to imperfect sensors.
While the block-structure assumption appears severe, it is necessary for efficient learning if the reward is allowed to depend arbitrarily on the latent state (cf.~Propositions 1 and 2 of \citealt{krishnamurthy2016pac}). In our experiments, we study the robustness of our algorithms to this assumption.


To streamline our analysis, we make a standard assumption for episodic settings. We assume that 
$\states$ can be partitioned into disjoints sets $\states_h$, $h\in[H+1]$, such that $p(\cdot\given s,a)$ is supported on $\states_{h+1}$ whenever $s\in\states_h$. We refer to $h$ 
as the \emph{level} and assume that it is observable as part of the context, so the context space is also partitioned into sets $\contexts_h$. We use notation $\states_{[h]}=\cup_{\ell\in[h]}\states_\ell$ for the set of states up to level~$h$, and similarly define $\contexts_{[h]}=\cup_{\ell\in[h]}\contexts_\ell$.

We assume that $\card{\states_h}\le M$. We seek learning algorithms that scale polynomially in parameters $M$, $K$
and $H$, but do not explicitly depend on $\card{\contexts}$, which might be infinite.

\subsection{Solution Concept: Cover of Exploratory Policies}
\label{sec:solution}


In this paper, we focus on the problem of exploration. Specifically, for each state $s\in\states$, we seek an agent strategy for reaching that state $s$. We formalize an agent strategy as an \emph{$h$-step policy},
which is a map $\pi:\contexts_{[h]}\to\actions$ specifying which action to take in each context up to step $h$.
When executing an $h$-step policy $\pi$ with $h<H$, an agent acts according to $\pi$ for $h$ steps and then arbitrarily until the end of the episode (e.g., according to a specific default policy).

For an $h$-step policy $\pi$, we write $\prob^{\pi}$ to denote the probability distribution over $h$-step trajectories induced by $\pi$. We write $\prob^\pi(\cE)$ for the probability of an event $\cE$. For example,
$\prob^{\pi}(s)$ is the probability of reaching the state~$s$ when executing~$\pi$.\looseness=-1

We also consider randomized strategies, which we formalize as \emph{policy mixtures}. An $h$-step \emph{policy mixture} $\eta$ is a distribution over $h$-step policies. When executing $\eta$, an agent randomly draws a policy $\pi \sim \eta$ at the beginning of the episode, and then follows $\pi$ throughout the episode. The induced distribution over $h$-step trajectories is denoted $\prob^\eta$.

Our algorithms create specific policies and policy mixtures via concatenation. Specifically, given an $h$-step policy $\pi$, we write $\pi\odot a$ for the $(h+1)$-step policy that executes $\pi$ for $h$ steps and chooses action $a$ in step $h+1$. Similarly, if $\eta$ is a policy mixture and $\nu$ a distribution over $\actions$, we write $\eta\odot\nu$ for the policy mixture equivalent to first sampling and following a policy according to $\eta$ and then independently sampling and following an action according to $\nu$.

We finally introduce two key concepts related to exploration: \emph{maximum reaching probability} and \emph{policy cover}.
\begin{defn}[Maximum reaching probability.]
\label{defn:max_reach_prob}
For any $s\in\states$, its \emph{maximum reaching probability} $\mu(s)$ is
\[\textstyle
    \mu(s)
	:= \smash{\max_{\pi}\prob^\pi(s)},
\]
where the maximum is taken over all maps $\contexts_{[H]}\to\actions$. The policy
attaining the maximum for a given $s$ is denoted $\pi_s^*$.\footnote{It suffices to consider maps $\contexts_{[h]}\to \actions$ for $s \in \states_{h+1}$.}
\end{defn}
%
%
Without loss of generality, we assume that all the states are reachable, i.e., $\mu(s)>0$ for all $s$.
We write $\mu_{\min}=\min_{s\in\states}\mu(s)$ for the $\mu(s)$ value of the hardest-to-reach state. Since $\states$ is finite and all states are reachable, $\mu_{\min}>0$.\looseness=-1


Given maximum reaching probabilities, we formalize the task of finding policies that reach states $s$ as the task of finding an \emph{$\epsilon$--policy cover} in the following sense:
\begin{defn}[Policy cover of the state space]
\label{defn:cover}
We say that a set of policies $\Pi_h$ is an \emph{$\epsilon$--policy cover} of $\states_h$ if for all $s\in\states_h$ there exists an $(h-1)$-step policy $\pi\in\Pi_h$ such that $\prob^\pi(s) \geq \mu(s) - \epsilon$. A set of policies $\Pi$ is an \emph{$\epsilon$--policy cover} of $\states$ if it is an $\epsilon$--policy cover of $\states_h$ for all $h\in[H+1]$.
\end{defn}
Intuitively, we seek a policy cover of a small size, typically $O(\card{\states})$, and with a small $\epsilon$.
Given such a cover, we can reach every state with the largest possible probability (up to $\epsilon$) by executing each policy from the cover in turn. This enables us to collect a dataset of observations and rewards at all (sufficiently) reachable states $s$ and further obtain a policy that maximizes any reward (details in Appendix~\ref{app:rewards}).

\section{Embedding Approach}
\label{sec:embedding}


A key challenge in solving the BMDP exploration problem is the lack of access to the latent state $s$. Our algorithms work by explicitly learning a decoding function $f$ which maps contexts to the corresponding latent states. This appears to be a hard unsupervised learning problem, even under the block-structure assumption, unless we make strong assumptions about the structure of $\contexts_s$ or about the emission distributions $q(\cdot\given s)$. Here, instead of making assumptions about $q$ or $\contexts_s$, we make certain ``separability'' assumptions about the latent transition probabilities $p$. Thus, we retain a broad flexibility to model rich context spaces, and also obtain the ability to efficiently learn a decoding function~$f$.
In this section, we define key components of our approach and formally state the separability assumption.
\looseness=-1

\subsection{Embeddings and Function Approximation}

In order to construct the decoding function $f$, we learn low-dimensional representations of contexts as well as latent states in a shared space, namely $\Delta_{MK}$. We learn embedding functions $\vg:\contexts\to\Delta_{MK}$ for contexts and $\emb:\states\to\Delta_{MK}$ for states, with the goal that $\vg(x)$ and $\emb(s)$ should be close if and only if $x\in\contexts_s$. Such embedding functions always exist due to the block-structure: for any set of distinct vectors $\set{\emb(s)}_{s\in\states}$, it suffices to define $\vg(x)=\emb(s)$ for $x\in\contexts_s$.

As we see later in this section, embedding functions $\emb$ and $\vg$ can be constructed via an essentially supervised approach, assuming separability. The state embedding $\emb$ is a lower complexity object (a tuple of at most $\card{\states}$ points in $\Delta_{MK}$), whereas the context embedding $\vg$ has a high complexity for even moderately rich context spaces. Therefore, as is standard in supervised learning, we limit attention to functions $\vg$ from some class $\gclass \subseteq \set{\contexts \to \Delta_{MK}}$, such as generalized linear models, tree ensembles, or neural nets. This is a form of function approximation where the choice of $\gclass$ includes any inductive biases about the structure of the contexts.
By limiting the richness of $\gclass$, we can generalize across contexts as well as control the sample complexity of learning. At the same time, $\gclass$ needs to include embedding functions that reflect the block structure. 
Allowing a separate $\vg_h\in\gclass$ for each level, we require realizability in the following sense:
\begin{asmp}[Realizability]
\label{asmp:realizability}
	For any $h \in [H+1]$ and $\emb: \states_h \rightarrow \simplex_{MK}$,
    there exists $\vg_h \in \gclass$ such that $\vg_h(x)  = \emb(s)$ for all $x\in\contexts_s$ and $s\in\states_h$.
\end{asmp}
In words, the class $\gclass$ must be able to match any state-embedding function $\emb$ across all blocks $\contexts_s$. To satisfy this assumption, it is natural to consider classes $\gclass$ obtained via a composition $\emb'\circ f$ where $f$ is a decoding function from some class $\cF\subseteq\set{\contexts\to\states}$ and $\emb'$ is any mapping $\states\to\simplex_{MK}$. Conceptually, $f$ first decodes the context $x$ to a state $f(x)$ which is then embedded by $\emb'$
into $\simplex_{MK}$.
The realizability assumption is satisfied as long as $\cF$ contains a perfect decoding function $f^*$, for which $f^*(x) = s$ whenever $x\in\contexts_s$. The core representational power of $\gclass$ is thus driven by $\cF$, the
class of candidate decoding functions $f$.




Given such a class $\gclass$, our goal is find a suitable context-embedding function in $\gclass$ using a number of trajectories that is proportional to $\log\,\card{\gclass}$ when $\gclass$ is finite, or a more general notion of complexity such as a log covering number when $\gclass$ is infinite. Throughout this paper, we assume that $\gclass$ is finite as it serves to illustrate the key ideas, but our approach generalizes to the infinite case using standard techniques.

As we alluded to earlier, we learn context embeddings $\vg_h$ by solving supervised learning problems. In fact,
we only require the ability to solve least squares problems. Specifically, we assume access to an algorithm for
solving vector-valued least-squares regression over the class $\gclass$. We refer to such an algorithm as the ERM oracle:
\begin{defn}[ERM Oracle]
	\label{defn:weighted_erm}
	Let $\gclass$ be a function class that maps $\contexts$ to $\simplex_{MK}$.
    An \emph{empirical risk minimization oracle (ERM oracle)} for $\gclass$ is any algorithm that
    takes as input a data set $D = \set{(x_i,\vy_i)}_{i=1}^n$ with $x_i \in \contexts$, $\vy_i \in  \simplex_{MK}$, and 
    computes $\argmin_{\vg \in \gclass} \sum_{(x,\vy)\in D} \norm{\vg(x)-\vy}^2$.
\end{defn}
%

\subsection{Backward Probability Vectors and Separability}

For any distribution $\prob$ over trajectories, we define \emph{backward probabilities} as the conditional probabilities of the form $\prob(s_{h-1}, a_{h-1}\given s_h)$---note that conditioning is the opposite of transitions in $p$. For the backward probabilities to be defined, we do not need to fully specify a full distribution over trajectories, only a distribution $\nu$ over $(s_{h-1},a_{h-1})$. For any such distribution $\nu$, any $s\in\states_{h-1}$, $a\in\actions$ and $s'\in\states_h$, the backward probability is defined as
\begin{equation}
\label{eqn:bnu}
b_\nu(s,a\given s') = \frac{p(s'\given s,a)\,\nu(s,a) }{\sum_{\ts,\ta}
                            p(s'\given \ts,\ta)\,\nu(\ts,\ta)}
.
\end{equation}
For a given $s'\in \states_h$, we collect the probabilities $b_\nu(s,a\given s')$ across all $s\in\states_{h-1}$, $a\in\actions$ into the \emph{backward probability vector} $\vb_\nu(s')\in\simplex_{MK}$, padding with zeros if $\card{\states_{h-1}}<M$. Backward probability vectors are at the core of our approach, because they correspond to the state embeddings $\emb(s)$ approximated by our algorithms.
Our algorithms require that $\vb_{\nu}(s')$ for different states $s'\in\states_h$ be sufficiently separated
from one another for a suitable choice of $\nu$:
\begin{asmp}[$\gamma$-Separability]
	\label{asmp:identifiability}
	There exists $\gamma>0$ such that for any $h \in\set{2,\dotsc,H+1}$ and any distinct $s',s''\in\states_h$, the backward probability vectors with respect to the uniform distribution are separated by a margin of at least $\gamma$, i.e.,
$\norm{\vb_\nu(s')-\vb_\nu(s'')}_1\ge \gamma$, where $\nu=U(\states_{h-1}\times\actions)$.
\end{asmp}
We show in Section~\ref{sec:deterministic} that this assumption is automatically satisfied with $\gamma=2$ when latent-state transitions are deterministic (as assumed, e.g., by \citealp{dann2018polynomial}).
However, the class of $\gamma$-separable models is substantially larger.
In Appendix~\ref{sec:justification_pomdp} we show that the uniform distribution in the assumption can be replaced with any distribution supported on $\states_{h-1}\times\actions$, although the margins $\gamma$ would be different.

The key property that makes vectors $\vb_\nu(s')$ algorithmically useful is that they arise as solutions to a specific least squares problem with respect to data generated by a policy whose marginal distribution over $(s_{h-1},a_{h-1})$ matches $\nu$.
Let $\ve_{(s,a)}$ denote the vector of the standard basis in $\R^{MK}$ corresponding to the coordinate indexed by $(s,a)\in\states_{h-1}\times\actions$. Then the following statement holds:
%
%
%
\begin{thm}
\label{thm:why_pop_risk_minimizer}
Let $\nu$ be a distribution supported on $\states_{h-1}\times\actions$ and let $\tilde{\nu}$ be a distribution over $(s,a,x')$ defined by sampling $(s,a)\sim\nu$, $s'\sim p(\cdot\given s,a)$, and $x'\sim q(\cdot\given s')$. Let
\begin{align}
\label{eqn:pop_risk_minimization}
\vg_h\in\argmin_{\vg \in \gclass}
\expect_{\tilde{\nu}}\Bracks{\norm{\vg(x') - \ve_{(s,a)}}^2}.
\end{align}
Then, under Assumption~\ref{asmp:realizability}, every minimizer $\vg_h$ satisfies
$\vg_h(x')=\vb_\nu(s')$ for all $x'\in\contexts_{s'}$ and $s'\in\states_h$.
\end{thm}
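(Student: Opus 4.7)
The plan is to use the standard bias-variance decomposition for the square loss and then invoke realizability. The key observation is that $\vb_{\nu}(s')$ is the conditional expectation (Bayes regressor) of the target $\ve_{(s,a)}$ given $x'$, under the distribution $\tilde{\nu}$.

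First I would compute the conditional mean of $\ve_{(s,a)}$ given $x'$. By the block-structure assumption (Assumption~\ref{asmp:block}), knowing $x'$ pins down $s'=\decodeTrue(x')$ exactly, so $\prob_{\tilde\nu}(s,a\given x')=\prob_{\tilde\nu}(s,a\given s'=\decodeTrue(x'))$. Applying Bayes' rule to the joint on $(s,a,s')$ induced by $\nu$ and $p$, this conditional probability equals exactly $b_\nu(s,a\given s')$ as defined in~\eqref{eqn:bnu}. Stacking over coordinates $(s,a)\in\states_{h-1}\times\actions$ gives
\begin{equation*}
\expect_{\tilde\nu}\bigBracks{\ve_{(s,a)}\bigGiven x'}=\vb_{\nu}\bigParens{\decodeTrue(x')}.
\end{equation*}

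Next I would apply the usual orthogonal decomposition. For any $\vg\in\gclass$, conditioning on $x'$ and using the tower property to kill the cross term yields
\begin{equation*}
\expect_{\tilde\nu}\bigBracks{\Norm{\vg(x')-\ve_{(s,a)}}^2}
=\expect_{\tilde\nu}\bigBracks{\Norm{\vg(x')-\vb_\nu(\decodeTrue(x'))}^2}+C,
\end{equation*}
where $C=\expect_{\tilde\nu}[\|\ve_{(s,a)}-\vb_\nu(\decodeTrue(x'))\|^2]$ does not depend on $\vg$. Thus minimizing the left-hand side over $\gclass$ is equivalent to minimizing the first term on the right, which is nonnegative.

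Finally, I would use realizability (Assumption~\ref{asmp:realizability}) applied to the embedding $\emb(s'):=\vb_\nu(s')$ (a map $\states_h\to\simplex_{MK}$) to produce a $\vg^*\in\gclass$ with $\vg^*(x')=\vb_\nu(s')$ for every $x'\in\contexts_{s'}$. This $\vg^*$ drives the first term to zero, so any minimizer $\vg_h$ of~\eqref{eqn:pop_risk_minimization} must also satisfy $\vg_h(x')=\vb_\nu(\decodeTrue(x'))$ for $\tilde\nu$-almost every $x'$. Under the paper's convention on measure-zero sets under $q(\cdot\given s')$, this is precisely the claimed equality for all $x'\in\contexts_{s'}$ and all reachable $s'\in\states_h$.

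I do not expect a serious obstacle; the only care needed is in verifying the identification of the Bayes regressor, i.e., that conditioning $\tilde\nu$ on $x'$ and applying Bayes' rule on $(s,a,s')$ really reproduces $\vb_\nu(s')$, which uses the block-structure crucially (otherwise $\prob(s,a\given x')$ would not collapse to $\prob(s,a\given s')$). Everything else is the standard characterization of square-loss minimizers under realizability.
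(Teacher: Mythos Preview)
Your proposal is correct and follows essentially the same approach as the paper: identify $\vb_\nu(s')$ as the Bayes regressor (conditional mean of $\ve_{(s,a)}$ given $x'$, using block structure to collapse to conditioning on $s'$), then invoke realizability to conclude. If anything, your version is slightly more explicit than the paper's, since you spell out the bias--variance decomposition and thereby make clear why \emph{every} minimizer, not just one, must equal $\vb_\nu(\decodeTrue(x'))$ almost everywhere.
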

The distribution $\tilde{\nu}$ is exactly the marginal distribution induced by a policy whose marginal distribution over $(s_{h-1},a_{h-1})$ matches $\nu$. Any minimizer $\vg_h$ yields context embeddings corresponding to state embeddings $\emb(s')=\vb_\nu(s')$. Our algorithms build on Theorem~\ref{thm:why_pop_risk_minimizer}: they replace the expectation by an empirical sample and obtain an approximate minimizer $\hvg_h$ by invoking an ERM oracle.

\section{Algorithm for Separable BMDPs}
\label{sec:pomdp}
\newcommand{\etah}{\eta_h}
\newcommand{\added}[1]{\color{red}#1\color{black}}

\begin{algorithm}[t]
	\caption{\textsc{PCID} (\textbf{P}olicy \textbf{C}over via \textbf{I}nductive \textbf{D}ecoding)} 
	\label{algo:stochastic}
	\begin{algorithmic}[1]
		\STATE \textbf{Input}:\\
        ~~~$\Nexp$: sample size for learning context embeddings\\
        ~~~$\Ncluster$: sample size for learning state embeddings\\
        ~~~$\Np$: sample size for estimating transition probabilities\\
        ~~~$\tau > 0$: a clustering threshold for learning latent states\\
\smallskip
		\STATE \textbf{Output}: policy cover $\Pi=\Pi_1\cup\cdots\cup\Pi_{H+1}$
\smallskip
        \STATE Let $\hatset_1=\set{s_1}$.
               Let $\hf_1(x)=s_1$ for all $x\in\contexts$.
        \STATE Let $\Pi_1=\set{\pi_0}$ where $\pi_0$ is the trivial $0$-step policy.
        \STATE Initialize $\hp$ to an empty mapping.
\smallskip
		\FOR{$h=2,\dotsc,H+1$}
		\STATE Let $\etah = \unif(\Pi_{h-1}) \odot \unif(\actions)$
\smallskip
        \STATE Execute $\etah$ for $\Nexp$ times.
               $\smash{\Dexp \!=\! \{\hats_{h-1}^i,a_{h-1}^i,x_h^i\}_{i=1}^{\Nexp}}$\\
        ~~~for $\smash{\hats_{h-1} \!=\! \hf_{h-1}(x_{h-1})}$.
\smallskip
		\STATE Learn $\hvg_h$ by calling ERM oracle on input $\Dexp$:\\
               ~~~$
                   \hvg_h = \argmin_{\vg \in \gclass} \sum_{(\hats,a,x')\in\Dexp} \norm{\vg(x')-\ve_{(\hats,a)}}^2$.
\vspace{-6pt}
        \STATE Execute $\etah$ for $\Ncluster$ times.
               $\cZ = \{\hvz_i = \hvg_h(x_h^i)\}_{i=1}^{\Ncluster}$.
\smallskip
		\STATE Learn $\smash{\hatset_h}$ and the state embedding map $\smash{\embh_h:\hatset_h\to\cZ}$\\
               ~~~by clustering $\cZ$ with threshold $\tau$ (see Algorithm~\ref{algo:find_representation}).
\vspace{-8pt}
		\STATE Define
               $\smash{\hatf_h(x') = \argmin_{\hat{s}\in\hatset_h}\,
                             \bigNorm{\embh(\hat{s}) - \hvg_h(x')}_1}$.
\medskip
		\STATE Execute $\etah$ for $\Np$ times.
               $\Dp \!=\! \{\hats^i_{h-1},a^i_{h-1},\hats^i_h\}_{i=1}^{\Np}$\\
        ~~~for $\smash{\hats_{h-1}\!=\!\hf_{h-1}(x_{h-1})}$, $\smash{\hats_h\!=\!\hf_h(x_h)}$.
\smallskip
		\STATE Define $\hp(\hats_h\given\hats_{h-1},a_{h-1})$\\
               ~~~equal to empirical conditional probabilities in $\Dp$.
\smallskip
		\FOR{$\hats' \in \widehat{\states}_h$}
		\STATE Run Algorithm~\ref{algo:dynamic} with inputs $\hp$ and $\hats'$\\
               ~~~to obtain $(h-1)$-step policy $\psi_{\hats'}:\hatset_{[h-1]}\to\actions$.
		\STATE Set $\pi_{\hat{s}'}(x_\ell) \!=\!\psi_{\hat{s}'}(\hat{f}_{\ell}(x_\ell))$,
               $\ell\in[h-1]$,  $x_\ell\in\contexts_\ell$.
		\ENDFOR
        \STATE Let $\Pi_h=\smash{(\pi_{\hats})_{\hats\in\hatset_h}}$.
		\ENDFOR
		\end{algorithmic}
\end{algorithm}

With the main components defined, we can now derive our algorithm for learning a policy cover in a separable BMDP.

The algorithm proceeds inductively, level by level. On each level $h$, we learn the following objects:
%
\begin{itemize*}
\item
The set of discovered latent states $\hatset_h\subseteq[M]$ and a decoding function $\smash{\hatf_h:\contexts \to \hatset_h}$, which allows us to identify latent states at level $h$ from observed contexts.

\item
The estimated transition probabilities $\hp(\hats_h\given \hats_{h-1},a)$ across all $\smash{\hats_{h-1}\in\hatset_{h-1}}$, $a\in\actions$, $\smash{\hats_h\in\hatset_h}$.

\item
A set of $(h-1)$-step policies $\Pi_h=\set{\pi_{\hats}}_{\hats\in\hatset_h}$.
\end{itemize*}
We establish a correspondence between the discovered states and true states via a bijection $\alpha_h$, under which the functions $\hf_h$ accurately decode contexts into states, the probability estimates $\hp$ are close to true probabilities, and $\Pi_h$ is an $\epsilon$--policy cover of $\states_h$.
%
%
%
Specifically, we prove the following statement for suitable accuracy parameters $\epsdec$, $\epsp$ and $\epsilon$:
%
%
\begin{claim}
\label{claim:stoc}
There exists a bijection $\alpha_h:\hatset_h\to\states_h$
such that the following conditions are satisfied for all $\hats\in\hatset_{h-1}$, $a\in\actions$, $\hats'\in\hatset_h$, and $s=\alpha_{h-1}(\hats_{h-1})$, $s'=\alpha_h(\hats')$, where $\alpha_{h-1}$ is the bijection for the previous level:
\begin{align}
&
  \text{Accuracy of $\hf_h$:}
&&
  \prob_{x'\sim q(\cdot\given s')}
  \bigBracks{\hf_h(x') =\hats'} \ge 1 - \epsdec,
\\
\notag
&
  \text{Accuracy of $\hp$:}
\\
  \span\span
\sum_{\hats''\in\hatset_h,\;s''=\alpha_h(\hats'')}\hspace{-2pt}
& \hspace{2pt}
  \BigAbs{\hp(\hats''\given\hats,a) - p(s''\given s,a)} \le \epsp,
\\
&
  \text{Coverage by $\Pi_h$:}\!\!
&&
  \prob^{\pi_{\hats'}}(s')\ge\mu(s')-\epsilon.
\end{align}
\end{claim}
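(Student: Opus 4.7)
The proof proceeds by induction on $h$. The base case $h=1$ is immediate: $\hatset_1=\set{s_1}$, $\hatf_1\equiv s_1$, $\hp$ is empty, and $\Pi_1=\set{\pi_0}$ trivially reaches $s_1$ with probability $\mu(s_1)=1$. Assume the three conditions hold at level $h-1$ with bijection $\alpha_{h-1}$; I then establish them at level $h$ and construct $\alpha_h$. Throughout I think of $\epsdec$, $\epsp$, and $\epsilon$ as shrinking parameters determined by $\Nexp$, $\Ncluster$, $\Np$ and the clustering threshold $\tau$.

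First I identify the population target of the ERM step. Let $\nu$ denote the true marginal of $\eta_h=\unif(\Pi_{h-1})\odot\unif(\actions)$ over $(s_{h-1},a_{h-1})$. Because $\Pi_{h-1}$ is an $\epsilon$--policy cover, $\nu(s,a)\ge(\mu(s)-\epsilon)/(\card{\Pi_{h-1}}K)$, so $\nu$ is supported on all of $\states_{h-1}\times\actions$ provided $\epsilon<\mu_{\min}$. By Theorem~\ref{thm:why_pop_risk_minimizer}, the population minimizer over $\gclass$ of the squared loss against $\ve_{(s_{h-1},a_{h-1})}$ equals $\vb_\nu(s')$ on each block $\contexts_{s'}$. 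By the non-uniform extension of Assumption~\ref{asmp:identifiability} discussed in Appendix~\ref{sec:justification_pomdp}, these vectors are still pairwise separated by a margin $\gamma'>0$ depending on $\gamma$ and the minimum mass of $\nu$. A subtle point is that $\Dexp$ carries decoded states $\hats_{h-1}=\hf_{h-1}(x_{h-1})$ rather than true $s_{h-1}$; by the level-$(h-1)$ decoding guarantee, the mislabeling rate under $\nu$ is at most $\epsdec$, so the population minimizer on the labels actually used by the algorithm is at $L_1$-distance $O(\epsdec)$ from the ideal $\vb_\nu(s')$, which I absorb into the margin.

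Next I pass from population to sample. Standard uniform-convergence for vector-valued least squares over the finite class $\gclass$ gives that, with $\Nexp\gtrsim(\log\card\gclass)/\varepsilon_{\textup{ERM}}^2$, the ERM output $\hvg_h$ satisfies $\expect_{\tilde\nu}\bigBracks{\Norm{\hvg_h(x')-\vb_\nu(s')}^2}\le\varepsilon_{\textup{ERM}}^2+O(\epsdec)$; by Markov, the fraction of $x'$ for which $\Norm{\hvg_h(x')-\vb_\nu(s')}_1$ exceeds $\gamma'/4$ is at most some tunable $\delta$. Consequently the cloud $\cZ$ obtained from $\Ncluster$ fresh trajectories concentrates, for each $s'\in\states_h$, around $\vb_\nu(s')$ with only a $\delta$-fraction of outliers. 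Choosing $\tau$ strictly between the typical $L_1$ approximation error and $\gamma'/2$, the clustering routine (Algorithm~\ref{algo:find_representation}) produces exactly $\card{\states_h}$ clusters, each corresponding to one true state; this defines $\hatset_h$, the embedding $\embh$, and the bijection $\alpha_h:\hatset_h\to\states_h$ via the obvious cluster-to-state correspondence. The decoding rule $\hatf_h(x')=\argmin_{\hats}\Norm{\embh(\hats)-\hvg_h(x')}_1$ then returns $\alpha_h^{-1}(s')$ whenever $\hvg_h(x')$ lies within $\gamma'/4$ of $\vb_\nu(s')$, yielding the first condition with appropriate $\epsdec$.

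Given the bijection $\alpha_h$ and the decoding accuracy just established, the remaining two conditions follow by fairly standard arguments. For $\hp$, each sample in $\Dp$ is a draw of $(\hf_{h-1}(x_{h-1}),a_{h-1},\hf_h(x_h))$; conditional on both decoders being correct (failure probability at most $2\epsdec$), this is an honest sample from $p(\cdot\given\alpha_{h-1}(\hats),a)$, so a Hoeffding bound with a union bound over $(\hats,a,\hats')$ yields $\epsp=O\bigParens{\sqrt{MK\log(MK)/\Np}+\epsdec}$. For the cover property, the dynamic-programming subroutine (Algorithm~\ref{algo:dynamic}) computes $\psi_{\hats'}$ maximizing the reaching probability of $\hats'$ in the surrogate MDP defined by $\hp$; a standard simulation-lemma argument between $\hp$ and the true transitions (both viewed on $\states_{[h]}$ via $\alpha$) shows $\bigAbs{\prob^{\pi_{\hats'}}(s')-\prob_{\hp}^{\psi_{\hats'}}(\hats')}\le H(\epsp+\epsdec)$, and applying the same bound to $\pi_{s'}^\ast$ (decoded to a surrogate policy via $\alpha^{-1}$) combined with optimality of $\psi_{\hats'}$ in $\hp$ yields the coverage bound with $\epsilon=O(H(\epsp+\epsdec))$.

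The main obstacle is the clustering step together with the choice of $\tau$: one must simultaneously guarantee enough approximation so that most observations are close to their backward-probability vector, and enough separation between distinct such vectors under the \emph{induced} distribution $\nu$ rather than the uniform distribution of Assumption~\ref{asmp:identifiability}. This is exactly where the appendix's extension of $\gamma$-separability to non-uniform $\nu$ becomes essential, and where the coverage parameter $\epsilon$ of $\Pi_{h-1}$ feeds into the induction: $\epsilon$ must be kept small enough relative to $\mu_{\min}$ that $\nu$ places nontrivial mass on every $(s,a)$ and the effective margin $\gamma'$ stays bounded away from zero, so that error parameters do not blow up geometrically across the $H$ levels.
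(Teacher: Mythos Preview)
Your proposal is correct and follows essentially the same route as the paper: induction on $h$, regression toward backward-probability vectors, separability under the induced (non-uniform) distribution via Lemma~\ref{lem:roll_in_margin}, clustering to recover $\hatset_h$ and $\alpha_h$, empirical transition estimation, and a simulation-lemma argument (the paper's Lemma~\ref{lem:error_prop} combined with Lemma~\ref{lem:diff_true_learned}) for coverage.

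Two minor differences worth noting. First, the paper takes as the population regression target the backward probability $\hvb_{\nu_h}$ indexed by the \emph{estimated} labels $\hats$ (since those are what appear in $\Dexp$), and then proves $\hvb_{\nu_h}$ is close to $\vb_{\nu_h}$ and hence separated; you instead take $\vb_\nu$ over true states as the target and absorb the mislabeling as an $O(\epsdec)$ perturbation. These are dual framings of the same bound. Second, your clustering analysis tolerates ``a $\delta$-fraction of outliers,'' but Algorithm~\ref{algo:find_representation} is not robust to outliers: a single far-off point can become its own spurious cluster. The paper handles this not by robustness but by forcing $\epsdec\le\delta/(100H\Ncluster)$ so that with high probability \emph{every} point in $\cZ$ satisfies the $\ell_1$ closeness bound (Lemma~\ref{lem:cluster_sample_complexity}); this coupling between $\epsdec$ and $\Ncluster$ is the one constraint your sketch glosses over. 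Finally, the paper obtains a fast $1/\Nexp$ rate for the regression via a Bernstein argument (Theorem~\ref{thm:sample_complexity_regression}) rather than the $1/\sqrt{\Nexp}$ you invoke, but this affects only constants in the sample complexity, not the structure of the proof.
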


Algorithm~\ref{algo:stochastic} constructs $\hatset_h$, $\hf_h$, $\hp$ and $\Pi_h$ level by level.
Given these objects up to level $h-1$, the construction for the next level~$h$ proceeds in the following three steps, annotated with the lines in Algorithm~\ref{algo:stochastic} where they appear:

\textbf{(1) Regression step: learn $\hvg_h$} (lines 7--9). We collect a dataset of trajectories by repeatedly executing a specific policy mixture $\etah$. We use $\hf_{h-1}$ to identify $\hats_{h-1}{=}\hf_{h-1}(x_{h-1})$ on each trajectory, obtaining samples $(\hats_{h-1},a_{h-1},x_h)$ from $\tnu$ induced by $\etah$.
The context embedding $\hvg_h$ is then obtained by solving the empirical version of~\eqref{eqn:pop_risk_minimization}.\looseness=-1

Our specific choice of $\etah$ ensures that each state $s_{h-1}$ is reached with probability at least $(\mu_{\min}-\epsilon)/M$, which is bounded away from zero if $\epsilon$ is sufficiently small. The uniform choice of actions then guarantees that each state on the next level is also reached with sufficiently large probability.

\textbf{(2) Clustering step: learn $\embh$ and $\hf_h$} (lines 10--12). Thanks to Theorem~\ref{thm:why_pop_risk_minimizer}, we expect that $\hvg_h(x')\approx\vg_h(x')=\vb_\nu(s')$ for the distribution $\nu(\hats_{h-1},a_{h-1})$ induced by $\etah$.\footnote{%
  Theorem~\ref{thm:why_pop_risk_minimizer} uses
  distributions $\nu$ and $\tnu$ over true states $s_{h-1}$, but its analog also holds for distributions over $\hats_{h-1}$, as long as decoding is approximately correct at the previous level.%
}
Thus, all contexts $x'$ generated by the same latent state $s'$ have embedding vectors $\hvg_h(x')$ close to each other and to $\vb_\nu(s')$. Thanks to separability,
we can therefore use clustering to identify all contexts generated by the same latent state, and this procedure is sample-efficient since the embeddings are low-dimensional vectors. Each cluster corresponds to some latent state $s'$ and any vector $\hvg_h(x')$ from that cluster can be used to define the state embedding $\smash{\embh(s')}$. The decoding function $\smash{\hf_h}$ is defined to map any context $x'$ to the state $s'$ whose embedding $\smash{\embh(s')}$ is the closest to $\hvg_h(x')$.

\textbf{(3) Dynamic programming: construct $\Pi_h$} (lines 13--19). Finally, with the ability to identify states at level $h$ via $\hf_h$, we can use collected trajectories to learn an approximate transition model $\hp(\hats'\given\hats,a)$ up to level $h$. This allows us to use dynamic programming to find policies that (approximately) optimize the probability of reaching any specific state $s'\in\states_h$. The dynamic programming finds policies $\psi_{\hats'}$ that act by directly observing decoded latent states. The policies $\pi_{\hats'}$ are obtained by composing $\psi_{\hats'}$ with the decoding functions $\smash{\{\hf_\ell\}_{\ell\in[h-1]}}$.\looseness=-1

\begin{algorithm}[t]
	\caption{Clustering to Find Latent-state Embeddings.}
	\label{algo:find_representation}
	\begin{algorithmic}[1]
		\STATE \textbf{Input}: Data points $\cZ=\set{\vz_i}_{i=1}^n$ and threshold $\tau > 0$.
		\STATE \textbf{Output}: Cluster indices $\hatset$ and centers $\embh:\hatset\to\cZ$.
\smallskip
		\STATE Let $\hatset = \emptyset$, $k = 0$ (number of clusters).
		\WHILE{$\cZ\neq \emptyset$}
		\STATE Pick any $\vz\in\cZ$ (a new cluster center).
		\STATE Let $\cZ'=\set{\vz'\in\cZ:\:\norm{\vz-\vz'}_1\le\tau}$.
        \STATE Add cluster: 
               $k\gets k+1$,~~$\hatset\gets\hatset\cup\set{k}$,~~$\embh(k)=\vz$.
        \STATE Remove the newly covered points: $\cZ\gets\cZ\setminus\cZ'$.
		\ENDWHILE
	\end{algorithmic}
\end{algorithm}

The next theorem guarantees that with a polynomial number of samples, Algorithm~\ref{algo:stochastic} finds a small $\epsilon$--policy cover.\footnote{%
The $\tO(\cdot)$, $\tOmega(\cdot)$, and $\tTheta(\cdot)$ notation suppresses factors that are polynomial in $\log M$, $\log K$, $\log H$ and $\log(1/\delta)$.}
\begin{thm}[Sample Complexity of Algorithm~\ref{algo:stochastic}]\label{thm:stochastic_pomdp_sample_complexity}
\renewcommand{\thefootnote}{\added{4.5}}
\added{Fix any $\delta>0$ and any $\epsilon = O\BigParens{\min\BigParens{\frac{\mu_{\min}^3\gamma}{M^4K^3H}, \frac{\delta\mu_{\min}}{MKH}}}$.}\footnote[1]{\added{The ICML 2019 version omitted the second constraint on $\epsilon$. We thank Yonathan Efroni for calling this to our attention.}}
\renewcommand{\thefootnote}{\arabig{footnote}}
Set $\Nexp=\tOmega\BigParens{\frac{M^4K^4H\log\,\card{\gclass}}{\epsilon\mu_{\min}^3\gamma^2}}$,
$\Ncluster=\tTheta\BigParens{\frac{MK}{\mu_{\min}}}$,
$\Np = \tOmega\BigParens{\frac{M^2KH^2}{\mu_{\min}\epsilon^2}}$,
$\tau = \frac{\gamma}{30MK}$.
Then with probability at least $1-\delta$,
Algorithm~\ref{algo:stochastic} returns an $\epsilon$--policy cover of $\states$, with size at most $MH$.
\end{thm}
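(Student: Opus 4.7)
The plan is to prove the theorem by induction on $h \in [H+1]$, establishing the three invariants of Claim~\ref{claim:stoc} simultaneously along with a suitable bijection $\alpha_h : \hatset_h \to \states_h$. The base case $h=1$ is immediate: $\hatset_1 = \{s_1\}$, $\hf_1$ is constant, and $\Pi_1 = \{\pi_0\}$ trivially covers $\states_1 = \{s_1\}$. A union bound over the $H+1$ levels, each subject to $O(1)$ high-probability events with per-event failure budget $O(\delta/H)$, yields overall success probability $1-\delta$. The size bound $|\Pi| \le M(H+1)$ follows because each level contributes at most $|\hatset_h| \le M$ policies.

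For the inductive step, assume the three invariants hold at level $h-1$. First, the mixture $\etah = \unif(\Pi_{h-1}) \odot \unif(\actions)$ induces a marginal $\nu$ on $(s,a) \in \states_{h-1}\times\actions$ with mass at least $(\mu_{\min}-\epsilon)/(MK)$ on every pair, using the coverage invariant at level $h-1$. The level-$(h-1)$ decoding accuracy implies that the labels $\hats_{h-1} = \hf_{h-1}(x_{h-1})$ in $\Dexp$ agree with the true $s_{h-1}$ with failure rate $O(\epsdec)$, so the law of $\Dexp$ is within total variation $O(\epsdec)$ of $\tnu$. A standard finite-class generalization bound for vector-valued least squares then yields, for the ERM $\hvg_h$, an excess squared-error of order $\log|\gclass|/\Nexp$ against the population minimizer; by Theorem~\ref{thm:why_pop_risk_minimizer} this minimizer equals $\vb_\nu(s')$ on each block $\contexts_{s'}$. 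Dividing the aggregate error by the per-state marginal mass $\Omega(\mu_{\min}/M)$ and applying Markov's inequality converts this into a per-state bound: for every $s'\in\states_h$, all but a small fraction of contexts $x'\in\contexts_{s'}$ satisfy $\|\hvg_h(x') - \vb_\nu(s')\|_1 \le \tau/2$.

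Next, $\gamma$-separability (Assumption~\ref{asmp:identifiability}) ensures that the target vectors $\vb_\nu(s')$ for distinct $s'$ are $\ell_1$-separated by $\gamma \gg 2\tau$. With $\Ncluster = \tTheta(MK/\mu_{\min})$ samples, each state $s'$ contributes at least one ``good'' embedding via a coupon-collector argument driven by the reach-probability lower bound. The greedy clustering of Algorithm~\ref{algo:find_representation} with threshold $\tau = \gamma/(30MK)$ then picks exactly one representative per true state, yielding the bijection $\alpha_h$, and defining $\hf_h$ via nearest cluster center makes every ``good'' context decode correctly, establishing the decoding invariant with parameter $\epsdec$. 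The transition-estimation invariant follows from a Bernstein/Hoeffding bound: since each $(s,a)$ is reached with probability $\Omega(\mu_{\min}/(MK))$ under $\etah$, and decoding on both endpoints is accurate, $\Np = \tOmega(M^2 K H^2/(\mu_{\min}\epsilon^2))$ samples suffice to make $\hp(\cdot\given\hats,a)$ within $\ell_1$ error $\epsp = O(\epsilon/H)$ of $p(\cdot\given s,a)$.

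The final and most delicate step is the $\epsilon$-policy-cover invariant. Dynamic programming on $(\hatset_{[h]},\actions,\hp)$ returns $\psi_{\hats'}$ that is optimal for reaching $\hats'$ in the learned model; composition with $\{\hf_\ell\}_{\ell<h}$ yields $\pi_{\hats'}$. A simulation-lemma-style calculation shows that $\prob^{\pi_{\hats'}}(s')$ in the true BMDP deviates from the reach probability in the learned model by at most $O(H(\epsdec+\epsp))$, with error sources being per-step decoding slippage at each of the $h-1$ prior levels and transition misestimation at the current level; combined with the within-model optimality of $\psi_{\hats'}$ applied to the analog of $\pi_{s'}^*$ (mapped through $\alpha_h$), we obtain $\prob^{\pi_{\hats'}}(s')\ge\mu(s')-O(H(\epsdec+\epsp))$. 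The main obstacle is balancing parameters so the induction closes: $\epsdec$ must shrink fast enough in $M,K,H$ to avoid amplification across the $H$ levels, because later-level regression errors are measured relative to reach probabilities that themselves depend on earlier-level coverage. This forces the conversion of regression error into per-state separation to absorb factors of $\mu_{\min}/M$ and $\gamma/(MK)$, which drives the $\mu_{\min}^3\gamma^2$ and $M^4K^4H$ factors in $\Nexp$. Plugging $\epsilon = O(\mu_{\min}^3\gamma/(M^4K^3H))$ ensures all the induced parameter relations $\epsdec\ll\epsilon/(MH),\ \epsp\le \epsilon/H,\ \tau\le\gamma/(2MK)$ simultaneously hold, and the induction closes with the stated sample sizes.
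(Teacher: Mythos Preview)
Your overall plan—induction on $h$, establishing the three invariants of Claim~\ref{claim:stoc} via (i) an ERM bound plus Markov to get per-state $\ell_1$ closeness of $\hvg_h$ to backward vectors, (ii) clustering to build the bijection, (iii) concentration for $\hp$, and (iv) a simulation-lemma argument for coverage—matches the paper's structure closely.

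There is one genuine gap. You invoke Assumption~\ref{asmp:identifiability} to assert that the targets $\vb_\nu(s')$ are $\ell_1$-separated by $\gamma$, but that assumption only guarantees margin $\gamma$ when the roll-in is \emph{uniform} over $\states_{h-1}\times\actions$. Here $\nu$ is the marginal induced by $\etah$, which you yourself note is far from uniform—you only have $\nu(s,a)\ge(\mu_{\min}-\epsilon)/(MK)$. A margin-transfer step is required: the paper proves (Lemma~\ref{lem:roll_in_margin}) that if $\nu(s,a)\ge\tau_0$ everywhere then $\|\vb_\nu(s_1')-\vb_\nu(s_2')\|_1\ge\tau_0\gamma/2$, so under $\etah$ the effective separation degrades to $\Omega(\mu_{\min}\gamma/(MK))$, not $\gamma$. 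This degraded margin is precisely why $\tau$ must be $O(\gamma/(MK))$ and why the per-context regression accuracy must be driven down to $O(\mu_{\min}\gamma/(MK))$ rather than $O(\gamma)$; it is one of the concrete sources of the $\mu_{\min}^3$ and $M^4K^4$ factors in $\Nexp$ that you later attribute only to generic ``amplification across levels.'' Without this lemma your clustering step does not go through as stated.

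A secondary remark: your TV-distance argument (the law of $\Dexp$ is $O(\epsdec)$-close to $\tnu$, so Theorem~\ref{thm:why_pop_risk_minimizer} applies and the ERM approximates $\vb_\nu$) can be made to work, but the paper takes a cleaner route. It defines the surrogate backward vector $\hvb_{\nu_h}$ over \emph{learned} states (Equation~\eqref{eqn:back_stoch}), observes that this is the \emph{exact} population minimizer of the regression the algorithm actually solves, applies the ERM bound against it, and then separately bounds $\|\hvb_{\nu_h}(s')-\vb_{\nu_h}(s')\|_1$ in terms of $\epsdec$ (Lemma~\ref{lem:identification_stoch}). This decouples the statistical ERM analysis from the label-noise analysis and makes the dependence on $\epsdec$ explicit.
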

%
In addition to dependence on the usual parameters like $M, K, H$ and $1/\epsilon$, our sample complexity also scales inversely with the separability margin $\gamma$ and the worst-case reaching probability $\mu_{\min}$. While the exact dependence on these parameters is potentially improvable, Appendix~\ref{sec:justification_pomdp} suggest that some inverse dependence is unavoidable for our approach. Compared with~\citet{azizzadenesheli2016pomdp}, there is no explicit dependence on $|\mathcal{X}|$, although they make spectral assumptions instead of the explicit block structure.

\begin{algorithm}[t]
	\caption{Dynamic Programming for Reaching a State}
	\label{algo:dynamic}
	\begin{algorithmic}[1]
		\STATE \textbf{Input}:~target state $\hats^* \in \hatset_h$,\\
               ~\hphantom{\textbf{Input}:}%
               transition probabilities $\hp(\hats'\given\hats,a)$\\
               ~\hphantom{\textbf{Input}:}~~~for all $\hats\in\hatset_\ell$, $a\in\actions$, $\hats'\in\hatset_{\ell+1}$, $\ell\in[h-1]$.
		\STATE \textbf{Output}: policy $\psi: \hatset_{[h-1]}\to\actions$ maximizing $\hat{\prob}^\psi(\hats^*)$.
\smallskip
		\STATE Let $v(\hats^*)=1$ and let $v(\hats)=0$ for all other $\smash{\hats\in\hatset_h}$.
		\FOR{$\ell = h-1,h-2,\ldots,1$}
		\FOR{$\hats \in \hatset_\ell$}
\vspace{-2pt}
		\STATE $\psi(\hats) = \max_{a \in \actions}\Bracks{\sum_{\hats'\in\hatset_{\ell+1}} v(\hats')\,\hp(\hats'\given \hats,a)}$.
        \STATE $v(\hats) = \sum_{\hats'\in\states_{\ell+1}} v(\hats')\,\hp(\hats'\given \hats,a=\psi(\hats))$.
		\ENDFOR
		\ENDFOR
	\end{algorithmic}
\end{algorithm}


\subsection{Deterministic BMDPs}
\label{sec:deterministic}

As a special case of general BMDPs, many prior works study the case of deterministic transitions, that is, $p(s' \given s,a) = 1$ for a unique state $s'$ for each $s,a$. Also, many simulation-based empirical RL benchmarks exhibit this property. We refer to these BMDPs as deterministic, but note that only the transitions $p$ are deterministic, not the emissions $q$. In this special case, the algorithm and guarantees of the previous section can be improved, and we present this specialization here, both for a direct comparison with prior work and potential usability in deterministic environments.

To start, note that $\mu_{\min}=1$ and $\gamma=2$ in any deterministic BDMP. The former holds as any reachable state is reached with probability one. For the latter, if $(s,a)$ transitions to $s'$, then $(s,a)$ cannot appear in the backward distribution of any other state $s''$. Consequently, the  backward probabilities for distinct states $s'\in\states_h$ must have disjoint support over $(s,a)\in\states_{h-1}\times\actions$, and thus their $\ell_1$ distance is exactly two.

Deterministic transitions allow us to obtain the policy cover with $\epsilon=0$; that is, we learn policies that are guaranteed to reach any given state $s$ with probability one. Moreover, it suffices to consider policies with simple structure: those that execute a fixed sequence of actions. Also, since we have access to policies reaching states in the prior level with probability one, there is no need for a decoding function $\hf_{h-1}$ when learning states and context embeddings on level $h$. The final, more technical implication of determinism (which we explain below) is that it allows us to boost the accuracy of the context embedding in the clustering step, leading to improved sample complexity.

\begin{algorithm}[tb]
	\caption{\textsc{PCID} for Deterministic BMDPs}
	\label{algo:deterministic}
	\begin{algorithmic}[1]
		\STATE \textbf{Input}:\\
        ~~~$\Nexp$: sample size for learning context embeddings\\
        ~~~$\Nboost$: sample size for boosting embedding accuracy\\
        ~~~$\tau > 0$: a clustering threshold for learning latent states\\
\smallskip
		\STATE \textbf{Output}: policy cover $\Pi=\Pi_1\cup\cdots\cup\Pi_{H+1}$
\smallskip
        \STATE Let $\hatset_1\!=\!\set{s_1}$.
               Let $\Pi_1\!=\!\set{\pi_0}$ for the $0$-step policy $\pi_0$.
\smallskip
		\FOR{$h=2,\dotsc,H+1$}
		\STATE Let $\eta_h = \unif(\Pi_{h-1}) \odot \unif(\actions)$
\smallskip
        \STATE Execute $\etah$ for $\Nexp$ times.
               $\smash{\Dexp \!=\! \{\hats_{h-1}^i,a_{h-1}^i,x_h^i\}_{i=1}^{\Nexp}}$\\
               ~~~where $\hats_{h-1}$ is the index of $\pi_{\hats_{h-1}}$ sampled by $\eta_h$.
\smallskip
		\STATE Learn $\hvg_h$ by calling the ERM oracle on input $\Dexp$:\\
               ~~~$
                   \hvg_h = \argmin_{\vg \in \gclass} \sum_{(\hats,a,x')\in\Dexp} \norm{\vg(x')-\ve_{(\hats,a)}}^2$.
\vspace{-8pt}
        \STATE Initialize $\cZ=\emptyset$ (dataset for learning latent states).
 		\FOR{$(\pi,a) \in \Pi_{h-1} \times \actions$}
        \STATE Execute $\pi\odot a$ for $\Nboost$ times. $\Dboost = \{x_h^i\}_{i=1}^{\Nboost}.$
		\STATE Set $\vz_{\pi \odot a} \!=\! \sum_{x\in\Dboost} \hvg_{h}(x)/\card{\Dboost}$,
               add $\vz_{\pi \odot a}$ to $\cZ$.
		\ENDFOR
		\STATE Learn $\hatset_h$ and the state embedding map $\smash{\embh_h:\hatset_h\to\cZ}$\\
               ~~~by clustering $\cZ$ with threshold $\tau$ (see Algorithm~\ref{algo:find_representation}).
\vspace{-10pt}
        \STATE
        Set $\Pi_h\!=\!\smash{(\pi_{\hats})_{\hats\in\hatset_t}}$
        where $\pi_{\hats}\!=\!\pi\odot a$ if $\embh_h(\hats)\!=\!\vz_{\pi\odot a}$.
		\ENDFOR
	\end{algorithmic}
\end{algorithm}

The details are presented in Algorithm~\ref{algo:deterministic}. At each level $h\in[H+1]$, we construct the following objects:
\begin{itemize*}
\item A set of discovered states $\hatset_h$.
\item A set of $(h-1)$-step policies $\Pi_h=\smash{\set{\pi_{\hats}}_{\hats\in\hatset_h}}$.
\end{itemize*}
We proceed inductively and for each level $h$ prove that the following claim holds with a high probability:
\begin{claim}
\label{claim:det}
There exists a bijection $\alpha_h:\hatset_h\to\states_h$ such that $\pi_{\hats}$ reaches $\alpha_h(\hats)$ with probability one.
\end{claim}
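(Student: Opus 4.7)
The plan is an induction on $h$. The base case $h=1$ is immediate: $\hatset_1 = \set{s_1}$, $\Pi_1 = \set{\pi_0}$, and $\alpha_1(s_1) = s_1$ works trivially. For the inductive step, assume the claim at level $h-1$ and define $T : \hatset_{h-1} \times \actions \to \states_h$ by letting $T(\hats,a)$ be the unique state reached by $\pi_{\hats} \odot a$; this is well-defined by determinism and the inductive hypothesis. Because every state in $\states_h$ is reachable and, by the inductive hypothesis, $\Pi_{h-1}$ covers $\states_{h-1}$ with probability one, $T$ is surjective. The bijection $\alpha_h$ will identify each cluster produced in line 13 with the common $T$-value of the $(\pi,a)$ pairs it contains, and $\pi_{\hats}$ will then be $\pi_{\hats'} \odot a'$ for any representative $(\hats', a')$ of that cluster.

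The population picture is clean. With $\nu = \unif(\hatset_{h-1} \times \actions)$ (viewed via the inductive bijection $\alpha_{h-1}$ as a distribution on $\states_{h-1} \times \actions$) and its induced $\tnu$ on $(s,a,x')$, Theorem~\ref{thm:why_pop_risk_minimizer} plus Assumption~\ref{asmp:realizability} imply that every minimizer $\vg_h$ of the population risk satisfies $\vg_h(x') = \vb_\nu(T(\hats,a))$ on $x' \in \contexts_{T(\hats,a)}$. Determinism makes $\vb_\nu(s')$ uniformly supported on $T^{-1}(s')$, so distinct $s' \neq s''$ in $\states_h$ give $\norm{\vb_\nu(s') - \vb_\nu(s'')}_1 = 2 = \gamma$. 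What remains is to show that, with high probability, the boosted empirical estimate $\vz_{\pi_{\hats} \odot a}$ is within $\ell_1$-distance $\tau/2$ of $\vb_\nu(T(\hats,a))$ for a fixed $\tau \in (0,1)$.

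This $\ell_1$ approximation proceeds in two steps. First, standard ERM analysis with a union bound over $\gclass$ yields $\expect_{\tnu}\bigBracks{\norm{\hvg_h(x') - \vg_h(x')}^2} \le \epsilon_0$ when $\Nexp$ is polynomial in $\log\card{\gclass}$, $MK$ and $1/\epsilon_0$. Surjectivity of $T$ gives the marginal lower bound $\tnu(s') \ge 1/(\card{\hatset_{h-1}}K) \ge 1/(MK)$, which translates the average bound into a per-state bound $\expect_{x'\sim q(\cdot\mid s')}\bigBracks{\norm{\hvg_h(x')-\vb_\nu(s')}^2} \le MK\epsilon_0$. Jensen together with $\norm{\cdot}_1 \le \sqrt{MK}\norm{\cdot}_2$ then bounds the bias of $\vz_{\pi_{\hats}\odot a}$ (its expectation over fresh boosting data) against $\vb_\nu(T(\hats,a))$ by $MK\sqrt{\epsilon_0}$. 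Second, since each $\hvg_h(x^i) \in \simplex_{MK}$, coordinate-wise Hoeffding with an $MK$-ary union bound gives $\ell_1$-deviation $\tO\bigParens{\sqrt{MK/\Nboost}}$ of $\vz_{\pi_{\hats}\odot a}$ around its expectation. Choosing $\Nexp$ and $\Nboost$ appropriately polynomial in $MK$, $H$, $1/\tau$, $\log(1/\delta)$ and union-bounding over the at most $MK$ pairs per level and over the $H$ levels makes both contributions at most $\tau/4$ with probability at least $1-\delta$.

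Given $\norm{\vz_{\pi\odot a} - \vb_\nu(T(\hats,a))}_1 \le \tau/2$ uniformly, Algorithm~\ref{algo:find_representation} with threshold $\tau$ produces exactly $\card{\states_h}$ clusters: same-target pairs are within $\tau$ of each other, whereas different-target pairs are separated by at least $2 - \tau > \tau$. Setting $\alpha_h(\hats)$ to the common $T$-value of cluster $\hats$ and using that the algorithm assigns $\pi_{\hats} = \pi_{\hats'}\odot a'$ for some $(\hats', a')$ in that cluster, determinism together with the inductive hypothesis yield that $\pi_{\hats}$ reaches $\alpha_h(\hats)$ with probability one, completing the induction. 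The main obstacle is precisely the third paragraph: upgrading the in-expectation $\ell_2$ ERM guarantee to a per-$(\hats,a)$ $\ell_1$ guarantee tight enough for clustering. The boosting step is what enables this upgrade---via the uniform lower bound on $\tnu(s')$ for the bias piece and simplex-valued concentration for the variance piece---after which the $\gamma=2$ separation makes the clustering step essentially automatic.
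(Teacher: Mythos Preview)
Your proposal is correct and follows essentially the same route as the paper's proof (embedded in the proof of Theorem~\ref{thm:sample_complexity_deterministic}): induction on $h$; ERM analysis to bound $\expect_{\tnu}\bigBracks{\norm{\hvg_h(x')-\vb_\nu(s')}_2^2}$; the marginal lower bound $\tnu(s')\ge 1/(MK)$ from surjectivity; Jensen plus $\norm{\cdot}_1\le\sqrt{MK}\,\norm{\cdot}_2$ for the bias of the boosted average; Hoeffding for its deviation; and then the $\gamma=2$ separation to certify that Algorithm~\ref{algo:find_representation} recovers exactly one cluster per latent state. The constants differ (you use $\tau/2$ where the paper uses $\tau/10$), but the structure and the key lemmas are the same.
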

This implies that $\hatset_h$ can be viewed as a latent state space, and $\Pi_h$ is an $\epsilon$--policy cover of $\states_h$ with $\epsilon=0$.

To construct these objects for next level $h$, Algorithm~\ref{algo:deterministic} proceeds in three steps similar to Algorithm~\ref{algo:stochastic} for the stochastic case.  The regression step, that is, learning of $\hvg_h$ (lines 5--7), is identical. The clustering step (lines 8--13) is slightly more complicated. We boost the accuracy of the learned context embedding $\hvg_h$ by repeatedly sampling contexts that are guaranteed to be emitted from the same latent state (because they result from the same sequence of actions), and taking an average. This step allows us to get away with a lower accuracy of $\hvg_h$ compared with Algorithm~\ref{algo:stochastic}. Finally, the third step, learning of $\Pi_h$ (line 14), is substantially simpler.  Since any action sequence reaching a given cluster can be picked as a policy to reach the corresponding latent state, dynamic programming is not needed.

The following theorem characterizes the sample complexity of Algorithm~\ref{algo:deterministic}.
It shows we only need
$\tO\bigParens{M^2K^2H \log\,\card{\gclass}}$
samples to find a policy cover with $\epsilon=0$.
\begin{thm}[Sample Complexity of Algorithm~\ref{algo:deterministic}]
	\label{thm:sample_complexity_deterministic}
Set  $\tau = 0.01$,
$\Nexp = \tOmega(M^2K^2 \log\,\card{\gclass})$
and $\Nboost = \tOmega(MK)$.
Then with probability at least $1-\delta$, Algorithm~\ref{algo:deterministic}
returns an $\epsilon$--policy cover of $\states$, with $\epsilon=0$ and size at most $MH$.
\end{thm}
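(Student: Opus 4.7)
The plan is induction on $h\in[H+1]$ establishing Claim~\ref{claim:det}. The base case $h=1$ is immediate from the initialization: $\hatset_1=\{s_1\}$, $\Pi_1=\{\pi_0\}$, and $\alpha_1$ is the identity. For the inductive step, assume Claim~\ref{claim:det} holds at level $h-1$, so every $\pi_{\hats}\in\Pi_{h-1}$ reaches $\alpha_{h-1}(\hats)$ with probability one. Then $\etah=\unif(\Pi_{h-1})\odot\unif(\actions)$ induces exactly the uniform marginal $\nu=U(\states_{h-1}\times\actions)$ over $(s_{h-1},a_{h-1})$, so Theorem~\ref{thm:why_pop_risk_minimizer} combined with Assumption~\ref{asmp:realizability} identifies the population least-squares minimizer: $\vg_h(x')=\vb_\nu(s')$ for all $x'\in\contexts_{s'}$. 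As noted at the start of Section~\ref{sec:deterministic}, determinism forces the backward vectors $\vb_\nu(s')$ for distinct $s'\in\states_h$ to have disjoint supports, so $\bigNorm{\vb_\nu(s')-\vb_\nu(s'')}_1=2$ whenever $s'\neq s''$.

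For the regression step (lines 5--7), I would invoke the standard finite-class generalization bound for vector-valued least squares against the realizable target $\vg_h$: with probability at least $1-\delta/[3(H+1)]$, $\E_{x'\sim\tnu}\bigNorm{\hvg_h(x')-\vg_h(x')}_2^2 = \tO\bigParens{\log\card{\gclass}/\Nexp}$. The nontrivial propagation converts this into a per-state $\ell_1$ bound. Uniformity of $\nu$ together with the inductive hypothesis guarantees that $\tnu$ puts mass at least $1/(MK)$ on every reachable $s'\in\states_h$; changing measure gives $\E_{x'\sim q(\cdot\given s')}\bigNorm{\hvg_h(x')-\vb_\nu(s')}_2^2 \le MK\cdot\tO(\log\card{\gclass}/\Nexp)$, and Cauchy--Schwarz yields an expected $\ell_1$ error of order $\tO\bigParens{\sqrt{M^2K^2\log\card{\gclass}/\Nexp}}$. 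For $\Nexp=\tOmega(M^2K^2\log\card{\gclass})$ with a sufficient hidden constant, this expectation is at most $\tau/8$ on each of the at most $MK$ states.

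For the clustering step (lines 8--13), determinism plus the inductive hypothesis imply that each $\pi\odot a\in\Pi_{h-1}\times\actions$ reaches a single latent state $s^\star(\pi\odot a)\in\states_h$, so the $\Nboost$ contexts averaged into $\vz_{\pi\odot a}$ are i.i.d.\ draws from $q(\cdot\given s^\star(\pi\odot a))$. Since each $\hvg_h(x)\in\simplex_{MK}$, a coordinate-wise Bernstein inequality summed over the $MK$ coordinates gives, with probability at least $1-\delta/[3(H+1)]$ after a union bound over the $MK$ pairs, $\bigNorm{\vz_{\pi\odot a}-\E_{x'\sim q(\cdot\given s^\star(\pi\odot a))}\hvg_h(x')}_1 = \tO\bigParens{\sqrt{MK/\Nboost}}$, which is at most $\tau/8$ when $\Nboost=\tOmega(MK)$. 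Combining this with the regression bound via Jensen's inequality yields $\bigNorm{\vz_{\pi\odot a}-\vb_\nu(s^\star(\pi\odot a))}_1\le\tau/4$ uniformly. Since $\bigNorm{\vb_\nu(s')-\vb_\nu(s'')}_1=2\gg\tau$ for $s'\neq s''$, the greedy procedure of Algorithm~\ref{algo:find_representation} at threshold $\tau=0.01$ produces exactly $\card{\states_h}$ clusters, one per latent state, defining the bijection $\alpha_h$; the recorded $\pi_{\hats}=\pi\odot a$ then reaches $\alpha_h(\hats)$ with probability one, closing the induction.

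A final union bound over the $H+1$ levels preserves total failure probability $\delta$, while $\card{\Pi}=\sum_h\card{\hatset_h}\le M(H+1)$ gives the size bound. The principal technical obstacle is precisely the regression-to-$\ell_1$ propagation that drives the $M^2K^2$ factor in $\Nexp$: one $MK$ arises from the density-ratio change of measure between $\tnu$ and $q(\cdot\given s')$, and the other from converting the $\ell_2$ MSE guarantee of ERM into an $\ell_1$ bound compatible with the separability margin $\gamma=2$. The boosting concentration, determinism of emissions from any fixed $\pi\odot a$, and correctness of greedy clustering at a well-separated scale are comparatively routine once this accuracy is in hand.
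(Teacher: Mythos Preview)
Your proposal is correct and follows essentially the same route as the paper's proof: induction on levels establishing Claim~\ref{claim:det}, the fast-rate finite-class regression bound, a factor-$MK$ change of measure to get per-state control, the $\ell_2$-to-$\ell_1$ conversion contributing the second $MK$, concentration for the boosting averages, and the triangle-inequality clustering argument exploiting $\gamma=2$. The only cosmetic differences are the order in which Jensen and the $\ell_2$-to-$\ell_1$ step are applied (the paper pulls the expectation inside the norm first, you bound $\E\|\cdot\|_1$ first and then invoke Jensen), and your use of coordinate-wise Bernstein for the boosting step where the paper cites ``standard Hoeffding''; your phrasing is in fact the more accurate justification for the $\sqrt{MK/\Nboost}$ rate, since it is the simplex constraint $\sum_j\mu_j=1$ combined with $\mathrm{Var}(Y_j)\le\mu_j$ and Cauchy--Schwarz that yields $\sum_j\sqrt{\mu_j/\Nboost}\le\sqrt{MK/\Nboost}$.
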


In Appendix~\ref{app:rewards}, we discuss how to use policy cover to optimize a reward. For instance, if the reward depends on the latent state, the policy cover enables us to reach each state-action pair
and collect $O(1/\epsilon^2)$ samples to estimate this pair's expected reward up to $\epsilon$ accuracy.
Thus, using $O(MKH/\epsilon^2)$ samples in addition to those needed by Algorithm~\ref{algo:deterministic}, we can find the trajectory with the largest expected reward within an $H\epsilon$ error. To summarize:
\begin{corollary}
\label{cor:pac_deterministic}
With probability at least $1-\delta$, Algorithm~\ref{algo:deterministic} can be used to find an $\epsilon$-suboptimal policy using at most
$\tO\bigParens{M^2K^2H\log\,\card{\gclass} + MKH^3/\epsilon^2}$
trajectories from a deterministic BMDP.
\end{corollary}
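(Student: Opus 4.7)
The plan is to combine the policy cover from Theorem~\ref{thm:sample_complexity_deterministic} with on-policy reward estimation at every discovered state--action pair and a final dynamic-programming step over the induced tabular MDP. As a preliminary, I would invoke Theorem~\ref{thm:sample_complexity_deterministic} with failure probability $\delta/2$, producing $\Pi=\Pi_1\cup\cdots\cup\Pi_{H+1}$ of size at most $MH$, together with the cluster centers $\embh_h$, using $\tO(M^2K^2H\log\card{\gclass})$ trajectories. I would then lift those centers to a decoding function $\hatf_h(x) = \argmin_{\hats\in\hatset_h}\bigNorm{\embh_h(\hats) - \hvg_h(x)}_1$, exactly as in line~12 of Algorithm~\ref{algo:stochastic}, so that on the event of Theorem~\ref{thm:sample_complexity_deterministic} each $\hatf_h$ correctly decodes every context reachable by $\Pi$ with probability one.

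Next, for each triple $(\hats,a)$ with $\hats\in\hatset_h$ and $a\in\actions$ over all levels $h\in[H]$, I would roll out $\pi_{\hats}\odot a$ for $n=O\bigParens{(H/\epsilon)^2\log(MKH/\delta)}$ episodes, record the empirical mean reward $\hat r(\hats,a)$, and also record the successor cluster $T(\hats,a)=\hatf_{h+1}(x_{h+1})\in\hatset_{h+1}$ observed on the first rollout (which is deterministic because latent transitions are). Hoeffding together with a union bound over the at most $MKH$ pairs ensures that $\Abs{\hat r(\hats,a)-r(\alpha_h(\hats),a)}\le \epsilon/H$ for every pair with probability at least $1-\delta/2$. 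The additional sample cost of this phase is $MKH\cdot n=\tO(MKH^3/\epsilon^2)$.

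With the deterministic transition table $T$ and the reward table $\hat r$ in hand, I would run the dynamic program of Algorithm~\ref{algo:dynamic} (using $\hat r$ as terminal values at each level rather than an indicator of a target state) to extract the greedy policy $\hat\psi:\hatset_{[H]}\to\actions$ in the tabular MDP, and then set $\hat\pi(x_h)=\hat\psi(\hatf_h(x_h))$. A standard telescoping / simulation argument then shows that the value of $\hat\pi$ in the true BMDP differs from the optimum by at most $H\cdot(\epsilon/H)=\epsilon$, since transitions are matched exactly along the rollout and each of the $H$ reward terms contributes at most $\epsilon/H$ error. Combining both phases, the total trajectory count is $\tO(M^2K^2H\log\card{\gclass} + MKH^3/\epsilon^2)$, as claimed, and a union bound on the two failure events gives overall probability at least $1-\delta$.

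The main obstacle I anticipate is in the second step, where I need $T(\hats,a)$ to coincide with the true latent transition under the bijection $\alpha_h$. Theorem~\ref{thm:sample_complexity_deterministic} only controls $\hatf_{h+1}$ on contexts emitted from states that the regression distribution $\tnu$ under $\eta_{h+1}$ actually samples with nontrivial mass; I would have to check that every successor state reachable as $s\xrightarrow{a}s'$ with $s=\alpha_h(\hats)$ lies in the support of $\tnu$, which follows from the inductive coverage guarantee $\pi_{\hats}$ reaches $s$ with probability one combined with the uniform action randomization in $\eta_{h+1}$. Once this bookkeeping is in place, the remainder is standard tabular MDP analysis with exact transitions and $(\epsilon/H)$-accurate rewards.
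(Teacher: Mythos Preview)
Your high-level plan---use the policy cover to visit every $(\hats,a)$, estimate rewards to accuracy $\epsilon/H$ by Hoeffding, and then plan in the recovered tabular model---is exactly the paper's approach, and your sample-count arithmetic is correct.

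There is, however, a real gap in the step where you recover transitions via the per-context decoder $\hatf_{h+1}(x)=\argmin_{\hats}\|\embh_{h+1}(\hats)-\hvg_{h+1}(x)\|_1$ and then execute a closed-loop policy $\hat\pi(x_h)=\hat\psi(\hatf_h(x_h))$. Theorem~\ref{thm:sample_complexity_deterministic} does not give you pointwise control of $\hvg_{h}$: its proof only establishes the \emph{expectation} bound $\mathbb{E}_{x'\sim q(\cdot\mid s')}\|\hvg_h(x')-\vb_U(s')\|^2\le KM\epsilon$, and then uses the boosting averages $\vz_{\pi\odot a}$ (line~11 of Algorithm~\ref{algo:deterministic}) together with Jensen to get the cluster centers right. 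Nothing rules out that on a positive-measure set of contexts $\hvg_{h}(x')$ lands closer to the wrong center, so your claim that ``each $\hatf_h$ correctly decodes every context \ldots\ with probability one'' is not supported. Your diagnosed obstacle (support of $\tnu$) is not the issue; the issue is pointwise versus expected accuracy of $\hvg_h$.

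The fix is to avoid per-context decoding altogether, which is what the paper (implicitly) does. The clustering step of Algorithm~\ref{algo:deterministic} already assigns every boosted point $\vz_{\pi_{\hats}\odot a}$ to a cluster in $\hatset_{h+1}$; under the event of Theorem~\ref{thm:sample_complexity_deterministic} that assignment \emph{is} the true latent transition $T(\hats,a)$. With the exact transition table and the $\epsilon/H$-accurate reward table in hand, plan over $\hatset$ and output the resulting open-loop action sequence $(a_1,\dots,a_H)$---deterministic dynamics make closed-loop execution unnecessary. The standard two-sided comparison (your estimated-optimal sequence vs.\ the true-optimal sequence, each losing at most $H\cdot(\epsilon/H)$) then yields $\epsilon$-suboptimality, exactly as in the paper's proof.
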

This corollary (proved in Appendix~\ref{sec:proofs_det} as Corollary~\ref{cor:pac_deterministic_app}) significantly improves over the prior bound $O(M^3H^8K/\epsilon^5)$
obtained by~\citet{dann2018polynomial}, although their function-class complexity term is not directly comparable to ours, as their work approximates optimal value functions and policies, while we approximate ideal decoding functions.



\section{Experiments}
\label{sec:exp}
\begin{figure*}
\begin{center}
\includegraphics[width=\textwidth]{./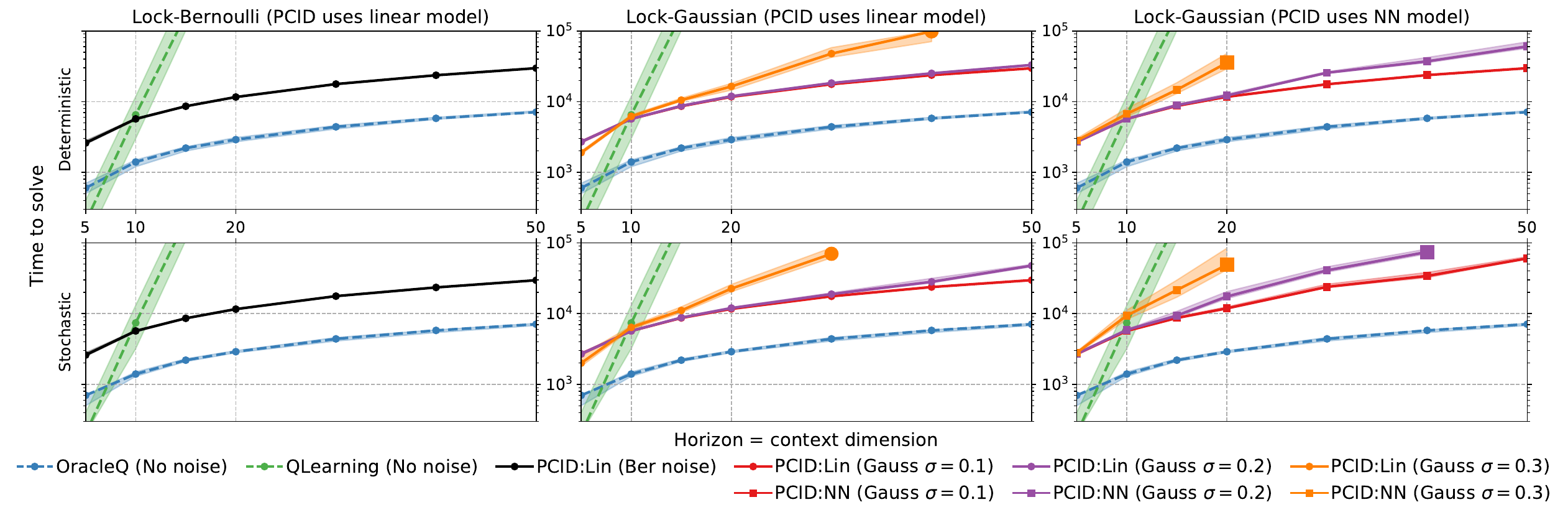}
\end{center}
\vspace{-13pt}%
\footnotesize{\emph{Note: Larger markers mean that the next point is off the plot.}}%
\vspace{-4pt}
\caption{Time-to-solve against problem difficulty for the combination lock
  environment with two observation processes and two function approximation
  classes. Left: \emph{Lock-Bernoulli}
  with linear functions. Center: \emph{Lock-Gaussian} with linear functions.
  Right: \emph{Lock-Gaussian} with neural networks. Top row:
  deterministic latent transitions. Bottom row: stochastic transitions
  with switching probability 0.1.
\oracleq and \qlearning are cheating and operate directly
  on latent states.}
\label{fig:lock_results}
\end{figure*}


We perform an empirical evaluation of our decoding-based algorithms in
six challenging RL environments, with two choices of the function class
$\mathcal{G}$. We compare our algorithm, which operates directly on
rich observations, against two tabular algorithms,
which operate on the latent state: a sanity-check baseline and a near-optimal skyline.
Some of the environments meet the BMDP assumptions and some do not; the former validate our theoretical results, while the latter demonstrate our algorithm's robustness.
Our code is available at \textit{https://github.com/Microsoft/StateDecoding}.

\textbf{The environments.}  All environments share the same latent
structure, and are a form of a ``combination lock," with $H$ levels, 3
states per level, and 4 actions.
Non-zero reward is only achievable from states $s_{1,h}$ and $s_{2,h}$.
From $s_{1,h}$ and $s_{2,h}$ one action leads with probability $1-\alpha$
to $s_{1,h+1}$ and with probability $\alpha$ to $s_{2,h+1}$, another has the flipped behavior, and the
remaining two lead to $s_{3,h+1}$. All actions from $s_{3,h}$
lead to $s_{3,h+1}$. The ``good" actions are randomly assigned
for every state. From $s_{1,H}$ and $s_{2,H}$, two actions
receive $\textrm{Ber}(1/2)$ reward; all others provide zero
reward. The start state is $s_{1,1}$. We consider deterministic variant ($\alpha=0$)
and stochastic variant ($\alpha=0.1$). (See
Appendix~\ref{sec:exp_app}.)


The environments are designed to be difficult for exploration. For
example, the deterministic variant has $2^H$ paths with non-zero
reward, but $4^H$ paths in total, so random exploration requires
exponentially many trajectories.

We also consider two observation processes, which we use only for our
algorithm, while the baseline and the skyline operate directly on the latent state space.
In \emph{Lock-Bernoulli}, the observation space is
$\{0,1\}^{H+3}$ where the first $3$ coordinates are reserved for
one-hot encoding of the state and the last $H$ coordinates are drawn
i.i.d. from $\textrm{Ber}(1/2)$. The observation space
is not partitioned across time, which our algorithms track
internally. Thus, \emph{Lock-Bernoulli} meets the BMDP assumptions
and can be perfectly decoded via linear functions.  In \emph{Lock-Gaussian}, the observation space is
$\mathbb{R}^{H+3}$. As before the first $3$ coordinates are reserved
for one-hot encoding of the state, but this encoding is corrupted with
Gaussian noise. Formally, if the agent is at state $s_{i,h}$ the
observation is $\ve_i +\vv \in \mathbb{R}^{3+H}\!$, where $\ve_i$ is one of
the first three standard basis vectors and $\vv$ has
$\mathcal{N}(0,\sigma^2)$ entries.  We consider $\sigma \in
\{0.1,0.2,0.3\}$. Note that Lock-Gaussian \emph{does not} satisfy
Assumption~\ref{asmp:block} since the emission distributions
cannot be perfectly separated.
We use this environment to evaluate the robustness of our algorithm to violated assumptions.%
\looseness=-1


\textbf{Baseline, skyline, hyperparameters.}
We compare our algorithm against two \emph{tabular} approaches that
cheat by directly accessing the latent state. The first, \oracleq, is the
Optimistic $Q$-Learning algorithm of~\citet{jin2018q},
with a near-optimal regret in tabular environments.\footnote{We use the Hoeffding version, which
  is conceptually much simpler, but statistically slightly worse.}
Because of its near-optimality and direct access to the latent state, we do not expect any algorithm to beat \oracleq,
and view it as a skyline.
The second, \qlearning, is tabular $Q$-learning with $\epsilon$-greedy
exploration.
It serves as a sanity-check baseline:
any algorithm with strategic exploration should vastly outperform
\qlearning, even though it is cheating.


Each algorithm has two hyperparameters that we tune.
In our algorithm (\decoding), we use $k$-means clustering
instead of Algorithm~\ref{algo:find_representation},
so one of the hyperparameters is the number of clusters $k$. The second
one is the number of trajectories $n$ to collect in each outer iteration.
For \oracleq, these are the
learning rate $\alpha$ and a confidence parameter $c$. For \qlearning,
these are the learning rate $\alpha$ and $\epsilon_{\textrm{frac}} \in
[0,1]$, a fraction of the 100K episodes over which to anneal the
exploration probability linearly from 1 down to 0.01.

For both \emph{Lock-Bernoulli} and \emph{Lock-Gaussian}, we experiment
with linear decoding functions, which we fit via ordinary least
squares. For \emph{Lock-Gaussian} only, we also use two-layer neural
networks. Specifically, these functions are of the form $f(\vx) =
\mW_2^\top \textrm{sigmoid}(\mW_1^\top\vx +\vc)$ with the standard sigmoid
activation, where the inner dimension is set to the clustering
hyper-parameter $k$. These networks are trained using AdaGrad with a
fixed learning rate of $0.1$, for a maximum of 5K iterations. See
Appendix~\ref{sec:exp_app} for more details on hyperparameters and
training.

\textbf{Experimental setup.}
We run the algorithms on all environments with varying
$H$, which also influences the dimension of the observation
space. Each algorithm runs for 100K episodes and we say that it
has \emph{solved the lock} by episode $t$ if at round $t$ its running-average
reward is $\geq 0.25 = 0.5V^\star$. The \emph{time-to-solve} is the
smallest $t$ for which the algorithm has solved the lock. For each
hyperparameter, we run 25 replicates with different randomizations of
the environment and seeds, and we plot the median time-to-solve of the
best hyperparameter setting (along with error bands corresponding to
$90^{\textrm{th}}$ and $10^{\textrm{th}}$ percentiles) against the
horizon $H$.
Our algorithm is reasonably fast, e.g., a single replicate
of the above protocol for the two-layer neural net model and $H=50$ takes less than 10 minutes
on a standard laptop.

\textbf{Results.}
The results are in Figure~\ref{fig:lock_results} in a log-linear
plot. First, \qlearning works well for small horizon problems but
cannot solve problems with $H \geq 15$ within 100K episodes, which is
not surprising.\footnote{We actually ran \qlearning for 1M episodes
and found it solves $H=15$ with 170K episodes.}  The performance curve
for \qlearning is linear, revealing an exponential sample complexity,
and demonstrating that these environments cannot be solved with
na\"{i}ve exploration. As a second observation, \oracleq performs
extremely well, and as we verify in Appendix~\ref{sec:exp_app}
demonstrates a linear scaling with $H$.\footnote{This is incomparable
with the result in~\citet{jin2018q} since we are not measuring regret
here.}


In \emph{Lock-Bernoulli}, \decoding is roughly a factor of 5 worse
than the skyline \oracleq for all values of $H$, but the curves have
similar behavior.  In Appendix~\ref{sec:exp_app}, we verify a
\emph{near-linear} scaling with $H$, even better than predicted by
our theory.  Of course \decoding is an exponential improvement
over \qlearning with $\epsilon$-greedy exploration here.

In \emph{Lock-Gaussian} with linear functions, the results are similar
for the low-noise setting. The performance of \decoding degrades
as the noise level increases. For example, with noise level $\sigma =
0.3$, it fails to solve the stochastic problem with $H=40$ in 100K
episodes. This is expected, as Assumption~\ref{asmp:block} is severely violated at this noise level. However, the scaling of
the sampling complexity still represents a dramatic improvement over \qlearning.

Finally, \decoding with neural networks is less robust to noise and
stochasticity in \emph{Lock-Gaussian}. Here, with $\sigma=0.3$ the
algorithm is unable to solve the $H=30$ problem, both with and without
stochasticity, but still does quite well with $\sigma \in
\{0.1,0.2\}$. The scaling with $H$ is still quite favorable.


\textbf{Sensitivity analysis.}
  We also perform a simple sensitivity analysis to assess how
  the hyperparameters $k$ and $n$ influence the behavior of \decoding.
  We find that if we
  under-estimate either $k$ or $n$ the algorithm fails, either because
  it cannot identify all latent states, or it does not collect enough
  data to solve the regression problems. On the other hand,
  the algorithm is quite robust to over-estimating both parameters.
  (See Appendix~\ref{app:sensitivity} for further details.)


\textbf{Summary.}
We have shown on several rich-observation environments with both
linear and non-linear functions that \decoding scales to large-horizon
rich-observation problems. It dramatically outperforms
tabular \qlearning with $\epsilon$-greedy exploration, and is roughly
a factor of 5 worse than a near-optimal \oracleq with an access
to the latent state.
\decoding's performance is robust to hyperparameter
choices and degrades gracefully as the assumptions are
violated.

%
%

\bibliography{simonduref}
\bibliographystyle{icml2019}

\onecolumn
\newpage
\appendix

\section{Comparison of BMDPs with other related frameworks}
\label{sec:comparison}
The problem setup in a BMDP is closely related to the literature on state abstractions, as our decoding function can be viewed as an abstraction over the rich context space. Since we learn the decoding function instead of assuming it given, it is worth comparing to the literature on state abstraction learning.  The most popular notion of abstraction in model-based RL is bisimulation \cite{whitt1978approximations, givan2003equivalence}, which is more general than our setup since
our context is sampled i.i.d.~conditioned on the hidden state (the irrelevant factor discarded by a bisimulation may not be i.i.d.). Such generality comes with a cost as learning good abstractions turns out to be very challenging.   The very few results that come with finite sample guarantees can only handle a small number of candidate abstractions \citep{ hallak2013model, ortner2014selecting, pmlr-v37-jiang15}.
In contrast, we are able to learn a good decoding function from an exponentially large and unstructured family (that is, the decoding functions $g_h \in \gclass$ combined with the state encodings $\phi$). 

The setup and algorithmic ideas in our paper are related to the work 
of~\citet{azizzadenesheli2016pomdp, azizzadenesheli2016romdp}, but we are able to handle continuous observation spaces with no direct dependence on the number of unique contexts due to the use of function approximation. The recent setup of Contextual Decision Processes (CDPs) with low Bellman rank, introduced by~\citet{jiang2017contextual} is a strict generalization of BMDPs (the Bellman rank of any BMDP is at most $M$). The additional assumptions made in our work enable the development of a computationally efficient algorithm, unlike in their general setup. Most similar to our work,~\citet{dann2018polynomial} study a subclass of CDPs with low Bellman rank where the transition dynamics are deterministic.\footnote{While not explicitly assumed in their work, the assumption of the optimal policy and value functions depending only on the current observation and not hidden state is most reasonable when the observations are disjoint across hidden states like in this work.} However, instead of the deterministic dynamics in \citeauthor{dann2018polynomial}, we consider stochastic dynamics with certain reachability and separability conditions. As we note in Section~\ref{sec:pomdp}, these assumptions are trivially valid under deterministic transitions. In terms of the realizability assumptions, Assumption~\ref{asmp:realizability} posits the realizability of a decoding function, while~\citet{dann2018polynomial} assume realizability of the optimal value function. These assumptions are not directly comparable, but are both reasonable if the decoding and value functions implicitly first map the contexts to hidden states, followed by a tabular function as discussed after Assumption~\ref{asmp:realizability}. Finally as noted by \citeauthor{dann2018polynomial}, certain empirical RL benchmarks such as visual grid world are captured reasonably well in our setting.

On the empirical side, \cite{pathak2017curiosity} learn a encoding function that compresses the rich obervations to a low-dimensional representation, which serves a similar purpose as our decoding function,
using prediction errors in the low-dimensional space to drive exploration.  This approach has weaknesses, as it cannot cope with stochastic transition structures.  Given this, our work can also be viewed as a rigorous fix for these types of empirical heuristics.

\section{Incorporating Rewards in BMDPs}
\label{app:rewards}
At a high level, there are two natural choices for modeling rewards in a BMDP. In some cases, the rewards might only depend on the latent state. This is analogous to how rewards are typically modeled in the POMDP literature and respects the semantics that $s$ is indeed a valid state to describe an optimal policy or value function. For such problems, finding a near optimal policy or value function building on Algorithms~\ref{algo:stochastic} or~\ref{algo:deterministic} is relatively straightforward. Note that along with the policy cover, our algorithms implicitly construct an approximately correct dynamics model $\hp$ in the latent state space as well as decoding functions $\hf$ which map contexts to the latent states generating them with a small error probability. While these objects are explicit in Algorithm~\ref{algo:stochastic}, they are implicit in Algorithm~\ref{algo:deterministic} since each policy in the cover reaches a unique latent state with probability 1 so that we do not need any decoding function. Indeed for deterministic BMDPs, we do not need the dynamics model at all given the policy cover to maximize a state-dependent reward as shown in Corollary~\ref{cor:pac_deterministic}. For stochastic BMDPs, given any reward function, we can simply plan within the dynamics model over the latent states to obtain a near-optimal policy as a function of the latent state. We construct a policy $\widehat{\pi}$ as a function of contexts by first decoding the context using $\hf$ and then applying the near-optimal policy over latent states found above. As we show in the main text, there are parameters $\epsdec$ and $\epsp$ controlled by our algorithms, such that the policy found using the procedure described above is at most $O\bigParens{H(\epsdec + \epsp)}$ suboptimal.

In the second scenario where the reward depends on contexts, the optimal policies and value functions cannot be constructed using the latent states alone. However, our policy cover can still be used to generate a good exploration dataset for subsequent use in off-policy RL algorithms, as it guarantees good coverage for each state-action pair. Concretely, if we use value-function approximation, then the dataset can be fed into an approximate dynamic programming (ADP) algorithm~\citep[e.g., FQI][]{ernst2005tree}. Given a good exploration dataset, these approaches succeed under certain representational assumptions on the value-function class \citep{antos2008learning}. Similarly, one can use PSDP style policy learning methods on such a dataset~\citep{bagnell2004policy}.

We conclude this subsection by observing that in reward maximization for RL, most works fall into either seeking a PAC or a regret guarantee. Our approach of first constructing a policy cover and then learning policies or value functions naturally aligns with the PAC criterion, but not with regret minimization. Nevertheless, as we see in our empirical evaluation, for challenging RL benchmarks, our approach still has a good performance in terms of regret.

\section{Experimental Details and Reproducibility Checklist}
\label{sec:exp_app}
\subsection{Implementation Details}

\paragraph{Environment transition diagram.}
The hidden state transition diagram for the Lock environment is
displayed in Figure~\ref{fig:lock_structure}.
\begin{figure}
\begin{center}
\begin{tikzpicture}
\draw[black] (0,0) circle (12pt) node (s11) {$s_{1,1}$};
\draw[black] (2.5,0) circle (12pt) node (s12) {$s_{1,2}$};
\draw[black] (5.0,0) circle (12pt) node (s13) {$s_{1,3}$};
\coordinate[] (d11a) at (1.0,0.25) {}; 
\coordinate[] (d11b) at (0.75,-0.25) {}; 
\coordinate[] (d12a) at (3.5,0.25) {}; 
\coordinate[] (d12b) at (3.25,-0.25) {}; 
\node[] (d13a) at (7.0, 0.25) {$B(\tfrac{1}{2})$};
\node[] (d13b) at (7.0, -0.25) {$0$};

\draw[black] (0,-1.5) circle (12pt) node (s21) {$s_{2,1}$};
\draw[black] (2.5,-1.5) circle (12pt) node (s22) {$s_{2,2}$};
\draw[black] (5.0,-1.5) circle (12pt) node (s23) {$s_{2,3}$};
\coordinate[] (d21a) at (0.75,-1.25) {}; 
\coordinate[] (d21b) at (1.0,-1.75) {}; 
\coordinate[] (d22a) at (3.25,-1.25) {}; 
\coordinate[] (d22b) at (3.5,-1.75) {}; 
\node[] (d23a) at (7.0, -1.25) {$B(\tfrac{1}{2})$};
\node[] (d23b) at (7.0, -1.75) {$0$};

\draw[black] (0,-3.0) circle (12pt) node (s31) {$s_{3,1}$};
\draw[black] (2.5,-3.0) circle (12pt) node (s32) {$s_{3,2}$};
\draw[black] (5.0,-3.0) circle (12pt) node (s33) {$s_{3,3}$};
\node[] (d33) at (7.0, -3.0) {$0$};

\path[black,thick,->] (s11) edge node {} (d11a);
\path[black, thick, ->] (s11) edge node {} (d11b);
\path[green, thick, bend left, ->] (d11a) edge node {} (s12);
\path[blue, thick, bend right, ->] (d11a) edge node {} (s22);
\path[green, thick, bend right, ->] (d11b) edge node {} (s22);
\path[blue, thick, bend left, ->] (d11b) edge node {} (s12);
\path[black, thick, ->] (s11) edge node {} (s32);

\path[black,thick,->] (s12) edge node {} (d12a);
\path[black, thick, ->] (s12) edge node {} (d12b);
\path[green, thick, bend left, ->] (d12a) edge node {} (s13);
\path[blue, thick, bend right, ->] (d12a) edge node {} (s23);
\path[green, thick, bend right, ->] (d12b) edge node {} (s23);
\path[blue, thick, bend left, ->] (d12b) edge node {} (s13);
\path[black, thick, ->] (s12) edge node {} (s33);

\path[black, thick, bend left, ->] (s13) edge node {} (d13a);
\path[black, thick, ->] (s13) edge node {} (d13a);
\path[black, thick, ->] (s13) edge node {} (d13b);
\path[black, thick, bend right, ->] (s13) edge node {} (d13b);

\path[black, thick, ->] (s21) edge node {} (d21a);
\path[black, thick, ->] (s21) edge node {} (d21b);
\path[green, thick, bend left, ->] (d21a) edge node {} (s12);
\path[blue, thick, bend right, ->] (d21a) edge node {} (s22);
\path[green, thick, bend right, ->] (d21b) edge node {} (s22);
\path[blue, thick, bend left, ->] (d21b) edge node {} (s12);
\path[black, thick, ->] (s21) edge node {} (s32);

\path[black, thick, ->] (s22) edge node {} (d22a);
\path[black, thick, ->] (s22) edge node {} (d22b);
\path[green, thick, bend left, ->] (d22a) edge node {} (s13);
\path[blue, thick, bend right, ->] (d22a) edge node {} (s23);
\path[green, thick, bend right, ->] (d22b) edge node {} (s23);
\path[blue, thick, bend left, ->] (d22b) edge node {} (s13);
\path[black, thick, ->] (s22) edge node {} (s33);

\path[black, thick, bend left, ->] (s23) edge node {} (d23a);
\path[black, thick, ->] (s23) edge node {} (d23a);
\path[black, thick, ->] (s23) edge node {} (d23b);
\path[black, thick, bend right, ->] (s23) edge node {} (d23b);

\path[black, thick, ->] (s31) edge node {} (s32);
\path[black, thick, ->] (s32) edge node {} (s33);
\path[black, thick, ->] (s33) edge node {} (d33);
\end{tikzpicture}
\end{center}
\caption{Lock transition diagram. The process is layered with time
  moving from left to right. Green arrows denote high probability
  transitions (either $1.0$, or $0.9$) while blue arrows denote low
  probability transitions ($0.0$ or $0.1$). The agent starts in
  $s_{1,1}$. All states have four actions (all actions have the same
  effect for $s_{3,h}$), and the action labels are randomized for each
  replicate.}
\label{fig:lock_structure}
\end{figure}
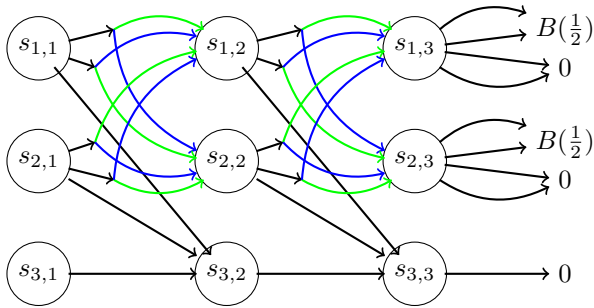

Our implementation of \decoding follows
Algorithm~\ref{algo:stochastic}, with a few small differences. First,
we set $N_{g}=N_{\phi}=N_p=n$, where $n$ is a tuned
hyperparameter. The first data collection step in Line 8 is as
described: uniform over $\Pi_{h-1}\circ \mathcal{A}$ for $n$
samples. The oracle in Line 9 is implemented differently for each
representation as we detail below. Then rather than collect $n$
additional samples in Line 10, we simply re-use the data from Line
8. For clustering, as mentioned, we use $K$-means clustering rather
than the canopy-style subroutine described in
Algorithm~\ref{algo:find_representation}. We describe this in more
detail below. We re-use data in lieu of the last data-collection step
in Line 13, and the transition probabilities are estimated simpy via
empirical frequencies.  Finally, the policies $\Pi_h$ are learned via
dynamic programming on the learned latent transition model.

The two steps that require further clarification are the
implementation of the oracle and the clustering step.

\paragraph{Oracle Implementation and representation.}
Whenever we use \decoding with a linear representation, we use
unregularized linear regression, e.g., ordinary least squares. Since
we have vector-valued predictions, we performan linear regression
independently on each coordinate. Formally, with data matrix $X \in
\mathbb{R}^{n \times d}$ and targets $Y \in \mathbb{R}^{n \times p}$ the parameter
matrix $\hat{\beta} \in \mathbb{R}^{d \times p}$ is
\begin{align*}
\hat{\beta} = (X^\top X)^{-1} X^\top Y
\end{align*}
We solve for $\hat{\beta}$ exactly, modulo standard numerical methods
for performing the matrix inverse (specifically,
\texttt{numpy.linalg.pinv}). Note that we do not add an intercept term
to this problem.

When we use a neural network oracle, the representation is always
$f(x) = W_2^\top \textrm{sigmoid}(W_1^\top x + c)$. For dimensions, if
the clustering hyper-parameter is $k$, the observation space has
dimension $d$, and the targets for the regression problem have
dimension $p$, then the weight matrices have $W_2 \in \mathbb{R}^{k
  \times p}, W_1 \in \mathbb{R}^{d \times k}$ and the intercept term
is $c \in \mathbb{R}^k$. $\textrm{sigmoid}(z) = (1+e^{-z})^{-1}$ is
the standard sigmoid activation, and we always use the square loss.
To fit the model, we use AdaGrad with a fixed learning rate multiplier
of $0.1$. With output dimension $p$, each iteration of optimization
makes $p$ updates to the model, one for each output dimension in
ascending order. As a rudimentary convergence test, we compute the
total training loss (over all output dimensions) at each
iteration. For each $t \in \mathbb{N}$, we check if the training loss
at iteration $100 t$ is within $10^{-3}$ of the training loss at round
$100(t-1)$. If so, we terminate optimization. We always terminate
after 5000 iterations. Our neural network model and training is
implemented in \texttt{pytorch}.

\paragraph{Clustering.}
We use the \texttt{scikit-learn} implementation of $K$-means
clustering for clustering in the latent embedding space, with a simple
model selection subroutine as a wrapper to tune the number of
clusters. Starting with $k$ set to the hyperparameter used as input to
\decoding, we run $K$-means, searching for $k$ clusters, and we check
if each found clusters has at least $30$ points. If not, we decrease
$k$ and repeat.

\subsection{Additional Results}

\begin{figure*}
\includegraphics[width=\textwidth]{./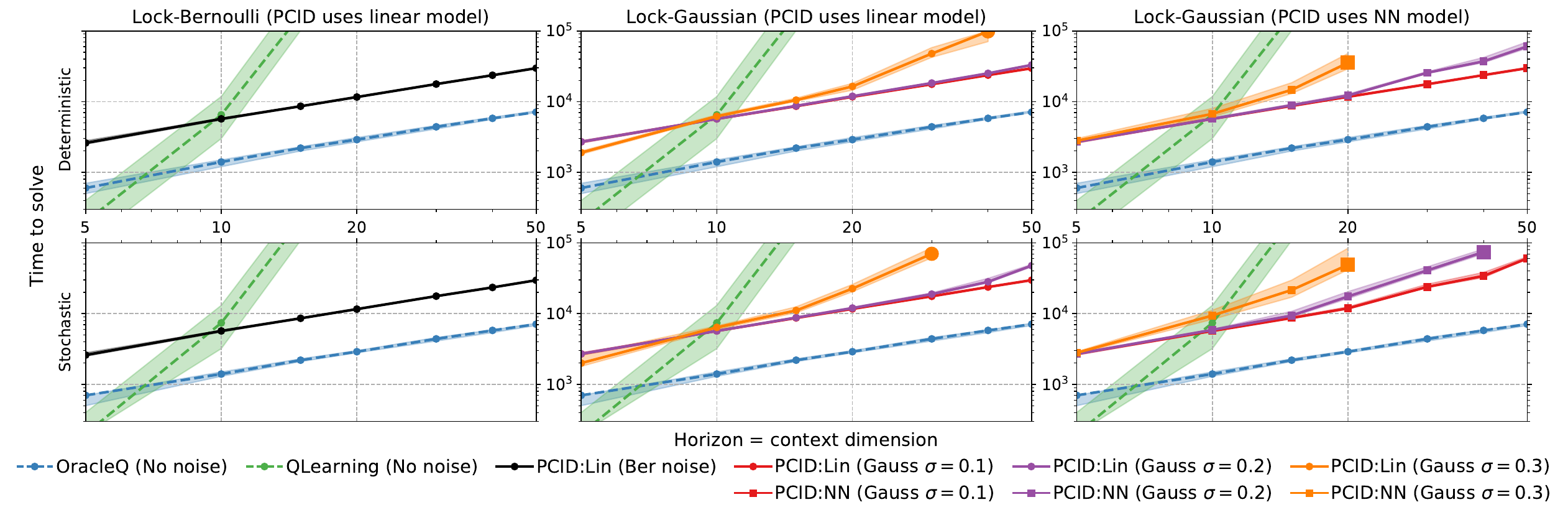}
\caption{Time-to-solve against problem difficulty for the
  \texttt{Lock} environment with two different observation processes
  and function classes, plotted now in a log-log plot. The curves
  confirm a linear scaling with difficult for both \decoding and
  \oracleq.}
\label{fig:lock_solve_loglog}
\end{figure*}

\begin{figure*}
\includegraphics[width=\textwidth]{./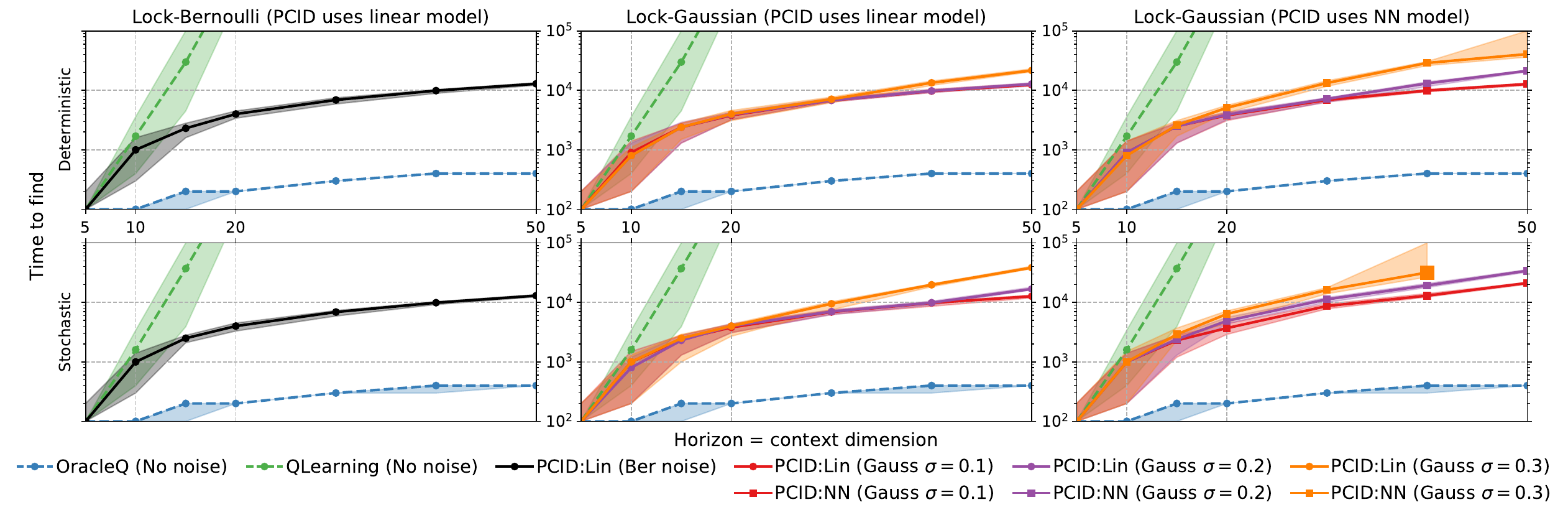}
\caption{Time-to-find the goal against problem difficulty for the
  \texttt{Lock} environment with two different observation processes
  and function classes. Left: Lock-Bernoulli environment. Center:
  Lock-Gaussian with linear functions, Right: Lock-Gaussian with
  neural networks. Top row: deterministic latent transitions. Bottom
  row: stochastic transitions with switching probability 0.1.
  \oracleq and \qlearning operate directly on hidden states, and hence
  are invariant to observation processes.}
\label{fig:lock_find}
\end{figure*}

In Figure~\ref{fig:lock_solve_loglog} we plot exactly the same results
as in Figure~\ref{fig:lock_results} except we visualize the results on
a log-log plot. This verifies the linear scaling with $H$ for both
\oracleq and \decoding in \emph{Lock-Bernoulli} and
\emph{Lock-Gaussian} with linear functions. The slope for the
line-of-best fit for \oracleq in the deterministic setting is $1.065$
and in the stochastic setting it is $1.013$. For \oracleq, this
corresponds to the exponent on $H$ in the sample complexity. On
\emph{Lock-Bernoulli}, \decoding has slope $1.051$ in both settings,
in the log-log scale, as above this corresponds to the exponent on $H$
in our sample complexity, but since we have bound $d=H$ in these
experiments, a linear dependence on $H$ is substantially better than
what our theory predicts.

In Figure~\ref{fig:lock_find} we use a different performance measure
to compare the three algorithms, but all other details are identical
to the results in Figure~\ref{fig:lock_results}. Here we measure the
\emph{time-to-find}, which is the first episode for which the agent
has non-zero total reward. Since the environments have no immediate
reward, and almost all trajectories receive zero reward, this metric
more closely corresponds to solving the exploration problem, while
\emph{time-to-solve} requires exploration and exploitation. We use
time-to-solve in the main text because it is a better fit for the
baseline algorithms.

As before, we plot the median time-to-find with error bars
corresponding to $90^{\textrm{th}}$ and $10^{\textrm{th}}$ percentiles
for the best hyperparameter, over 25 replicates, for each algorithm
and in each environment. As sanity checks, \oracleq always finds the
goal extremely quickly, and \qlearning always fails for $H=20$, which
is unsurprising. Qualitatively the results for \decoding are similar
to those in Figure~\ref{fig:lock_results}, but notice that with neural
network representation, \decoding almost always finds the goal in 100K
episodes, even if it is unable to accumulate high reward. This
suggests either a failure in \emph{exploitation}, which is not the
focus of this work, or that the agent would solve the problem with a
few more episodes.

\begin{figure}
\begin{center}
\includegraphics[width=0.4\textwidth]{./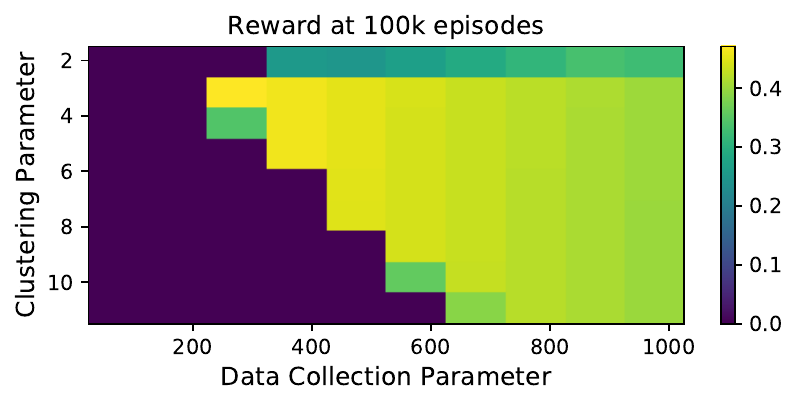}
\end{center}
\vspace{-0.5cm}
\caption{Sensitivity analysis for \decoding on
  \emph{Lock-Bernoulli} with $H=20$, showing robustness to overestimating hyperparameters. }
\label{fig:sensitivity}
\end{figure}

\subsection{Sensitivity Analysis}
\label{app:sensitivity}

We perform a simple sensitivity analysis to assess how the
hyperparameters $k$ and $n$ influence the behavior of \decoding. In
Figure~\ref{fig:sensitivity} we display a heat-map showing
the running-average reward (taking median over 25 replicates) of the
algorithm on the stochastic \emph{Lock-Bernoulli} environment with
$H=20$ as we vary both $n$ and $k$. The best parameter choice here is
$k=3$ and $n=300$. As we expect, if we under-estimate either $k$ or
$n$ the algorithm fails, either because it cannot identify all latent
states, or it does not collect enough data to solve the induced
regression problems. On the other hand, the algorithm is quite robust
to over-estimating both parameters, with a graceful degradation in
performance.

\subsection{Reproducibility Checklist}
\begin{itemize}
\item \textbf{Data collection process.} There was no dataset
  collection for this paper, but see below for details about
  environments and how results were collected.
\item \textbf{Datasets/Environments.} Environments are implemented in
  the OpenAI Gym API. Source code for environments are included with
  submission and will be made publicly available.
\item \textbf{Train/Validation/Test Split.} Our performance metrics
  are akin to regret, and require no train/test split. Nevertheless,
  we used different random seeds for development and for the final
  experiment.
\item \textbf{Excluded data.} No data was excluded.
\item \textbf{Hyperparmeters.} For \oracleq and \qlearning we consider
  learning rates in $\{1^{-x} : x \in \{-4,\ldots,0\}\}$. For \oracleq
  we chooose the confidence parameter from the same set. For
  \qlearning the exploration parameter, the fraction of the learning
  process over which to linearly decay the exploration probability, is
  chosen from $\{0.0001,0.001,0.01,0.1,0.5\}$. For \decoding, in
  Figures~\ref{fig:lock_results} and~\ref{fig:lock_find}, we set the
  K-means parameter to 3 and we choose $n \in
  \{100,200,\ldots,900,1000\}$. Hyperparameters were selected as
  follows: for each environment/horizon pair we choose the
  hyperparameter with best $90^{\textrm{th}}$ percentile
  performance. If the $90^{\textrm{th}}$ percentile for all
  hyperparameters exceeds the training time, we choose the
  hyperparameter with the best median performance. If this fails we
  optimize for $10^{\textrm{th}}$ percentile performance.
\item \textbf{Evaluation Runs.} We always perform 25 replicates.
\item \textbf{Experimental Protocol.} All algorithms (with various
  hyperparameter configurations) are run in each environment (with
  different featurization and stochasticity) for 100K episodes. In
  each of the 25 replicates we change the random seed for both the
  environment and the algorithm. We record total reward every 100
  episodes. Since \oracleq and \qlearning operate directly on the
  hidden states, we do not re-run with different featurizations.
\item \textbf{Reported Results.} In Figures~\ref{fig:lock_results}
  and~\ref{fig:lock_find}, we display the \emph{time-to-solve} and
  \emph{time-to-find} for each algorithm. Time-to-solve is the first
  episode $t$ (rounded up to the nearest 100) for which the agent's
  running-average reward is at least $0.5V^\star$. Time to find is the
  first episode $t$ (rounded up to the nearest 100) for which the
  agent has non-zero reward. For central tendency, we plot the median
  of these values over the 25 replicates. For error bars we plot the
  $90^{\textrm{th}}$ and $10^{\textrm{th}}$ percentile. In
  Figure~\ref{fig:lock_structure}, we plot the median running-average
  reward after 100K episodes. There are no error bars in this plot.
\item \textbf{Computing Infrastructure.} All experiments were
  performed on a Linux compute cluster. Relevant software packages and
  versions are: python 3.6.7, numpy 1.14.3, scipy 1.1.0, scikit-learn
  0.19.1, torch 0.4.0, gym 0.10.9, matplotlib 1.5.1.
\end{itemize}

\section{Proofs for Deterministic BMDPs}
\label{sec:proofs_det}

We begin with proofs for deterministic BMDPs as they are simpler and some of the arguments are reused in the more general stochastic case.

The following theorem shows the relation between number of samples and the risk.
Since we are in the deterministic setting, given $(s,a)$, the next hidden state is uniquely determined, we abuse the notation and let $s' = p(s,a)$
\begin{thm}\label{thm:sample_complexity_regression}
	Let $\hvg_h$ be the function defined in line 8 of Algorithm~\ref{algo:deterministic}.
	For any $\epsilon > 0$, if $n = \Omega\left( \frac{1}{\epsilon}\log \left(\frac{\abs{\gclass}}{\delta}\right)\right)$, then with probability at least $1-\delta$ over the training samples $\Dexp$, we have
	\begin{align*}
	\expect_{(s,a) \sim \unif\left(\states_{h-1} \times \actions\right), s' = p(s,a), x' \sim q(\cdot\given s')}\left[\norm{\hvg_h(x')-\vb_\unif(s')}_2^2\right] \le \epsilon.
	\end{align*}
\end{thm}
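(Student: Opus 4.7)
The plan is to prove this as a fast-rate generalization bound for square-loss ERM over the finite class $\gclass$, leveraging the fact that (by Theorem~\ref{thm:why_pop_risk_minimizer} together with Assumption~\ref{asmp:realizability}) the Bayes-optimal regressor is realizable.

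First I will set up the square-loss decomposition that turns population risk into the quantity we want to bound. Let $\tnu$ be the joint distribution over $(s,a,x')$ induced by sampling $(s,a)\sim\unif(\states_{h-1}\times\actions)$, letting $s'=p(s,a)$, and $x'\sim q(\cdot\given s')$. Let $\vg^*_h\in\gclass$ denote any population minimizer of $R(\vg):=\E_{\tnu}\bigBracks{\norm{\vg(x')-\ve_{(s,a)}}^2}$; by Theorem~\ref{thm:why_pop_risk_minimizer} such a $\vg^*_h$ exists in $\gclass$ under Assumption~\ref{asmp:realizability}, and it satisfies $\vg^*_h(x')=\vb_\unif(s')$ for $x'\in\contexts_{s'}$. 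Using $\E_{\tnu}[\ve_{(s,a)}\given x']=\vb_\unif(s')$ (which is just the definition of backward probabilities, applied to a deterministic forward transition), the cross term vanishes and one obtains the key identity
\begin{equation*}
R(\vg)-R(\vg^*_h)
=\E_{\tnu}\BigBracks{\bigNorm{\vg(x')-\vb_\unif(s')}^2},
\end{equation*}
so bounding the excess risk of $\hvg_h$ is exactly the statement of the theorem.

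Next I will invoke Bernstein's inequality together with a union bound over the finite class $\gclass$ to obtain a fast $O(1/n)$ rate. For any $\vg\in\gclass$, define $Z_\vg:=\norm{\vg(x')-\ve_{(s,a)}}^2-\norm{\vg^*_h(x')-\ve_{(s,a)}}^2$. Since $\vg(x'),\vg^*_h(x'),\ve_{(s,a)}\in\simplex_{MK}$, we have $\abs{Z_\vg}\le C$ for a universal constant $C$. Moreover, a direct algebraic calculation (expanding the difference of squares and using the identity above) yields the variance bound $\E_{\tnu}[Z_\vg^2]\le C'\cdot(R(\vg)-R(\vg^*_h))$. Applying Bernstein's inequality to the average of $\Nexp$ i.i.d.\ samples of $Z_\vg$, and taking a union bound over $\vg\in\gclass$, gives that with probability at least $1-\delta$ every $\vg\in\gclass$ satisfies
\begin{equation*}
\bigAbs{(\hat R(\vg)-\hat R(\vg^*_h)) - (R(\vg)-R(\vg^*_h))}
\le c_1\sqrt{\tfrac{(R(\vg)-R(\vg^*_h))\log(\card{\gclass}/\delta)}{\Nexp}}
   +c_2\tfrac{\log(\card{\gclass}/\delta)}{\Nexp}.
\end{equation*}

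Finally I will combine this with the ERM property $\hat R(\hvg_h)\le\hat R(\vg^*_h)$. Instantiating the display at $\vg=\hvg_h$ and moving the empirical terms to one side yields an inequality of the form $R(\hvg_h)-R(\vg^*_h)\le c_1\sqrt{(R(\hvg_h)-R(\vg^*_h))\log(\card{\gclass}/\delta)/\Nexp}+c_2\log(\card{\gclass}/\delta)/\Nexp$. Solving this quadratic inequality in $\sqrt{R(\hvg_h)-R(\vg^*_h)}$ gives $R(\hvg_h)-R(\vg^*_h)=O\bigParens{\log(\card{\gclass}/\delta)/\Nexp}$, which combined with the square-loss identity finishes the proof once $\Nexp=\Omega(\log(\card{\gclass}/\delta)/\epsilon)$ with a large enough constant.

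The only non-routine step is the variance bound $\E_{\tnu}[Z_\vg^2]\lesssim R(\vg)-R(\vg^*_h)$; this is the standard square-loss Bernstein condition, and it holds here precisely because $\vg^*_h$ is the \emph{conditional mean} regressor (a consequence of Theorem~\ref{thm:why_pop_risk_minimizer} and Assumption~\ref{asmp:realizability}), so $Z_\vg=\langle\vg(x')-\vg^*_h(x'),\vg(x')+\vg^*_h(x')-2\ve_{(s,a)}\rangle$ has its variance controlled by $\E\norm{\vg(x')-\vg^*_h(x')}^2$. Everything else is bookkeeping with Bernstein and a union bound; the fact that the target class is finite (rather than continuous) means no covering/chaining argument is needed.
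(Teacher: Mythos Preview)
Your proposal is correct and follows essentially the same approach as the paper: both set up the population/empirical square-loss risks, establish the Bernstein condition $\E[Z_\vg^2]\lesssim R(\vg)-R(\vg^*_h)$ via the factorization $Z_\vg=\langle\vg(x')-\vg^*_h(x'),\vg(x')+\vg^*_h(x')-2\ve_{(s,a)}\rangle$ and boundedness in the simplex, apply Bernstein with a union bound over the finite class $\gclass$, and then use the ERM property to solve the resulting self-bounding inequality. The only cosmetic difference is that you isolate the identity $R(\vg)-R(\vg^*_h)=\E_{\tnu}\norm{\vg(x')-\vb_\unif(s')}^2$ up front, whereas the paper uses it implicitly inside the variance calculation.
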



\begin{proof}[Proof of Theorem~\ref{thm:sample_complexity_regression}]
	The proof is a simple combination of empirical risk minimization analysis and Bernstein's inequality.
	Let $Q_\unif$ denote the joint distribution over $(s,a,s',x')$ such that $(s,a) \sim \unif(\states_{h-1} \times \actions), s' = p(s,a), x'\sim q(\cdot\given s')$.
	Define the population risk as \begin{align*}
	R(\vg) = \expect_{(s,a,s',x')\sim Q_\unif}\left[\norm{\vg(x')-\ve_{(s,a)}}_2^2\right]
	\end{align*}
	We also define the empirical risk as \begin{align*}
	\widehat{R}(\vg) =  \frac{1}{n}\sum_{i=1}^{n} \norm{\vg(x_i')-\ve_{(s_i,a_i)}}_2^2 \triangleq \frac{1}{n}\sum_{i=1}^{n}L_i(\vg).
	\end{align*}
	Note $L_i(\vg) \le 2\left(\norm{\vg(x_i')}_2^2+\norm{\ve_{(s_i,a_i)}}_2^2\right) \le 4$ because $\norm{\vg(x_i')}_2^2 \le \norm{\vg(x_i')}_{1}^2 = 1$ and $\norm{e_{(s_i,a_i)}}_2^2=1$.
	Recall $(s_i,a_i,s_i',x_i') \sim Q_\unif$. Recall that the minimizer $\vg^*$ of $R(\vg)$ satisfies $\vg^*(x') = \vb_\unif(s')$ if $x' \sim q(\cdot\given s')$. We  bound the second moment of the excess risk $L_i(\vg)-L_i(\vg^*)$.
\begin{align*}
	\expect\left[\left(L_i(\vg)-L_i(\vg^*)\right)^2\right] = & \expect\left[\left(
	\norm{\vg(x')-\ve_{(s,a)}}_2^2 - \norm{\vg^*(x')-\ve_{(s,a)}}_2^2
	\right)^2
	\right] \\
	= & \expect\left[\left(
	\left(\vg(x')-\vg^*(x')\right)^\top\left(\vg(x')+\vg^*(x')-2\ve_{(s,a)}\right)
	\right)^2\right] \\
	\le & \expect\left[
	\left(
	\norm{\vg(x')-\vg^*(x')}_2 \norm{\vg(x')+\vg^*(x')-2\ve_{(s,a)}}_2
	\right)^2
	\right] \\
	\le & 16 \expect\left[\norm{\vg(x')-\vg^*(x')}_2^2\right] \\
	= & 16 \left(R(\vg)-R(\vg^*)\right).
\end{align*}
where the expectation is taken over $Q_U$ and the inequality we used $\norm{\vg(x')+\vg^*(x')-2\ve_{(s,a)}}_2 \le \norm{\vg(x')}_2+\norm{\vg^*(x')}_2 + 2\norm{\ve_{(s,a)}}_2 \le \norm{\vg(x')}_{1}+\norm{\vg^*(x')}_{1} + 2\norm{\ve_(s,a)}_{1} = 4$.
Now we apply Bernstein inequality on the random variable $\hat{R}(\vg)-\hat{R}(\vg^*) - (R(\vg)-R(\vg^*)$ and obtain that
 if $n = \Omega\left(\frac{1}{\epsilon}\log\frac{\abs{\gclass}}{\delta}\right)$ we have with probability at least $1-\delta$, for all $\vg \in \cG$
 \begin{align*}
 	\abs{\hat{R}(\vg)-\hat{R}(\vg^*) - (R(\vg)-R(\vg^*))}  \le C \left(\sqrt{\frac{\left(R(\vg)-R(\vg^*)\right)\log(\abs{\cG}/\delta)}{n}} + \frac{\log(\abs{\cG})/\delta)}{n}\right)
 \end{align*} for a universal constant $C >0$.
Note by definition $\hat{g}_h$ satisfies $\hat{R}(g)-\hat{R}(g^*) \le 0$ and $R(\hat{g}_h)-R(g^*) \le 0$ so we have \begin{align*}
	\abs{R(\vg_h)-R(g^*)} \le  C \left(\sqrt{\frac{\left(R(\vg_h)-R(\vg^*)\right)\log(\abs{\cG}/\delta)}{n}} + \frac{\log(\abs{\cG})/\delta)}{n}\right).
\end{align*}
It is easy to see $\abs{R(\vg_h)-R(\vg^*)} = O\left(\frac{\log(\abs{\cG}/\delta)}{n}\right)$.
Plugging in our choice of $n$, we prove the theorem.

%
\end{proof}

\begin{thm}[Restatement of Theorem~\ref{thm:sample_complexity_deterministic}]
Set  $\tau = 0.01$,
$\Nexp = \tOmega(M^2K^2 \log\,\card{\gclass})$
and $\Ncluster = \tOmega(MK)$.
Then with probability at least $1-\delta$, Algorithm~\ref{algo:deterministic}
returns an $\epsilon$--policy cover of $\states$, with $\epsilon=0$.
\end{thm}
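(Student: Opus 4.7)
The plan is to prove the theorem by establishing Claim~\ref{claim:det} inductively for each level $h \in [H+1]$, since the claim immediately implies both the policy-cover property with $\epsilon=0$ and the size bound $|\Pi|\le MH$ (as $|\hatset_h|\le M$). The base case $h=1$ is trivial by construction. For the inductive step, assume that $\Pi_{h-1}$ contains exactly one policy per state in $\states_{h-1}$, each reaching its target with probability one under the bijection $\alpha_{h-1}$.

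First I would observe that the policy mixture $\eta_h=\unif(\Pi_{h-1})\odot\unif(\actions)$ induces exactly the uniform distribution $\nu=\unif(\states_{h-1}\times\actions)$ on $(s_{h-1},a_{h-1})$, because each $\pi\in\Pi_{h-1}$ reaches a distinct prior-level state with probability one. The dataset $\Dexp$ is therefore drawn from the joint distribution $\tnu$ required by Theorems~\ref{thm:why_pop_risk_minimizer} and~\ref{thm:sample_complexity_regression}. Invoking Theorem~\ref{thm:sample_complexity_regression} with $\Nexp=\tOmega(M^2K^2\log|\gclass|)$ gives population squared-$\ell_2$ error $\epsilon_{\text{reg}}=O(1/(M^2K^2))$ for $\hvg_h$.

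The key step is converting this population $\ell_2$ error into pointwise $\ell_1$ accuracy of the boosted embedding $\vz_{\pi\odot a}$. Since transitions are deterministic, each pair $\pi\odot a$ reaches a unique state $s'=p(\alpha_{h-1}(\hats_\pi),a)$ with probability one, and the marginal mass assigned to $s'$ under $\tnu$ is at least $1/(MK)$. By the distribution-overlap argument this yields $\expect_{x\sim\q(\cdot\given s')}[\|\hvg_h(x)-\vb_\nu(s')\|_2^2]\le MK\cdot\epsilon_{\text{reg}}$. Jensen's inequality together with the $\ell_1\le\sqrt{MK}\,\ell_2$ bound then gives $\|\expect_{x\given s'}[\hvg_h(x)]-\vb_\nu(s')\|_1\le MK\sqrt{\epsilon_{\text{reg}}}\le\tau/4$ under our choice of $\epsilon_{\text{reg}}$. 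For the boosting step, a vector-valued Hoeffding bound on the $\Nboost$ i.i.d.\ samples $\hvg_h(x_i)\in\simplex_{MK}$ (collected by running $\pi\odot a$) gives $\|\vz_{\pi\odot a}-\expect_{x\given s'}[\hvg_h(x)]\|_2=O(\sqrt{\log(1/\delta)/\Nboost})$; converting to $\ell_1$ yields $\tau/4$ accuracy when $\Nboost=\tOmega(MK)$. Combining, $\|\vz_{\pi\odot a}-\vb_\nu(s')\|_1\le\tau/2$ for every pair, after a union bound over the at most $MK$ pairs.

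Finally I would invoke the fact, noted in Section~\ref{sec:deterministic}, that separability holds automatically with $\gamma=2$ in deterministic BMDPs: for distinct $s',s''\in\states_h$ the supports of $\vb_\nu(s')$ and $\vb_\nu(s'')$ over $\states_{h-1}\times\actions$ are disjoint, so $\|\vb_\nu(s')-\vb_\nu(s'')\|_1=2$. Consequently $\vz_{\pi\odot a}$ and $\vz_{\pi'\odot a'}$ are at $\ell_1$ distance at most $\tau$ of each other iff they correspond to the same latent state, and at distance at least $2-\tau>\tau$ otherwise. The canopy clustering in Algorithm~\ref{algo:find_representation} with threshold $\tau=0.01$ therefore produces exactly one cluster per state $s'\in\states_h$, defining the bijection $\alpha_h$; taking $\pi_{\hats}=\pi\odot a$ for the $(\pi,a)$ chosen as the cluster center preserves the probability-one reachability. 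A union bound over the $H$ levels (with per-level failure $\delta/H$ absorbed in $\tOmega$ via $\log H$) closes the induction. I expect the main technical obstacle to be the distribution-overlap and Jensen chain that drives the $M^2K^2$ dependence of $\Nexp$ while keeping the threshold $\tau$ a fixed constant; tracking constants carefully so that $\tau/2$ accuracy in $\ell_1$ is achieved simultaneously from both the regression error and the Hoeffding fluctuation is the delicate bookkeeping step.
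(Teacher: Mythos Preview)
Your proposal is correct and follows essentially the same approach as the paper: induction on levels, using the fact that $\eta_h$ induces the uniform distribution over $\states_{h-1}\times\actions$, invoking the regression bound (Theorem~\ref{thm:sample_complexity_regression}) with $\epsilon=O(1/(M^2K^2))$, then the marginal-mass/Jensen/$\ell_1$--$\ell_2$ chain to bound $\|\expect_{x|s'}[\hvg_h(x)]-\vb_\nu(s')\|_1$, Hoeffding for the boosting step, and finally the $\gamma=2$ separation to certify the clustering. The only differences are cosmetic constants (the paper uses $\tau/10+\tau/10$ rather than your $\tau/4+\tau/4$) and the paper folds the per-pair union bound for boosting directly into $\log(MKH/\delta)$.
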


\begin{proof}[Proof of Theorem~\ref{thm:sample_complexity_deterministic}]

The proof is by induction on levels.
Our two induction hypothese are \begin{itemize}
\item  For $h'=1,\ldots,h-1$, $\widehat{\states}_{h'}$ and $\states_{h'}$ are bijective, i.e., there exists a bijective function $\alpha: \widehat{\states}_{h'} \rightarrow \states_{h'}$.
\item For $h'=1,\ldots,h-1$, $\Pi_{h'}$ covers the state space with the scale equals to $0$ (c.f. Definition~\ref{defn:cover}).
\end{itemize}
Note these two hypotheses imply Claim~\ref{claim:det}.

To prove the induction, for the base case, this is true because the starting state is fixed.
Now we prove the case $h'=h$.

For simplicity, we let $Q_U$ be a distribution over $(s,a,s',x')$ that $(s,a) \sim \unif\left[\states_{h-1} \times \actions\right], s' = p(s,a), x'\sim q(\cdot\given s')$.
Note because $\states_{h-1}$ and $\widehat{\states}_{h-1}$ are bijective and $Q_U$ can be also viewed as a distribution over $(s,a,s',x')$ that the probability of the event $(\hat{s},a,s',x')$ is the same as the event $(\alpha^{-1}(s),a,s',x')$.
Therefore, notation-wise,  in the proof of this theorem, $s$ is equivalently to $\alpha^{-1}(s)$ and $\hat{s}$ is equivalent to $\alpha(\hat{s})$.

By Theorem~\ref{thm:sample_complexity_regression}, we know if $\Nexp = O\left(\frac{1}{\epsilon} \log\left(\frac{\abs{\gclass}H}{\delta}\right)\right)$, we have with probability at least $1-\frac{\delta}{H}$,  \[\expect_{(s,a,s',x')\sim Q_U}\left[\norm{\vg_h(x')-\vb_\unif(s')}_2^2\right] \le \epsilon.\]
	Recall because of the induction hypothesis and the definition of our exploration policy, we sample $(s,a)$ uniformly,
	we must have for any $s' \in \states_h$\[
	\expect_{ x'\sim q(\cdot\given s')}\left[\norm{\vb_\unif(s')-\vg_h(x')}_{2}^2\right] \le KM\epsilon.
	\]
	By Jensen's inequality, this implies that for $s' \in \states_h$,
	\[
	\norm{\vb_\unif(s')-\expect_{ x'\sim q(\cdot \given s')}\left[\vg_h(x')\right]}_{2}^2 \le KM\epsilon.
	\]
	By AM-GM inequality, we know for any vector $\vv$, $\norm{\vv}_1^2 \le \dim(\vv) \norm{\vv}_2^2$.
	Therefore we have  for any  $s' \in \states_{h}$ \[
	\norm{\vb_{\unif}(s')-\expect_{ x'\sim q(\cdot\given s')}\left[\vg_h(x')\right]}_{1}^2 \le K^2M^2\epsilon
	\]
	Choosing $\epsilon = \frac{\tau^2}{100K^2M^2}$, we have with probability $1-\delta$, for any $h=0,\ldots,H-1$, $s' \in \states_h$, \[
	\norm{\vb_U(s')-\expect_{s' = p(s,a), x'\sim q(\cdot\given s')}\left[\vg_h(x')\right]}_{1} \le \tau/10.
	\]
	The above analysis shows the learned $\hat{\vg}_h$ has small error for all level and all states.

	Next, by standard Hoeffding inequality, we know if $\Ncluster = O\left(\frac{MK\log(MKH/\delta)}{\tau^2}\right)$ with probability at least $1-\frac{\delta}{H}$, for $(s,a) \in \states_{h-1} \times \actions$, we have \[\norm{\vz_{s \odot a} - \expect_{s' = p(s,a), x' \sim q(\cdot \given  s')}\left[\vg_h(x')\right]}_1 \le \tau/10.\]
	
	Now consider an iteration in Algorithm~\ref{algo:find_representation}.
	For any $\pi \odot a$, let $s_\pi$ denotes state reached by following the policy $\pi$ and let $s' = p(s_\pi,a)$.
	If there exists $(\pi',a')$ with $s' = p(s_{\pi'},a')$ and $\vz_{\pi'\odot a'} \in \widehat{\states}_h$, then we know
 $\norm{\vz_{a\circ \pi}-\vz_{a' \circ \pi'}}_1 \le \frac{\tau}{5}$.
	Thus $\hat{z}_{a\circ \pi}$ will not be added to $\widehat{\states}_h$.
	On the other hand, suppose for all $\vz_{\pi' \odot a'} \in \widehat{\states}_{h}$, $s' \neq p(s_{\pi'},a')$.
	Let $s'' = p(s_{\pi'},a')$.
	 We know $\norm{\vz_{\pi \odot a}-\vz_{\pi' \odot a'}}_1 \ge \norm{\vb_{\unif}(s') - \vb_{\unif}(s'')}_1 - \tau /5 \ge 2-\tau/5 \ge\tau $.
	Thus $\vz_{\pi \odot a}$ will be added to $\widehat{\states}_{h}$.
	Note the above reasonings imply $\states_h$ and $\widehat{\states}_h$ are bijective and from Algorithm~\ref{algo:deterministic}, it is clear that for all $s' \in \states_h$ we have stored one path (policy) in $\Pi_h$ that can reach $s'$.
	Thus we prove our induction hypotheses at level $h$.
	Lastly we use union bound over $h=1,\ldots,H$ and finish the proof.

\end{proof}

\begin{cor}[Restatement of Corollary~\ref{cor:pac_deterministic}]
\label{cor:pac_deterministic_app}
With probability at least $1-\delta$, Algorithm~\ref{algo:deterministic} can be used to find an $\epsilon$-suboptimal policy using at most
$\tO\bigParens{M^2K^2H\log\,\card{\gclass} + MKH^3/\epsilon^2}$
trajectories from a deterministic BMDP.
\end{cor}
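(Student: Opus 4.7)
\medskip

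\noindent\textbf{Proof proposal for Corollary~\ref{cor:pac_deterministic_app}.}
The plan is to combine Theorem~\ref{thm:sample_complexity_deterministic} with a straightforward Hoeffding-type reward-estimation phase, and then plan on the resulting tabular model. First, I would run Algorithm~\ref{algo:deterministic} with failure probability $\delta/2$, which by Theorem~\ref{thm:sample_complexity_deterministic} uses $\tO(M^2K^2H\log\,\card{\gclass})$ trajectories and produces, with probability at least $1-\delta/2$, a family $\Pi=\bigcup_{h\in[H+1]}\Pi_h$ of $(h-1)$-step policies and a bijection $\alpha_h:\hatset_h\to\states_h$ such that each $\pi_{\hats}\in\Pi_h$ reaches $\alpha_h(\hats)\in\states_h$ with probability one (Claim~\ref{claim:det}). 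In particular, every latent state-action pair $(s,a)\in\states_h\times\actions$ is reachable with probability one by executing the corresponding policy $\pi_{\alpha_h^{-1}(s)}\odot a$.

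Next I would use this cover to estimate expected rewards. For each level $h\in[H]$ and each $(\hats,a)\in\hatset_h\times\actions$, execute $\pi_{\hats}\odot a$ for $N_{\text{r}}=\tTheta(H^2/\epsilon^2)$ episodes and let $\widehat{r}(\hats,a)$ be the empirical mean of the observed reward at step $h$. Since rewards are bounded in $[0,1]$, Hoeffding's inequality together with a union bound over all $MKH$ state-action pairs ensures that, with probability $1-\delta/2$, every $\widehat{r}(\hats,a)$ is within $\epsilon/(2H)$ of the true expected reward $r\bigParens{\alpha_h(\hats),a}$. The total number of trajectories for this phase is $MKH\cdot N_{\text{r}}=\tO(MKH^3/\epsilon^2)$, matching the second term in the claimed bound.

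Finally, I would run Algorithm~\ref{algo:dynamic}-style dynamic programming on the empirical tabular MDP with transition dynamics given by the (deterministic) latent transitions implicitly encoded by the policy cover and rewards $\widehat{r}$, returning a policy $\widehat{\psi}$ over decoded latent states; the executable policy $\widehat{\pi}$ is obtained by composing $\widehat{\psi}$ with the decoding induced by the cover (i.e.\ each $\pi_{\hats}$ certifies which action sequence leads to $\hats$). A standard telescoping argument over the $H$ steps shows that a uniform $\epsilon/(2H)$ error in per-step reward estimates translates into at most an $\epsilon/2$ gap between the empirical-optimal policy value and any policy's true value; combined with perfect reachability ($\epsilon=0$ cover), this yields an $\epsilon$-suboptimal policy. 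Union-bounding the two failure events gives the overall $1-\delta$ guarantee.

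The proof is essentially routine once Theorem~\ref{thm:sample_complexity_deterministic} is in hand: the only mildly subtle point is the bookkeeping that per-step reward estimates at accuracy $\epsilon/H$ suffice to recover an $\epsilon$-suboptimal policy in a horizon-$H$ deterministic-transition MDP, which is the reason the final term scales as $H^3/\epsilon^2$ rather than $H/\epsilon^2$. No deeper argument is needed since the probability-one reachability of the cover removes any concern about importance weighting or coverage when collecting the reward samples.
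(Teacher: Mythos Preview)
Your proposal is correct and follows essentially the same approach as the paper: run Algorithm~\ref{algo:deterministic} to obtain a probability-one policy cover, then for each $(\hats,a)$ collect $\tTheta(H^2/\epsilon^2)$ reward samples via $\pi_{\hats}\odot a$, apply Hoeffding plus a union bound to get per-pair accuracy $O(\epsilon/H)$, and finish with the standard planning argument that $\epsilon/H$-accurate rewards yield an $\epsilon$-suboptimal $H$-step policy. The only cosmetic difference is that the paper phrases the last step as the decomposition $R-R^*=(R-\widehat{R})+(\widehat{R}-\widehat{R}^*)+(\widehat{R}^*-R^*)\ge -2\epsilon$ over action sequences, whereas you invoke dynamic programming and a telescoping bound; these are equivalent in a deterministic-transition MDP.
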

\begin{proof}[Proof of Corollary~\ref{cor:pac_deterministic}]
For a fixed state action pair $(s,a)$, we collect $\frac{1}{\epsilon^2H^2}\log(\frac{MHK}{\delta})$ samples.
Using Hoeffding inequality, we know with probability at least $1-\frac{\delta}{MHK}$, our estimated $\hat{r}(s,a)$ of this state-action pair satisfies\[
\abs{\hat{r}(s,a)-r(s,a)} \le \frac{1}{H\epsilon}.
\]
Taking union bound over $h \in [H], s \in \states_h, a\in \actions$, we know with probability at least $1-\delta$, for all state-action pair, we have \begin{align}
\abs{\hat{r}(s,a)-r(s,a)} \le \frac{1}{H\epsilon}. \label{eqn:reward_perb}
\end{align}
Now let $(a_1,\ldots,a_H)$ be the sequence of actions that maximizes the total reward based on estimated reward.
Let $\widehat{R}$ be the total estimated reward if we execute $(a_1,\ldots,a_H)$ and let $R$ be the true reward.
By Equation~\eqref{eqn:reward_perb}, we know with probability at least $1-\delta$  over the training samples, we have \[
\abs{\widehat{R}-R} \le \epsilon.
\]
Now denote let $(a_1^*,\ldots,a_H^*)$ be the sequence of actions that maximizes the true total reward .
Let $\widehat{R}^*$ be the total estimated reward if we execute $(a_1^*,\ldots,a_H^*)$ and let $R^*$ be the true reward.
Applying Equation~\eqref{eqn:reward_perb} again, we know
\[
\abs{\widehat{R}^*-R^*} \le \epsilon.
\]
Now note \begin{align*}
	R-R^* = &R-\widehat{R} + \widehat{R} - \widehat{R}^* + \widehat{R}^* - R^* \\
	& \ge R-\widehat{R} + \widehat{R}^* - R^* \\
	& \ge - \abs{R-\widehat{R}} - \abs{ \widehat{R}^* - R^*} \\
	& \ge -2\epsilon.
\end{align*}
Rescaling $\epsilon$ we finish the proof.
\end{proof}

\section{Proof of Theorem~\ref{thm:why_pop_risk_minimizer}}
\label{sec:proof_embedding}
\begin{thm}[Restatement of Theorem~\ref{thm:why_pop_risk_minimizer}]
Let $\nu$ be a distribution supported on $\states_{h-1}\times\actions$ and let $\tilde{\nu}$ be a distribution over $(s,a,x')$ defined by sampling $(s,a)\sim\nu$, $s'\sim p(\cdot\given s,a)$, and $x'\sim q(\cdot\given s')$. Let
\begin{align*}
\vg_h\in\argmin_{\vg \in \gclass}
\expect_{\tilde{\nu}}\Bracks{\norm{\vg(x') - \ve_{(s,a)}}^2}.
\end{align*}
Then, under Assumption~\ref{asmp:realizability}, every minimizer $\vg_h$ satisfies
$\vg_h(x')=\vb_\nu(s')$ for all $x'\in\contexts_{s'}$ and $s'\in\states_h$.
\end{thm}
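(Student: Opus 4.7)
The statement is a standard Bayes-regressor characterization for a vector-valued least squares problem, specialized to the BMDP setting. The plan is to show that the pointwise conditional mean $\expect_{\tilde{\nu}}[\ve_{(s,a)} \mid x']$ equals $\vb_\nu(s')$ for every $x' \in \contexts_{s'}$, and then to invoke Assumption~\ref{asmp:realizability} to argue that this Bayes-optimal regressor lies in $\gclass$, forcing every minimizer to coincide with it $\tilde{\nu}$-a.s.

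First I would use the standard bias/variance decomposition of squared loss. For any $\vg:\contexts\to\simplex_{MK}$,
\begin{align*}
\expect_{\tilde{\nu}}\Bracks{\norm{\vg(x')-\ve_{(s,a)}}^2}
= \expect_{\tilde{\nu}}\Bracks{\norm{\vg(x')-\vm(x')}^2} + \expect_{\tilde{\nu}}\Bracks{\norm{\vm(x')-\ve_{(s,a)}}^2},
\end{align*}
where $\vm(x') := \expect_{\tilde{\nu}}[\ve_{(s,a)} \mid x']$. The second term does not depend on $\vg$, so any $\vg_h\in\argmin$ must minimize the first term, i.e., satisfy $\vg_h(x')=\vm(x')$ $\tilde{\nu}$-almost surely.

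Next I would compute $\vm(x')$ explicitly using the block structure. For $x'\in\contexts_{s'}$, Assumption~\ref{asmp:block} guarantees that $x'$ uniquely determines the emitting state $s'$, so conditioning on $x'$ is equivalent to conditioning on $s'$ under $\tilde{\nu}$. Applying Bayes' rule to the sampling procedure $(s,a)\sim\nu$, $s'\sim p(\cdot\given s,a)$, $x'\sim q(\cdot\given s')$ gives
\begin{align*}
\prob_{\tilde{\nu}}\bigParens{(s,a)\given x'}
= \prob_{\tilde{\nu}}\bigParens{(s,a)\given s'}
= \frac{p(s'\given s,a)\,\nu(s,a)}{\sum_{\ts,\ta} p(s'\given \ts,\ta)\,\nu(\ts,\ta)}
= b_\nu(s,a\given s').
\end{align*}
Summing $\ve_{(s,a)}$ against this distribution yields $\vm(x')=\vb_\nu(s')$ for every $x'\in\contexts_{s'}$.

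Finally, I would use Assumption~\ref{asmp:realizability} applied to the embedding $\emb(s'):=\vb_\nu(s')\in\simplex_{MK}$ to conclude that there is some $\vg^\star\in\gclass$ with $\vg^\star(x')=\vb_\nu(s')$ for all $x'\in\contexts_{s'}$. Since $\vg^\star$ makes the first term of the decomposition equal to zero, it is a minimizer, and any other minimizer $\vg_h\in\gclass$ must satisfy $\vg_h(x')=\vb_\nu(s')$ for $\tilde{\nu}$-almost every $x'$; in particular, for all $x'\in\contexts_{s'}$ up to a set of $q(\cdot\given s')$-measure zero, which is the sense of equality used throughout the paper. The only subtlety is ensuring that all $s'\in\states_h$ are actually reached under $\tilde{\nu}$, so that the minimizer is pinned down on every block; this holds on the support of $\tilde{\nu}$ by definition, and outside the support any realizable extension is consistent with the statement, so no real obstacle arises.
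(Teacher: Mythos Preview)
Your proposal is correct and follows essentially the same approach as the paper: identify $\vb_\nu(s')$ as the conditional expectation $\expect_{\tilde{\nu}}[\ve_{(s,a)}\mid x']$ (using the block structure to pass from $x'$ to $s'$), then invoke Assumption~\ref{asmp:realizability} to place this Bayes regressor in $\gclass$. If anything, you are slightly more explicit than the paper in arguing uniqueness of the minimizer via the bias--variance decomposition, whereas the paper states the conditional-mean optimality and realizability and leaves the ``every minimizer'' conclusion implicit.
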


\begin{proof}[Proof of Theorem~\ref{thm:why_pop_risk_minimizer}]
First, note that $\vb_\nu\left(s'\right)$ is the conditional mean of $(s,a) \in \states_{h-1} \times \actions$ given $s'$. By the optimality of conditional mean in minimizing the least squares loss, we have
\begin{align*}
\vb_\nu(s') = \argmin_{V \in \mathbb{R}^{MK}}\expect_{ (s,a,s') \sim \tilde{\nu}}\left[\norm{V - e_{(s,a)}}_2^2~\mid~s_h=s'\right] .
\end{align*}	
Now we consider the embedding function $\emb(s') = \vb_\nu(s')$. Since $\emb$ is a tabular mapping from $\states_h$ to $\simplex_{MK}$, it minimizes the unconditional squared loss, under any distribution over $s'$. Furthermore, by Assumption~\ref{asmp:realizability}, we know there exists one $\vg_h \in \gclass$ which satisfies that \begin{align*}
	\vg_h(x') = \emb(s') \text{ if } x' \sim q(\cdot\given s').
\end{align*}
Combing these facts we have \begin{align*}
\vg_h\in\argmin_{\vg \in \gclass}
\expect_{\tilde{\nu}}\Bracks{\norm{\vg(x') - \ve_{(s,a)}}^2}
\end{align*}
where we have moved from conditional to unconditional expectations in the last step using the tabular structure of $\phi$ as discussed above. This concludes the proof.
\end{proof}

\section{Justification of Assumption~\ref{asmp:identifiability} and Dependency on $\mu_{\min}$}
\label{sec:justification_pomdp}
The following theorem shows some separability assumption is necessary for exploration methods based on the backward conditional probability representation.
\begin{thm}[Necessary and Sufficient Condition for State Identification Using Distribution over Previous State Action Pair]
	\label{thm:condition_population}
	Fix $h \in \{2,\ldots,H+1\}$ and let $s_1',s_2' \in \states_{h}$.
	\begin{itemize}
		\item Let $U$ denote the uniform distribution over $\states_{h-1} \times \actions$, if the backward probability satisfies $
		\vb_U(s_1') = \vb_U(s_2')
		$, then for any $\nu \in \triangle (\states_{h-1} \times \actions)$ we have \begin{align*}
		\vb_\nu(s_1') = \vb_\nu(s_2').
		\end{align*}
		\item If the transition probability satisfies
		$
		\vb_U(s_1')  \neq \vb_U(s_2')
		$,
		then for any $\nu \in \triangle (\states_{h-1} \times \actions)$ that satisfies $\nu(s,a) > 0$ for any $(s,a) \in \states_{h-1} \times \actions $,we have \begin{align*}
		\vb_\nu(s_1')  \neq \vb_\nu(s_2').
		\end{align*}
	\end{itemize}
\end{thm}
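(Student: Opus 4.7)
The plan is to exploit the simple structure of the backward probability formula~\eqref{eqn:bnu}: the $(s,a)$-coordinate of $\vb_\nu(s')$ is proportional (in $(s,a)$) to $p(s'\given s,a)\nu(s,a)$, with the proportionality constant depending only on $s'$ (it is the normalizer). So the equality of two such vectors is really an equality of two proportional nonnegative functions on $\states_{h-1}\times\actions$.

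The first step is to write out the equivalence
\[
\vb_U(s_1')=\vb_U(s_2') \iff \exists\, c>0:\; p(s_1'\given s,a) = c\cdot p(s_2'\given s,a) \text{ for all }(s,a),
\]
with $c=\bigl(\sum_{\tilde s,\tilde a}p(s_1'\given \tilde s,\tilde a)\bigr)/\bigl(\sum_{\tilde s,\tilde a}p(s_2'\given \tilde s,\tilde a)\bigr)$. The forward direction just reads off the definition (the uniform weights cancel); the reverse direction follows because the normalizers on the two sides of the equality are proportional by the same factor $c$.

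For the first bullet (stability of the kernel of the map $s'\mapsto \vb_\bullet(s')$ under changing $\nu$), I would substitute the proportionality $p(s_1'\given s,a)=c\,p(s_2'\given s,a)$ into the definition of $\vb_\nu(s_1')$:
\[
\vb_\nu(s,a\given s_1')
=\frac{p(s_1'\given s,a)\nu(s,a)}{\sum_{\tilde s,\tilde a} p(s_1'\given \tilde s,\tilde a)\nu(\tilde s,\tilde a)}
=\frac{c\,p(s_2'\given s,a)\nu(s,a)}{c\sum_{\tilde s,\tilde a} p(s_2'\given \tilde s,\tilde a)\nu(\tilde s,\tilde a)}
=\vb_\nu(s,a\given s_2'),
\]
and the factor $c$ cancels regardless of $\nu$. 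No positivity of $\nu$ is needed here (the denominators are automatically positive because all states are assumed reachable).

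For the second bullet I would argue the contrapositive: assume $\vb_\nu(s_1')=\vb_\nu(s_2')$ for some $\nu$ with $\nu(s,a)>0$ everywhere, and deduce $\vb_U(s_1')=\vb_U(s_2')$. Writing the equality coordinatewise and letting $Z_i=\sum_{\tilde s,\tilde a}p(s_i'\given \tilde s,\tilde a)\nu(\tilde s,\tilde a)$, we get $p(s_1'\given s,a)\nu(s,a)/Z_1 = p(s_2'\given s,a)\nu(s,a)/Z_2$ for every $(s,a)$. Because $\nu(s,a)>0$, we may cancel it, obtaining the global proportionality $p(s_1'\given s,a)=(Z_1/Z_2)\,p(s_2'\given s,a)$, which by the equivalence established in the first step forces $\vb_U(s_1')=\vb_U(s_2')$.

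The only potential subtlety is keeping track of nondegeneracy: all normalizers $Z_1,Z_2$ and the uniform analogues must be strictly positive. This is guaranteed by the standing assumption that every latent state is reachable (so at least one $(s,a)$ has $p(s'\given s,a)>0$), combined with $\nu$ being positive on the whole product space for the second bullet. Beyond this bookkeeping, the argument is essentially algebraic, so I expect no serious obstacle.
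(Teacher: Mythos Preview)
Your proposal is correct and follows essentially the same approach as the paper: both exploit that $\vb_\nu(s')$ is the normalization of the vector with entries $p(s'\mid s,a)\nu(s,a)$, so that $\vb_\nu(s')\propto\diag(\nu)\vb_U(s')$, and then argue the first claim directly and the second by contrapositive using that $\diag(\nu)$ is invertible when $\nu>0$. The only cosmetic difference is that the paper works in matrix/proportionality notation while you write out the constant $c$ and the coordinates explicitly.
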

\begin{proof}[Proof of Theorem~\ref{thm:condition_population}]
By Bayes rule, for any $s'\in\states_h$ we have
\[
b_\nu(s, a|s') \propto \, \prob_\nu(s,a) \, p(s'|s,a).
\]
Let $\vp_{s'} := [p(s'\given s,a)]_{(s,a)\in\states_{h-1}\times\actions}$. In matrix form,
\begin{align}
\vb_\nu(s') &~ \propto \diag(\nu) \vp_{s'} \propto \diag(\nu) \, \frac{1}{MK} \,\vp_{s'} \nonumber\\
&~ \propto \diag(\nu) \diag(U) \vp_{s'}
\propto \diag(\nu) \vb_U(s'). \label{eqn:parallel}
\end{align}
When $\vb_U(s_1') = \vb_U(s_2')$, $\diag(\nu) \vb_U(s_1') = \diag(\nu) \vb_U(s_2')$, which implies that $\vb_\nu(s_1') = \vb_\nu(s_2')$. This proves the first claim.

For the second claim, assume towards contradiction that there exists $\nu>0$ such that $\vb_\nu(s_1') = \vb_\nu(s_2')$. From Equation~\eqref{eqn:parallel} we have
\[
\vb_U(s_1') \propto \diag(\nu)^{-1} \vb_\nu(s_1') = \diag(\nu)^{-1} \vb_\nu(s_2') \propto \vb_U(s_2'),
\]
which implies that $\vb_U(s_1') = \vb_U(s_2')$ and contradicts the condition of the claim.
\end{proof}
It shows if the backward probability induced by the uniform distribution over previous state-action pair cannot separate states at the current level, then the backward probability induced by any other distribution cannot do this either.
Therefore, if in Assumption~\ref{asmp:identifiability}, $\gamma = 0$, by Theorem~\ref{thm:condition_population}, there is no way to differentiate $s_1'$ and $s_2'$.

The next lemma shows if there exists a margin induced by the uniform distribution, for any non-degenerate distribution we also have a margin.
\begin{lem}\label{lem:roll_in_margin}
	Let $\nu \in \triangle\left(\states_{h-1}\times \actions\right)$ with $\nu(s,a) \ge \tau$.
	Then under the Assumption~\ref{asmp:identifiability} we have for any $s_1',s_2' \in \states_{h}$\[\norm{\vb_{\nu}(s_1')-\vb_{\nu}(s_2')}_1 \ge \frac{\tau \gamma}{2}.\]
\end{lem}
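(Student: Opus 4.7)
My plan is to leverage Equation~\eqref{eqn:parallel} from the proof of Theorem~\ref{thm:condition_population}, which gives the key identity $\vb_\nu(s') = \diag(\nu)\vb_U(s')/Z_\nu(s')$, where $Z_\nu(s') := \langle\nu,\vb_U(s')\rangle$ is the normalization constant. Note that $Z_\nu(s')\in(0,1]$ since $\nu$ is a probability distribution and $\vb_U(s')\in\simplex_{MK}$. I will fix $s_1',s_2'\in\states_h$, set $\tilde{\vp}_i := \diag(\nu)\vb_U(s_i')$ and $Z_i := Z_\nu(s_i') = \|\tilde{\vp}_i\|_1$, and assume without loss of generality that $Z_1\ge Z_2$.

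The next step is to decompose
\[
\vb_\nu(s_1') - \vb_\nu(s_2')
=\underbrace{\frac{\tilde{\vp}_1-\tilde{\vp}_2}{Z_2}}_{=:X}
+\underbrace{\tilde{\vp}_1\Bigl(\tfrac{1}{Z_1}-\tfrac{1}{Z_2}\Bigr)}_{=:Y},
\]
and observe that $Y$ is component-wise non-positive (since $\tilde{\vp}_1\ge 0$ and $Z_1\ge Z_2$). The reverse triangle inequality therefore yields $\bigNorm{\vb_\nu(s_1')-\vb_\nu(s_2')}_1 \ge \|X\|_1 - \|Y\|_1$, where $\|Y\|_1 = Z_1(1/Z_2-1/Z_1) = (Z_1-Z_2)/Z_2$ and $\|X\|_1 = \|\tilde{\vp}_1-\tilde{\vp}_2\|_1/Z_2$.

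The main technical step is to bound these two quantities simultaneously. Let $f := \vb_U(s_1') - \vb_U(s_2')$ and write $f = f^+ - f^-$ for its positive and negative parts. Because $f$ is the difference of two probability vectors, $\sum f = 0$, and hence $\sum f^+ = \sum f^- = \|f\|_1/2 \ge \gamma/2$ by Assumption~\ref{asmp:identifiability}. Setting $a := \langle \nu,f^+\rangle$ and $b := \langle\nu,f^-\rangle$, the assumption $\nu(s,a)\ge\tau$ gives $a,b\ge\tau\gamma/2$; moreover $\|\tilde{\vp}_1-\tilde{\vp}_2\|_1 = a+b$ and $Z_1-Z_2 = a-b \ge 0$ (the last non-negativity following from our WLOG choice $Z_1\ge Z_2$).

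Combining the pieces,
\[
\bigNorm{\vb_\nu(s_1')-\vb_\nu(s_2')}_1
\ge \frac{(a+b)-(a-b)}{Z_2} = \frac{2\min(a,b)}{Z_2}
\ge \frac{2\cdot(\tau\gamma/2)}{1} = \tau\gamma,
\]
which is in fact stronger than the claimed bound $\tau\gamma/2$. The only subtle point will be the identity $Z_1-Z_2 = a-b$, which I will verify directly from the definition of $Z_i = \langle \nu,\vb_U(s_i')\rangle$. Everything else is a straightforward triangle-inequality calculation once the decomposition above is chosen, so I expect no serious obstacle beyond keeping track of signs in the decomposition.
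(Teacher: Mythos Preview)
Your argument is correct, and in fact yields the sharper bound $\tau\gamma$ rather than the paper's $\tau\gamma/2$. The paper takes a different route: it writes
\[
\vb_\nu(s_1')-\vb_\nu(s_2')=\frac{1}{Z_1}\,\diag(\nu)\Bigl(\vb_U(s_1')-\tfrac{Z_1}{Z_2}\,\vb_U(s_2')\Bigr),
\]
then uses the entrywise bound $\norm{\diag(\nu)\vw}_1\ge(\min_j\nu_j)\,\norm{\vw}_1$ together with $Z_1\le 1$, and finally invokes the auxiliary Lemma~\ref{lem:dist_mult_bound} (which says $\norm{u-\alpha v}_1\ge\gamma/2$ for any $\alpha>0$ whenever $u,v\in\simplex_d$ with $\norm{u-v}_1=\gamma$) to conclude. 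Your decomposition into $X+Y$ and the positive/negative-part analysis of $f=\vb_U(s_1')-\vb_U(s_2')$ replaces that lemma entirely: by tracking $a=\langle\nu,f^+\rangle$ and $b=\langle\nu,f^-\rangle$ separately you avoid the factor-of-two loss that Lemma~\ref{lem:dist_mult_bound} incurs, at the cost of a slightly longer bookkeeping step. The identities $\norm{\tilde{\vp}_1-\tilde{\vp}_2}_1=a+b$ and $Z_1-Z_2=a-b$ are indeed immediate from the disjoint supports of $f^+,f^-$ and the definition of $Z_i=\langle\nu,\vb_U(s_i')\rangle$, so there is no hidden subtlety. Both proofs rely on the same starting identity $\vb_\nu(s')=\diag(\nu)\vb_U(s')/\norm{\diag(\nu)\vb_U(s')}_1$ from Equation~\eqref{eqn:parallel}; yours is a bit more hands-on but self-contained and tighter.
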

\begin{proof}[Proof of Lemma~\ref{lem:roll_in_margin}]
	Recall \[
	\vb_{\nu}(s_1') = \frac{\diag(\nu)\vb_U(s_1')}{\norm{\diag(\nu)\vb_U(s_1')}_1}.
	\]
	Therefore we have \begin{align*}
	\norm{\vb_{\nu}(s_1')-\vb_{\nu}(s_2')}_1 = &\norm{
		\frac{\norm{\diag(\nu) \left(
				\vb_U(s_1') - \frac{\norm{\diag(\nu)\vb_U(s_1')}_1}{\norm{\diag(\nu)\vb_U(s_2')}_1}\cdot \vb_U(s_2')
				\right)}_1}{\norm{\diag(\nu)\vb_U(s_1')}_1}
	}_1\\
	\ge & \min_{(s,a)}\nu(s,a) \norm{\left(
		\vb_U(s_1') - \frac{\norm{\diag(\nu)\vb_U(s_1')}_1}{\norm{\diag(\nu)\vb_U(s_2')}_1}\cdot \vb_U(s_2')
		\right)}_1 \\
	\ge & \frac{\tau\gamma}{2}
	\end{align*}
	where the first inequality we used \holder's inequality and the fact that $\norm{\diag(\nu)\vb_U(s_1')}_1 \le 1$ and the second inequality we used Lemma~\ref{lem:dist_mult_bound}.
\end{proof}

The following example shows the inverse dependency on $\mu_{\min}$ is unavoidable.
Consider the following setting.
At level $h-1$, there are two states $\states_{h-1}=\left\{s_1,s_2\right\}$ and there is only one action $\actions = \left\{a\right\}$.
There are two states at level $h = \left\{s_1',s_2'\right\}$.
The transition probability is \begin{align*}
p\left(\cdot|\cdot\right) = \begin{pmatrix}
0.5 & 0.5 \\
0.1 & 0.9
\end{pmatrix}.
\end{align*} where the first row represents $s_1$, the second row represents $s_2$, the first column represents $s_1'$ and the second column represents $s_2'$.
By Theorem~\ref{thm:condition_population}, because the transition probability from $s_1$ to $s_1'$ and $s_2'$, we can only use $s_2$ to differentiate $s_1,s_2'$.
However, if $\mu(s_2) = \exp(-\frac{1}{\epsilon})$, i.e., for all policy, the probability of getting to $s_2$ is exponentially small, then we cannot use $s_2$ for exploration and thus we cannot differentiate $s_1'$ and $s_2'$.

\section{Proof of Theorem~\ref{thm:stochastic_pomdp_sample_complexity} and Claim~\ref{claim:stoc}}
\label{sec:proof_sketch}
We prove the theorem by induction. We first provide a high-level outline of the proof, and then present the technical details. At each level $h \in [H]$, we establish that Claim~\ref{claim:stoc} holds. For convenience, we break up the claim into three conditions corresponding to its different assertions, and establish each in turnup to a small failure probability. The first one is on the learned states and the decoding function.

\begin{condition}[Bijection between learned and true states]\label{cond:bijection}
There exists $\epsdec < \frac12 $ such that there is a bijective mapping $\alpha_h~:~\widehat{\states}_h \to  \states_h$ for which
\begin{align}
\prob_{x \sim q(\cdot\given\alpha_h(\hats))} \left[\hatf_h(x) = \hat{s}\right] \ge 1 - \epsdec.
\label{eqn:one-one}
\end{align}
\end{condition}
%

In words, this condition states that every estimated latent state $\hats$ roughly corresponds to a true latent state $\alpha_h(\hats)$, when we use the decoding function $\hatf_h$. This is because all but an $\epsdec$ fraction of contexts drawn from $\alpha_h(\hats)$ are decoded to their true latent state, and for each latent state $s$, there is a distinct estimated state $\alpha_h^{-1}(s)$ as the map $\alpha_h$ is a bijection.
For simplicity, we define $\vp(s,a) \in \mathbb{R}^{M}$ to be the forward transition distribution over $\states_h$ for $s \in \states_{h-1}$ and $a \in \actions$. We abuse notation to similarly use $\vp(\hat{s},a) \in \mathbb{R}^{M}$ to be the vector $\{\prob(s\given \hats,a)\}_{s \in \states_h}$ of conditional probabilities $\states_h$ for $\hats \in \hatset_{h-1}$ and $a \in \actions$.
Note that unlike $s \in \states_{h-1}$, $\hats \in \hatset_{h-1}$ is not a Markovian state and hence the conditional probability vector $\vp(\hats,a)$ depends on the specific distribution over $\hatset_{h-1}\times \actions$. In the following we will use $\vp^\nu(\hats,a)$ to emphasize this dependency where $\nu$ is the distribution, where $\nu$ is a distribution over $\hatset_{h-1}\times \actions$.

In the proof, we often compare two vectors indexed by $\states_h$ and $\hatset_h$.
We will assume the order of the indices of these two vectors are matched according to $\alpha_h$.

The second condition is on our estimated transition probability.
This condition ensures our estimation has small error.
\begin{condition}[Approximately Correct Transition Probability]\label{cond:trans_prob}
For any $\hats \in \widehat{\states}_{h-1}$, $a \in \actions$,  we have  \[
\norm{\hvp(\hats,a) - \vp(s,a)}_1 \le \epsp \triangleq
\min\left\{\frac{\mu_{\min}\gamma}{10M^3HK},\frac{\epsilon\mu_{\min}}{10H}\right\}.\]
\end{condition}

The following lemma shows if the induction hypotheses hold, then we can prove main theorem.
\begin{lem}\label{lem:from_induction_hypo_to_thm}
	Assume Condition~\ref{cond:bijection} and~\ref{cond:trans_prob} hold for all $h \in [H]$.
	For any $h \in [H]$ and $s \in \states_h$,  there exists $\hats \in \hatset_h$ that the  policy $\pi_{\hats}$ satisfies $\prob^{\pi_{\hat{s}}}(s) \ge \mu(s) -2 H\epsdec-2H\epsp$.
\end{lem}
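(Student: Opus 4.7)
The plan is to prove the claim by a coupling/simulation argument between the true BMDP trajectory under the executed policy $\pi_{\hats}$ and a trajectory in the empirical MDP on $\hatset$ under the DP policy $\psi_{\hats}$ with transitions $\hvp$. Recall that $\pi_{\hats}(x_\ell) = \psi_{\hats}(\hf_\ell(x_\ell))$, where $\psi_{\hats}$ is the output of Algorithm~\ref{algo:dynamic} run on $\hvp$. There are two sources of discrepancy: per-step decoding errors (bounded by $\epsdec$ via Condition~\ref{cond:bijection}) and per-step transition errors (bounded by $\epsp$ via Condition~\ref{cond:trans_prob}). Telescoping over the at most $H$ levels will give a total discrepancy of at most $H\epsdec + H\epsp$, and applying this twice together with the DP optimality of $\psi_{\hats}$ recovers the claim.

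The key step is a \emph{simulation lemma}: for any policy $\psi:\hatset_{[h-1]}\to\actions$ and any $\hats\in\hatset_h$ with $s=\alpha_h(\hats)$, if we set $\pi_\psi(x_\ell):=\psi(\hf_\ell(x_\ell))$, then $|\prob^{\pi_\psi}(s)-\hat{\prob}^{\psi}(\hats)|\le H\epsdec+H\epsp$. To prove it, I would introduce the latent-state policy $\bar\psi(s_\ell):=\psi(\alpha_\ell^{-1}(s_\ell))$ and a coupling of the trajectory of $\pi_\psi$ with that of $\bar\psi$ in the true BMDP: on the event $E=\{\hf_\ell(x_\ell)=\alpha_\ell^{-1}(s_\ell)\text{ for all }\ell\in[h-1]\}$, the two policies select identical actions at every step, so the trajectories can be made identical. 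Condition~\ref{cond:bijection} combined with the conditional independence of contexts given latent states and a union bound yields $\prob[E^c]\le H\epsdec$, so $|\prob^{\pi_\psi}(s)-\prob^{\bar\psi}(s)|\le H\epsdec$. A standard performance-difference argument, using $\|\vp(s,a)-\hvp(\alpha^{-1}(s),a)\|_1\le\epsp$ from Condition~\ref{cond:trans_prob} telescoped over the $h-1$ transitions, then gives $|\prob^{\bar\psi}(s)-\hat{\prob}^{\psi}(\hats)|\le H\epsp$ (identifying $\states_\ell$ and $\hatset_\ell$ via $\alpha_\ell$).

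With the simulation lemma, I apply it directly to $\psi=\psi_{\hats}$ to get $\prob^{\pi_{\hats}}(s)\ge\hat{\prob}^{\psi_{\hats}}(\hats)-H\epsdec-H\epsp$. To lower-bound $\hat{\prob}^{\psi_{\hats}}(\hats)$, I construct a comparison policy on $\hatset$: by the block-Markov structure, the optimal context policy for reaching $s$ can be replaced by one that depends only on the latent state, so there exists $\bar\pi^*_s:\states\to\actions$ with $\prob^{\bar\pi^*_s}(s)=\mu(s)$. Setting $\psi^*_s(\hats'):=\bar\pi^*_s(\alpha_\ell(\hats'))$ for $\hats'\in\hatset_\ell$ defines a policy on $\hatset$, and since no decoding is involved, the performance-difference argument alone gives $\hat{\prob}^{\psi^*_s}(\hats)\ge\mu(s)-H\epsp$. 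By the DP optimality of $\psi_{\hats}$ for the empirical reach probability, $\hat{\prob}^{\psi_{\hats}}(\hats)\ge\hat{\prob}^{\psi^*_s}(\hats)\ge\mu(s)-H\epsp$. Combining, $\prob^{\pi_{\hats}}(s)\ge\mu(s)-H\epsdec-2H\epsp\ge\mu(s)-2H\epsdec-2H\epsp$, as desired.

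The main technical subtlety is setting up the coupling in the simulation lemma cleanly, since $\pi_\psi$ is context-based while $\bar\psi$ and $\psi$ are state-based; once the coupling is in place and the event $E$ is analyzed, the rest reduces to textbook MDP performance-difference bounds plus union bounds over the $H$ levels. No non-trivial probabilistic estimates are needed beyond Conditions~\ref{cond:bijection} and~\ref{cond:trans_prob} from the induction hypothesis.
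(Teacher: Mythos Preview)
Your proposal is correct and follows essentially the same route as the paper: the paper splits the argument into a decoding-error step (Lemma~\ref{lem:diff_true_learned}, your coupling bound $|\prob^{\pi_\psi}(s)-\prob^{\bar\psi}(s)|\le H\epsdec$) and a transition-error step (Lemma~\ref{lem:error_prop}, your performance-difference bound), then sandwiches the DP optimality of $\psi_{\hats}$ exactly as you do. Your coupling packaging even yields the slightly sharper $H\epsdec+2H\epsp$ (versus the paper's $2H\epsdec+2H\epsp$ from a triangle inequality), which is well within the stated bound.
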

Based on this lemma, since $\epsdec \le \frac{\epsilon}{3H}$ and $\epsp \le \frac{\epsilon}{3H}$, we prove that the algorithm outputs a policy cover with parameter $\epsilon$, completing the proof of Theorem~\ref{thm:stochastic_pomdp_sample_complexity}.

Note that Condition~\ref{cond:bijection}, Condition~\ref{cond:trans_prob} and Lemma~\ref{lem:from_induction_hypo_to_thm} together imply Claim~\ref{claim:stoc}.

In the rest of this section, we prove focus on establishing that these conditions hold inductively.

\paragraph{Analysis of Base Case $h=1$}
Since by assumption we know we are starting from $s_{1}$ and we set $\hatset_1 = \{s_1\}$,
Conditions~\ref{cond:bijection} and~\ref{cond:trans_prob} directly hold. Note that the transition operator in this case simply corresponds to the degenerate distribution $\vp_1$ with $\vp_1(s_1) = \hvp_1(s_1) = 1$.

Now supposing that the induction hypotheses hold for $h_1 = 1,\ldots,h-1$, we focus on level $h$.
We next show that Conditions~\ref{cond:bijection} and~\ref{cond:trans_prob} hold with probability at least $1-\frac{\delta}{H}$.
This suffices to ensure an overall failure probability of at most $1-\delta$ as asserted in Theorem~\ref{thm:stochastic_pomdp_sample_complexity} via a union bound.

\paragraph{Establishing Condition~\ref{cond:bijection}.} In order to establish the condition, we need to show that our decoding function $\hatf_h$ predicts the underlying latent state correctly almost always. We do this in two steps. Since the functions $\hatf_h$ are derived based on $\hvg_h$ and $\embh_h$, we analyze the properties of these two objects in the following two lemmas. In order to state the first lemma, we need some additional notation. Note that $\eta_h$ and $\hat{f}_{h-1}$ induce a distribution over $\states_{h-1} \times \hatset_{h-1} \times \actions \times \states_h$. We denote this distribution as $\nu_{h}$. With this distribution, we define the conditional backward probability $\hvb_{\nu_h}$: $\states_h \rightarrow\simplex\left(\widehat{\states}_{h-1} \times \actions\right)$ as
\begin{equation}
\hatb_{\nu_h}(\hat{s},a\given s_1') = \frac{p_{h-1}^{\nu_h}(s_1'\given\hat{s},a) \prob^{\nu_h}(\hat{s},a)}{\sum_{\hat{s}_1,a_1} p_{h-1}^{\nu_h}(s_1'\given\hats_1,a_1)\prob^{\nu_h}(\hats,a_1)}.
\label{eqn:back_stoch}
\end{equation}
%
Recall that $\vp^{\nu_h}_{h-1}$ above refers to the distribution over $s_1'$ according the transition dynamics, when $\hats,a$ are induced by $\nu_h$.

With this notation, we have the following lemma.
\begin{lem}
\label{lem:identification_stoch}
  Assume $\epsdec \le \frac{\mu_{\min}^3\gamma}{100M^4K^3}$. Then the distributions $\hatb_{\nu_h}(\hat{s},a|s')$ are well separated for any pair $s_1',s_2' \in \states_h$:
  \begin{align}
    \norm{\hvb_{\nu_h}(s_1') - \hvb_{\nu_h}(s_2')}_1 \ge \frac{\mu_{\min}\gamma}{3MK}.
    \label{eqn:separation_stoch}
  \end{align}
  Furthermore, if $\Nexp = \Omega\left(\frac{M^3K^3}{\epsdec\mu_{\min}^3\gamma^2}\log\left(\frac{\abs{\cG}H}{\delta}\right)\right)$, with probability at least $1-\delta/H$, for every $s' \in \states_{h}$, $\hvg_h$ satisfies \begin{align}
	\prob_{x' \sim q(\cdot\given s')}\left[\norm{\hvg_h(x') - \hvb_{\nu_h}(s')}_{1} \ge \frac{\gamma\mu_{\min}}{100MK}\right] \le \epsdec.
\label{eqn:good_g_stoch}
	\end{align}
\end{lem}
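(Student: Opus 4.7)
The lemma splits into a structural claim (separation of the targets $\hvb_{\nu_h}(s')$ in $\ell_1$) and a statistical claim (finite-sample concentration of $\hvg_h$ around those targets). I would establish them in that order and deduce the per-state tail bound via Markov's inequality and a union bound.

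\textbf{Separation.} Since $\Pi_{h-1}$ is an $\epsilon$-policy cover (inductive hypothesis of Claim~\ref{claim:stoc}) and $\hatf_{h-1}$ is accurate (Condition~\ref{cond:bijection} at level $h-1$), the uniform mixing $\eta_h=U(\Pi_{h-1})\odot U(\actions)$ makes $\nu_h(\hats,a)\ge\tau$ for every $(\hats,a)$, with $\tau=\Omega(\mu_{\min}/(MK))$; let $\nu_h^\star$ denote the analogous distribution on $\states_{h-1}\times\actions$ one would obtain with perfect decoding at level $h-1$, so $\nu_h^\star(s,a)\ge\tau$ too, and the pushforward of $\nu_h$ through $\alpha_{h-1}$ differs from $\nu_h^\star$ by $O(\epsdec)$ in $\ell_1$. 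Assumption~\ref{asmp:identifiability} together with Lemma~\ref{lem:roll_in_margin} then yields $\norm{\vb_{\nu_h^\star}(s_1')-\vb_{\nu_h^\star}(s_2')}_1\ge \tau\gamma/2$. Plugging the TV bound between $\vp^{\nu_h}(\cdot\given \hats,a)$ and $\vp(\cdot\given \alpha_{h-1}(\hats),a)$ into the Bayes ratio~\eqref{eqn:back_stoch} shows $\norm{\hvb_{\nu_h}(s')-\vb_{\nu_h^\star}(s')}_1=O(\epsdec/\tau)$, which the hypothesis $\epsdec\le\mu_{\min}^3\gamma/(100 M^4K^3)$ makes strictly smaller than $\tau\gamma/6$, establishing~\eqref{eqn:separation_stoch}.

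\textbf{ERM accuracy.} For~\eqref{eqn:good_g_stoch}, I would apply Theorem~\ref{thm:why_pop_risk_minimizer} with $(\hats,a)$ playing the role of $(s,a)$ under $\nu_h$; Assumption~\ref{asmp:realizability} still applies because $\hvb_{\nu_h}(s')$ is tabular in the true latent state $s'$ thanks to the block structure, so the population minimizer of the line-9 least-squares objective is $\vg^\star(x')=\hvb_{\nu_h}(s')$ on $x'\in\contexts_{s'}$. A Bernstein-based ERM argument identical to the proof of Theorem~\ref{thm:sample_complexity_regression} (excess loss bounded by $4$, variance bounded by $16$ times the excess risk) then gives, with probability at least $1-\delta/H$,
\[
\expect_{\tnu_h}\bigBracks{\norm{\hvg_h(x')-\hvb_{\nu_h}(s')}_2^2}\;\le\;\epsilon_0 \;=\;\tO\Parens{\frac{\log(\card{\cG}H/\delta)}{\Nexp}}.
\]
Since $\prob^{\nu_h}(s')=\Omega(\mu_{\min}^2/(MK))$ for every $s'\in\states_h$ (achieved via the cover policy reaching the predecessor most useful for $s'$, followed by the right uniform action), the conditional squared $\ell_2$ error at any fixed $s'$ is at most $\epsilon_0/\prob^{\nu_h}(s')$; passing to $\ell_1$ via $\norm{\cdot}_1^2\le MK\,\norm{\cdot}_2^2$ and Markov's inequality at the threshold $t=\gamma\mu_{\min}/(100MK)$ bounds the tail by $MK\epsilon_0/(\prob^{\nu_h}(s')\cdot t^2)$. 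Requiring this to be at most $\epsdec$ fixes $\Nexp$ at the value stated in the lemma, and a final union bound over $s'\in\states_h$ closes the argument.

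\textbf{Main obstacle.} The margin we must preserve is only $\Theta(\mu_{\min}\gamma/(MK))$, while two distinct error sources threaten it: decoding slippage at level $h-1$ perturbs the targets $\hvb_{\nu_h}$ themselves, and finite-sample regression noise perturbs $\hvg_h$ away from those targets. Keeping each shift strictly below the margin is precisely what forces the stringent hypothesis $\epsdec=O(\mu_{\min}^3\gamma/(M^4K^3))$ together with the $\mu_{\min}^{-3}\gamma^{-2}$ scaling in $\Nexp$; the bulk of the work in the proof is carrying these two error budgets through the Bayes-ratio calculation and the $\ell_2\!\to\!\ell_1$ conversion without losing polynomial factors.
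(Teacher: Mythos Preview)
Your plan matches the paper's proof almost exactly: separation via a perturbation of the backward probabilities (Lemma~\ref{lem:roll_in_margin} on the true-state version, then control the shift to the $\hats$-indexed version), followed by a Bernstein ERM bound, a per-state conditioning step, and the $\ell_2\!\to\!\ell_1$/Markov conversion. Two places in your sketch are quantitatively loose and, if carried through as written, would not reproduce the stated constants.

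\emph{Bayes-ratio perturbation.} Your claim $\norm{\hvb_{\nu_h}(s')-\vb_{\nu_h^\star}(s')}_1=O(\epsdec/\tau)$ understates the blowup. The decoding slip at level $h-1$ perturbs the conditional forward law by $\abs{p^{\nu_h}(s'\mid\hats,a)-p(s'\mid\alpha_{h-1}(\hats),a)}=O(M\epsdec/\mu_{\min})$ (a ratio-of-probabilities effect, Lemma~\ref{lem:scalar_perturbation}), so the unnormalized joint differs by $O(M^2K\epsdec/\mu_{\min})$ in $\ell_1$; dividing by the normalizer $\prob^{\nu_h}(s')\ge\mu_{\min}/(2MK)$ (Lemma~\ref{lem:normalization_perturbation}) yields $O(M^3K^2\epsdec/\mu_{\min}^2)$, i.e.\ $O(M\epsdec/\tau^2)$ rather than $O(\epsdec/\tau)$. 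This is precisely what forces the hypothesis $\epsdec\le\mu_{\min}^3\gamma/(100M^4K^3)$; your weaker estimate would only justify a much milder assumption and does not explain the lemma's hypothesis.

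\emph{Reaching probability at level $h$.} The paper gets $\prob^{\eta_h}(s')\ge\mu(s')/(2MK)$, not $\Omega(\mu_{\min}^2/(MK))$, via the importance-weighting identity (Lemma~\ref{lem:min_reaching_prob_exp_true_states}): since $\prob^{\eta_h}(s,a)\ge\mu(s)/(2MK)\ge\prob^{\pi^*_{s'}}(s,a)/(2MK)$ for every $(s,a)$, summing against $p(s'\mid s,a)$ gives the linear-in-$\mu(s')$ bound directly. Your ``best predecessor then uniform action'' heuristic loses a factor of $\mu_{\min}$, which would propagate to $\Nexp=\Omega(\mu_{\min}^{-4}\cdots)$ instead of the stated $\mu_{\min}^{-3}$.

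One small clean-up: no union bound over $s'\in\states_h$ is needed at the end. The single high-probability event is the ERM excess-risk bound $\E_{\tnu_h}\bigBracks{\norm{\hvg_h-\hvb_{\nu_h}}_2^2}\le\epsilon_0$; once that holds, the conditional bound for every $s'$ follows deterministically by dividing by $\prob^{\eta_h}(s')$.
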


The first part of Lemma~\ref{lem:identification_stoch} tell us that the latent states at level $h$ are well separated if we embed them using $\emb(s') = \hvb_{\nu_h}(s')$ as the state embedding. The second part guarantees that our regression procedure estimates this representation accurately. Together, these assertions imply that any two contexts from the same latent state (up to an $\epsdec$ fraction) are close to each other, while contexts from two different latent states are well-separated. Formally, with probability at least $1-\frac{\delta}{H}$ over the $\Nexp$ training data:
\begin{enumerate}
\item For any $s' \in \states_h$ and $x'_1,x'_2 \sim q(\cdot\given s')$, we have with probability at least $1-2\epsdec$ over the emission process\begin{align}
	\norm{\hvg_h(x'_1)-\hvg_h(x'_2)}_{1} \le \frac{\mu_{\min}\gamma}{50MK}.
\label{eqn:g_close}
\end{align}
\item For any $s_1', s_2' \in \states_h$ such that $s_1' \neq s_2'$, $x_1' \sim q(\cdot\given s_1') $ and $x_2' \sim q(\cdot\given s_2')$, we have with probability at least $1 - 2\epsdec$ over the emission process \begin{align}
	\norm{\hvg_h(x_1')-\hvg_h(x_2')}_{1} \ge \frac{\mu_{\min}\gamma}{4MK}.
\label{eqn:g_far}
\end{align}
\end{enumerate}
In other words, the mapping of contexts, as performed through the functions $\hvg_h$ should be easy to cluster with each cluster roughly corresponding to a true latent state. Our next lemma guarantees that with enough samples for clustering, this is indeed the case.

\begin{lem}[Sample Complexity of the Clustering Step]
	\label{lem:cluster_sample_complexity}
	If $\Ncluster = \Theta\left(\frac{MK}{\mu_{\min}}\log(\frac{MH}{\delta})\right)$ and $\epsdec \le \frac{\delta}{100H\Ncluster}$ we have with probability at least $1-\frac{\delta}{H}$, (1) for every $s' \in \states_h$, there exists at least one point $\vz \in \cZ$ such that $\vz = \hvg_h(x')$ with $x' \sim q(\cdot\given s')$ and $\norm{\hvg_h(x')-\hvb_{\nu_h}(s')}_{1} \le \frac{\mu_{\min}\gamma}{100MK}$ and (2) for every $\vz = \hat{g}_h(x') \in \cZ$ with $x'\sim q(\cdot \given s')$, $\norm{\hvg_h(x')-\hvb_{\nu_h}(s')}_{1} \le \frac{\mu_{\min}\gamma}{100MK}$.
\end{lem}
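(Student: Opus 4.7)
\textbf{Proof proposal for Lemma~\ref{lem:cluster_sample_complexity}.}
The plan is to combine the regression guarantee already established in Lemma~\ref{lem:identification_stoch} with two elementary concentration arguments on the $\Ncluster$ rollouts of $\eta_h$. The key preliminary step is to show that every latent state $s' \in \states_h$ is visited with non-trivial probability under $\eta_h$.

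First, I would lower-bound the marginal probability $\rho_{s'} := \prob^{\eta_h}(s')$. By the inductive policy-cover property at level $h-1$, for every $s \in \states_{h-1}$ there is a policy $\pi_s \in \Pi_{h-1}$ with $\prob^{\pi_s}(s) \ge \mu(s)-\epsilon$, so under $\eta_h=U(\Pi_{h-1})\odot U(\actions)$ one has $\prob^{\eta_h}(s) \ge (\mu(s)-\epsilon)/M$ for each such $s$. Taking $\pi^*_{s'}$ attaining $\mu(s')$ and writing $\mu(s') = \sum_{s} \prob^{\pi^*_{s'}}(s)\,p(s'\given s,\pi^*_{s'}(s)) \le K\sum_s \mu(s)\,\bar p(s'\given s)$ with $\bar p(s'\given s)=\frac{1}{K}\sum_a p(s'\given s,a)$, I can conclude $\rho_{s'} \ge \mu(s')/(MK) - O(\epsilon) \ge \mu_{\min}/(2MK)$ for the choice of $\epsilon$ in Theorem~\ref{thm:stochastic_pomdp_sample_complexity}.

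For claim (1), with $\Ncluster = \Theta\!\bigParens{(MK/\mu_{\min})\log(MH/\delta)}$ and the above lower bound, a Chernoff bound ensures that with probability $\ge 1-\delta/(4MH)$ the number of samples $x_h^i$ generated from a given $s'$ is at least $\Omega(\log(MH/\delta))$. Conditional on each such visit, Lemma~\ref{lem:identification_stoch} says $\hvg_h(x_h^i)$ is within $\mu_{\min}\gamma/(100MK)$ of $\hvb_{\nu_h}(s')$ with probability $\ge 1-\epsdec \ge 1/2$. A second Chernoff bound shows that among $\Omega(\log(MH/\delta))$ such samples at least one is good with probability $\ge 1-\delta/(4MH)$. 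A union bound over the $\le M$ states of $\states_h$ gives the $\delta/(2H)$ failure budget for (1).

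For claim (2), I would apply a single union bound: by Lemma~\ref{lem:identification_stoch} each of the $\Ncluster$ samples is ``bad'' (i.e.\ farther than $\mu_{\min}\gamma/(100MK)$ from $\hvb_{\nu_h}(s')$ for the emitting state $s'$) with marginal probability at most $\epsdec$. Under the hypothesis $\epsdec \le \delta/(100H\Ncluster)$, the union bound over all samples contributes at most $\Ncluster\epsdec \le \delta/(100H)$ failure probability, well within the remaining $\delta/(2H)$ budget. A final union bound of (1) and (2) gives the claimed $\delta/H$ overall.

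The main non-routine obstacle is the visitation lower bound $\rho_{s'} \gtrsim \mu_{\min}/(MK)$: its $M^{-1}K^{-1}$ (rather than $M^{-2}K^{-2}$) scaling is exactly what makes the stated $\Ncluster$ sufficient, and it requires using the inductive policy cover \emph{together with} the structural inequality $\mu(s') \le K\sum_s \mu(s)\bar p(s'\given s)$. Once that is in hand, both claims reduce to standard Chernoff and union-bound manipulations.
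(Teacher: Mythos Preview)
Your proposal is correct and follows essentially the same approach as the paper: establish the visitation lower bound $\prob^{\eta_h}(s')\ge\mu_{\min}/(2MK)$ (the paper isolates this as a separate lemma, proved by the same importance-weighting trick you sketch), then combine a coupon-collector/Chernoff argument for coverage with a union bound over the $\Ncluster$ samples using $\epsdec\le\delta/(100H\Ncluster)$. One minor simplification in the paper's version: once claim~(2) is established by the union bound (all $\Ncluster$ samples are ``good''), claim~(1) needs only the existence of at least one sample from each $s'$, so your second Chernoff step (showing at least one good sample among the $\Omega(\log(MH/\delta))$ visits) is redundant---the paper just bounds $(1-\mu_{\min}/(2MK))^{\Ncluster}\le\delta/(2MH)$ and is done.
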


Based on Lemmas~\ref{lem:identification_stoch} and~\ref{lem:cluster_sample_complexity}, we can establish that Condition~\ref{cond:bijection} holds with high probability.
Note that Condition~\ref{cond:bijection} consists of two parts.
The first part states that there exists a bijective map $\alpha_h: \widehat{\states}_h \rightarrow \states_h$.
The second part states that the decoding error is small.
To prove the first part, we explicitly construct the map $\alpha_h$ and show it is bijective.
We define $\alpha_h: \widehat{\states}_h \rightarrow \states_h$ as\begin{align}
	\alpha_h(\hat{s}') = \argmin_{s \in \states_h} \norm{
		\emb(s') - \embh(\hat{s}')
		}_1 \label{eqn:alpha_defn}
\end{align}
First observe that for any $\hat{s}' \in \widehat{\states}_h$, by the second conclusion of Lemma~\ref{lem:cluster_sample_complexity}, we know there exists $s' \in \states_h$ such that \[
\norm{\embh(\hat{s}')-\emb(s')} \le \frac{\gamma\mu_{\min}}{100MK}.
\]
This also implies for any $s'' \neq s'$, \[
\norm{\embh(\hat{s}') - \emb(s'')} \ge \norm{\emb(s'') - \emb(s'')} - \norm{\embh(\hat{s}')-\emb(s')}  \ge \frac{\gamma\mu_{\min}}{4MK}.
\]
Therefore we know $\alpha_h(\hat{s}') = s'$, i.e., $\alpha_h$ always maps the learned state to the correct original state.

We now prove $\alpha_h$ is injective, i.e., $\alpha(\hat{s}') \neq \alpha_h(\hat{s}'')$ for $\hat{s}' \neq \hat{s}'' \in \widehat{\states}_h$.
Suppose there are $\hat{s}',\hat{s}'' \in \widehat{\states}_h$ such that $\alpha_h(\hat{s}') = \alpha_h(\hat{s}'') = s'$ for some $s' \in \states_h$.
Then using the second conclusion of Lemma~\ref{lem:cluster_sample_complexity}, we know \begin{align*}
\norm{\embh(\hat{s}')-\embh(\hat{s}'') }_1 \le 	\norm{\embh(\hat{s}')-\emb(s') }_1 + 	\norm{\emb(s')-\embh(\hat{s}'') }_1 \le \frac{\gamma\mu_{\min}}{50MK}.
\end{align*}
However, we know by Algorithm~\ref{algo:find_representation}, every $\hat{s}'\neq\hat{s}'' \in \widehat{\states}_h$ must satisfy \[
\norm{\embh(\hat{s}')-\embh(\hat{s}'') }_1 > \tau = \frac{\gamma\mu_{\min}}{30MK}.
\]
This leads to a contradiction and thus $\alpha_h$ is injective.

Next we  prove $\alpha_h$ is surjective, i.e., for every $s' \in \states_h$, there exists $\hat{s}' \in \widehat{\states}_h$ such that $\alpha_h(\hat{s}') = s'$.
The first conclusion in Lemma~\ref{lem:cluster_sample_complexity} guarantees that for each latent state $s' \in \states_h$, there exists $\vz = \hvg(x') \in \cZ$ with $x' \sim q(\cdot\given s')$.
The second conclusion of Lemma~\ref{lem:cluster_sample_complexity} guarantees that \[
\norm{\vz - \emb(s')}_1 \le \frac{\gamma \mu_{\min}}{100MK}.
\]
Now we first assert that all points in a cluster are emitted from the same latent state by combining Equation~\eqref{eqn:separation_stoch}, the second part of Lemma~\ref{lem:cluster_sample_complexity} and our setting of $\tau$. Now the second part of Lemma~\ref{lem:cluster_sample_complexity} implies that there exists $\hat{s}' \in \widehat{\states}_h$ such that $\norm{\vz-\embh(\hat{s}')}_1 \le \frac{\mu_{\min}\gamma}{50MK}$, since $\vz$ and $\embh(\hat{s}')$ correspond to $\hvg$ evaluated on two different contexts in the same cluster.
Therefore we have \[
\norm{\emb(s')-\embh(\hat{s}')}_1 \le \norm{\emb(s')-\vz}_1 +\norm{\vz-\embh(\hat{s}')}_1 \le \frac{\mu_{\min}\gamma}{30MK}
\]
Now we can show that $\alpha_h(\hat{s}') = s'$.
To do this, we show that $\embh(\hat{s}')$ is closer to $\emb(s')$ than the embedding of any state in $\states_h$.
Using the second conclusion of Lemma~\ref{lem:cluster_sample_complexity} and Equation~\ref{eqn:separation_stoch} we know for any $s''\neq s'$\[
\norm{\embh(\hat{s}') - \emb(s'')}_1 \ge \norm{\emb(s') - \emb(s'')}_1 - \norm{\embh(\hat{s}') - \emb(s')}_1 \ge \frac{\gamma\mu_{\min}}{4MK}.
\]
We know $s' = \argmin_{s_1 \in \states_h} \norm{\embh(s_1) - \embh(\hat{s}')}_1$.
Therefore, by the definition of $\alpha_h$ we know $\alpha_h(\hat{s}')=s$.
Now we have finished the proof of the first part of Condition~\ref{cond:bijection}.

For the second part of Condition~\ref{cond:bijection}, note for any $s' \in \states_h$ and $x' \sim q(\cdot \given s')$, by Lemma~\ref{lem:identification_stoch}, we know with probability at least $1-\epsdec$ over the emission process we have \begin{align*}
	\norm{\hvg_h(x') -\emb(s')}_1 \le \frac{\gamma \mu_{\min}}{100MK}.
\end{align*}
For $\hat{s}' = \alpha_h^{-1}(s')$, we have \begin{align*}
	\norm{\hvg_h(x')-\embh(\hat{s}')}_1 \le \norm{\hvg_h(x') -\emb(s')}_1+ \norm{\emb(s') - \embh(\hat{s}')}_1 \le \frac{\gamma\mu_{\min}}{50MK}.
\end{align*}
On the other hand, for $\hat{s}''\in\widehat{\states}_h$ with $\hat{s}''\neq \alpha_h^{-1}(s')$, we have
 \begin{align*}
 \norm{\hvg_h(x')-\embh(\hat{s}'')}_1
 \ge -\norm{\hvg_h(x') -\emb(s')}_1
 + \norm{\emb(s') - \emb(\alpha_h(\hat{s}''))}_1
 - \norm{ \emb(\alpha_h(\hat{s}'')) -\embh(\hat{s}'')}_1\ge \frac{\gamma\mu_{\min}}{4MK}.
 \end{align*}
 Therefore we have with probability at least $1-\epsdec$ \begin{align*}
 	\hatf_h(x') = \argmin_{\hat{s}' \in \widehat{\states}_h}
 	\norm{\embh(\hat{s}') - \hvg_h(x')}_{1} = \alpha_h^{-1}(s'),
 \end{align*}which is equivalent to the second part of Condition~\ref{cond:bijection}.

\paragraph{Establishing Condition~\ref{cond:trans_prob}.}
This part of our analysis is relatively more traditional, as we are effectively estimating a probability distribution from empirical counts in a tabular setting. The only care needed is to correctly handle the decoding errors due to which our count estimates for frequencies have a slight bias. The following lemma guarantees that Condition~\ref{cond:trans_prob} holds.
\begin{lem}\label{lem:trans_prob_sample_complexity}
	If $\epsdec\le \frac{\epsp \mu_{\min}}{10M^2}$ and if $\Np = \Omega\left(\frac{M^2K}{\epsp^2}\log\frac{MHK}{\delta}\right)$, we have that with probability at least $1-\frac{\delta}{H}$ for every $\hat{s} \in \widehat{\states}_{h-1}$, $a \in \actions$\begin{align}
\norm{\hvp(\hats,a) - \vp(\alpha_{h-1}(\hats),a)}_1 \le \epsp,
	\end{align}
\end{lem}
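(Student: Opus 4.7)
My plan is a standard empirical-frequency concentration argument, modified to account for the bias introduced by the imperfect decoders $\hf_{h-1}$ and $\hf_h$. I would organize the argument into three steps: lower bounding visit counts at each decoded pair $(\hats, a)$, bounding the bias of the target conditional distribution, and applying per-pair $\ell_1$-concentration followed by a union bound. First, because $\Pi_{h-1}$ is an $\epsilon$-policy cover (by the inductive application of Lemma~\ref{lem:from_induction_hypo_to_thm} at previous levels) and $\epsilon$ is small, the roll-in $\eta_h = \unif(\Pi_{h-1})\odot\unif(\actions)$ visits each latent pair $(s,a) \in \states_{h-1} \times \actions$ with probability at least $(\mu_{\min} - \epsilon)/(MK) = \Omega(\mu_{\min}/(MK))$. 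Condition~\ref{cond:bijection} at level $h-1$ then guarantees that the observed decoded pair coincides with $(\alpha_{h-1}^{-1}(s), a)$ with probability $\ge 1-\epsdec$, so each pair $(\hats,a) \in \hatset_{h-1} \times \actions$ is sampled with probability $\Omega(\mu_{\min}/(MK))$. A Chernoff bound combined with our choice of $\Np$ then shows that every $(\hats,a)$ is visited at least $N^{\star} := \Np\mu_{\min}/(2MK)$ times, except with probability $\delta/(3H)$.

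For the bias step, fix $(\hats, a)$ and let $\bar{\vq}(\hats, a) \in \simplex_M$ denote the true conditional distribution of $\hf_h(x_h)$ given that the sampling procedure emits $(\hats, a, \cdot)$; the target is $\vp(\alpha_{h-1}(\hats), a)$. I would decompose the bias into two pieces. First, a short Bayes calculation using Condition~\ref{cond:bijection} at level $h-1$ together with the $\Omega(\mu_{\min}/M)$ lower bound on the reach probability of $s^{\star} := \alpha_{h-1}(\hats)$ shows that, conditional on observing $\hats$, the true state at level $h-1$ was $s^{\star}$ with probability at least $1 - O(M\epsdec/\mu_{\min})$. Second, conditional on the true state $s^{\star}$ and the next latent state $s'$ drawn from $\vp(s^{\star}, a)$, Condition~\ref{cond:bijection} at level $h$ ensures $\hf_h(x_h) = \alpha_h^{-1}(s')$ with probability $\ge 1 - \epsdec$. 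The triangle inequality over these two sources of error yields $\norm{\bar{\vq}(\hats, a) - \vp(\alpha_{h-1}(\hats), a)}_1 \le O(M\epsdec/\mu_{\min})$, which the hypothesis $\epsdec \le \epsp \mu_{\min}/(10M^2)$ makes at most $\epsp/2$.

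Finally, conditional on $N(\hats, a) \ge N^{\star}$, the empirical vector $\hvp(\hats, a)$ is the sample mean of $N(\hats, a)$ i.i.d.\ draws from $\bar{\vq}(\hats, a)$, a distribution over at most $M$ outcomes. Weissman's inequality (or coordinate-wise Hoeffding with a union bound over the $M$ coordinates of $\hatset_h$) gives $\norm{\hvp(\hats, a) - \bar{\vq}(\hats, a)}_1 \le \epsp/2$ except with probability $\delta/(3HMK)$, provided $N^{\star} = \Omega((M + \log(MHK/\delta))/\epsp^2)$; this is exactly what our choice $\Np = \Omega(M^2K\log(MHK/\delta)/(\mu_{\min}\epsp^2))$ supplies. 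A union bound over the $\le MK$ decoded pairs and a triangle inequality combining the $\epsp/2$ sampling error with the $\epsp/2$ bias close the proof. The main technical obstacle is the bias step: decoder errors at \emph{both} levels contribute, and the Bayesian inversion that handles level $h-1$ inflates $\epsdec$ by a factor of $M/\mu_{\min}$, which is precisely what forces the stringent setting $\epsdec \le \epsp \mu_{\min}/(10M^2)$.
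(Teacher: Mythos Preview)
Your proposal is correct and follows essentially the same approach as the paper: a bias--variance decomposition where the bias from imperfect decoding at levels $h-1$ and $h$ is controlled by Condition~\ref{cond:bijection}, the per-pair visit probability under $\eta_h$ is lower bounded by $\Omega(\mu_{\min}/(MK))$, and Weissman's $\ell_1$ concentration (Theorem~\ref{thm:l1_concentration}) plus a union bound over the at most $MK$ pairs handles the sampling error. The only cosmetic difference is that the paper reuses the forward-perturbation inequality~\eqref{eqn:forward_perb} (already derived inside the proof of Lemma~\ref{lem:identification_stoch}) and then sums it coordinatewise to get the bias bound $4M^2\epsdec/\mu_{\min}$, whereas you re-derive it directly via the Bayes calculation; your route in fact yields the slightly tighter $O(M\epsdec/\mu_{\min})$, but both are comfortably within the hypothesis $\epsdec\le\epsp\mu_{\min}/(10M^2)$.
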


In the following we present proof details.
\subsection{Proof details for Theorem~\ref{thm:stochastic_pomdp_sample_complexity} and Claim~\ref{claim:stoc}}
\label{sec:proof_stoch}
We first define some notations on policies  that will be useful in our analysis.
First, consider a policy $\psi^{true}$ over the true hidden states for $h=1,\ldots,H$:\begin{align*}
\psi^{true}: \states_h \rightarrow \actions,\psi^{true}(s) = a.
\end{align*}
By the one-to-one correspondence between $\hatset_h$ and $\states_h$ ($\hats$ and $\alpha(\hats)$),\footnote{In this following we drop the subscript of $\alpha$ because the one-to-one correspondence is clear.} $\psi^{true}$ also induces a  policy over the learned hidden states
\begin{align*}
\psi^{learned}: \hatset_{h} \rightarrow \actions, \psi^{learned}(\hats) = \psi^{true}(\alpha(\hats)).
\end{align*}
Next, we let $f_1,\ldots,f_{H}$ be the decoding functions for the true states, i.e.,
\begin{align*}
f_h: \contexts_h \rightarrow \states_h, f_h(x) = s \text{ if and only if } x \sim q(\cdot \given S).
\end{align*}
Recall by Condition~\ref{cond:bijection}, we also have approximately correct decoding functions: $\hat{f}_1,\ldots,\hat{f}_{H}$, which satisfy for all $h \in [H]$ and $s \in \states_h$ 
\begin{align*}
	\hat{f}_h: \contexts \rightarrow \hatset_h, \prob_{x \sim q(\cdot \given s)}\left[ \hat{f}_h(x) = \alpha^{-1}(s)\right] \ge 1-\epsdec.
\end{align*}

Now we consider two policies induced by the policies on the hidden states and the decoding function\begin{align*}
\pi^{true}: &\contexts_h \rightarrow \actions \quad
\pi^{true}(x) = \psi^{true}(f_h(x)) \\
\pi^{learned}: &\contexts_h \rightarrow \actions \quad
\pi^{learned}(x) = \psi^{learned}(\hat{f}_h(x))
\end{align*}

The following figure shows the relations among these objects
\begin{align*}
& \psi^{true}: \states_h \to \actions &  \stackrel{f_h}{\Rightarrow} & \qquad \pi^{true}: \contexts_h \rightarrow \actions \\
& \Downarrow \alpha  & & \\
& \psi^{learned}: \hatset_h \to \actions &  \stackrel{\hat f_h}{\Rightarrow} & \qquad \pi^{learned}: \contexts_h \rightarrow \actions
\end{align*}

In our algorithm we maintain estimations of the transition probabilities of the learned states\[\left\{\hp_{h}(\hats'\given \hats,a)\right\}_{h \in [H], \hats \in \hatset_{h-1},a \in \actions, \hats' \in \hatset_{h}}\]
Given these estimated transition probabilities, a policy over the learned hidden states $\psi^{learned}$, and a target learned state $\hats$, we have an estimation of the reaching probability
$\hat{\prob}^{\psi^{learned}}(s)$ which can be computed by dynamic programming as in the standard tabular MDP.
With these notations, we can prove the following useful lemma.

\begin{lem}\label{lem:diff_true_learned}
For any state $\hats\in \hatset_h$, we have \[
\abs{\prob^{\pi^{true}}(\alpha(\hats)) - \prob^{\pi^{learned}}(\alpha(\hats))}  \le 2H\epsdec.
\]
\end{lem}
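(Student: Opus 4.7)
The plan is a coupling argument: both $\pi^{true}$ and $\pi^{learned}$ share the same latent-state policy $\psi^{true}$ (the latter simply pushed through the bijection $\alpha$), so they take the same action on any context $x_\ell$ whenever $\hatf_\ell(x_\ell) = \alpha^{-1}(f_\ell(x_\ell))$, an event that Condition~\ref{cond:bijection} guarantees with probability at least $1-\epsdec$ per step. A union bound over the at most $H$ decoding steps needed to reach level~$h$ then controls the trajectory-level disagreement probability, and hence the gap in the reaching probabilities, by $H\epsdec$ (the factor of $2$ in the statement being harmless slack).

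Concretely, I would couple the two processes by running them on shared randomness. Both start at $s_1$; inductively assume that through step $\ell-1$ the two runs have produced identical state sequences $s_1,\dots,s_\ell$ and contexts $x_1,\dots,x_{\ell-1}$. At step $\ell$, draw a single shared context $x_\ell \sim q(\cdot \given s_\ell)$. By definition, $\pi^{true}(x_\ell) = \psi^{true}(s_\ell)$ and $\pi^{learned}(x_\ell) = \psi^{true}(\alpha(\hatf_\ell(x_\ell)))$, and these coincide exactly on the event $A_\ell := \{\hatf_\ell(x_\ell) = \alpha^{-1}(s_\ell)\}$. Condition~\ref{cond:bijection} applied at level $\ell$ with $\hats = \alpha^{-1}(s_\ell)$ gives $\prob[A_\ell \given s_\ell] \ge 1-\epsdec$. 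When the chosen actions agree, I use the same draw from $p(\cdot \given s_\ell, a_\ell)$ to obtain a shared next state, preserving the inductive hypothesis. Let $\mathcal{E} := \bigcap_{\ell=1}^{h-1} A_\ell$; by the tower rule and a union bound $\prob[\mathcal{E}^c] \le (h-1)\epsdec \le H\epsdec$.

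On the event $\mathcal{E}$, the two policies take identical actions at every step of the coupled run, so their trajectories are equal and both land in the same latent state at level $h$; in particular the indicators of reaching $\alpha(\hats)$ agree. Consequently the two reaching probabilities can differ only through the $\mathcal{E}^c$ contribution, yielding
\[
\BigAbs{\prob^{\pi^{true}}(\alpha(\hats)) - \prob^{\pi^{learned}}(\alpha(\hats))} \le \prob[\mathcal{E}^c] \le H\epsdec \le 2H\epsdec.
\]
The only technical care I anticipate is making the coupling formally consistent with the sequential form of Condition~\ref{cond:bijection}, which is stated pointwise in the true generating state $\alpha_\ell(\hats)$ rather than marginally; conditioning on $s_\ell$ at each inductive step and applying the tower rule handles this cleanly, so I do not expect a real obstacle beyond bookkeeping.
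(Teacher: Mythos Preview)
Your proposal is correct and follows essentially the same idea as the paper's proof: both arguments hinge on the event that all decodings along the trajectory are correct, on which $\pi^{true}$ and $\pi^{learned}$ choose identical actions. The paper establishes this by showing that the joint probability $\prob^{\pi}(\alpha(\hats),\,\alpha(\hatf_1(x_1))=f_1(x_1),\ldots,\alpha(\hatf_h(x_h))=f_h(x_h))$ coincides for the two policies and then applies a triangle inequality, bounding the decoding-failure probability separately under each policy to get $H\epsdec+H\epsdec=2H\epsdec$; your explicit coupling accomplishes the same thing but only needs the failure bound once, yielding the slightly sharper $H\epsdec$ (so the factor $2$ is indeed slack, as you noted).
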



\begin{proof}[Proof of Lemma~\ref{lem:diff_true_learned}]
Fixing any state $\hats\in \hatset_h$, for any event $\mathcal{E}$ we have
\begin{align*}
&\prob^{\pi^{true}}(\mathcal{E},\alpha(\hat{f}_1(x_1)) =f_1(x_1),\ldots,\alpha(\hat{f}_h(x_h)) =f_h(x_h)) \\
= & \prob^{\pi^{true}} \left(\alpha(\hat{f}_1(x_1)) =f_1(x_1)\right) \prob^{\pi^{true}} \left(\alpha(\hat{f}_2(x_2)) =f_2(x_2)|\alpha(\hat{f}_1(x_1)) =f_1(x_1)\right)\\
&  \cdots \prob^{\pi^{true}} \left(\alpha(\hat{f}_h(x_h)) =f_h(x_h)|\alpha(\hat{f}_1(x_1)) =f_1(x_1),\ldots,\alpha(\hat{f}_{h-1}(x_{h-1})) =f_{h-1}(x_{h-1})\right) \\
 & \cdot\prob^{\pi^{learned}}(\mathcal{E}|\alpha(\hat{f}_1(x_1)) =f_1(x_1),\ldots,\alpha(\hat{f}_h(x_h)) =f_h(x_h)) \\
 = & \prob^{\pi^{learned}} \left(\alpha(\hat{f}_1(x_1)) =f_1(x_1)\right) \prob^{\pi^{learned}} \left(\alpha(\hat{f}_2(x_2)) =f_2(x_2)|\alpha(\hat{f}_1(x_1)) =f_1(x_1)\right)\\
 &  \cdots \prob^{\pi^{learned}} \left(\alpha(\hat{f}_h(x_h) =f_h(x_h)|\alpha(\hat{f}_1(x_1)) =f_1(x_1),\ldots,\alpha(\hat{f}_{h-1}(x_{h-1})) =f_{h-1}(x_{h-1})\right) \\
 & \cdot\prob^{\pi^{learned}}(\mathcal{E}|\alpha(\hat{f}_1(x_1)) =f_1(x_1),\ldots,\alpha(\hat{f}_h(x_h)) =f_h(x_h)) \\
 =&\prob^{\pi^{learned}}(\mathcal{E},\alpha(\hat{f}_{1}(x_1)) =f_{1}(x_1)),\ldots,\alpha(\hat{f}_h(x_h)) =f_h(x_h)).
\end{align*}
because the event $\left\{\alpha(\hat{f}_1(x_1)) =f_1(x_1),\ldots,\alpha(\hat{f}_h(x_h)) =f_h(x_h)\right\}$ happens and under this event $\pi^{true}$ and $\pi^{learned}$ choose the same action at every level so the induced probability distribution is the same.
Now we bound the target error.

\begin{align*}
&\abs{\prob^{\pi^{true}}(\alpha(\hats)) - \prob^{\pi^{learned}}(\alpha(\hats))}  \\
\le &\abs{\prob^{\pi^{true}}(\alpha(\hats)) - \prob^{\pi^{true}}(\alpha(\hats),\alpha(\hat{f}_1(x_1)) =f_1(x_1),\ldots,\alpha(\hat{f}_h(x_h)) =f_h(x_h))} \\
&+ \abs{\prob^{\pi^{true}}(\alpha(\hats),\alpha(\hat{f}_1(x_1)) =f_1(x_1),\ldots,\alpha(\hat{f}_h(x_h)) =f_h(x_h))- \prob^{\pi^{learned}}(\alpha(\hats))}\\
= &\abs{\prob^{\pi^{true}}(\alpha(\hats)) - \prob^{\pi^{true}}(\alpha(\hats),\alpha(\hat{f}_1(x_1)) =f_1(x_1),\ldots,\alpha(\hat{f}_h(x_h)) =f_h(x_h))} \\
&+ \abs{\prob^{\pi^{learned}}(\alpha(\hats),\alpha(\hat{f}_1(x_1)) =f_1(x_1),\ldots,\alpha(\hat{f}_h(x_h)) =f_h(x_h))- \prob^{\pi^{learned}}(\alpha(\hats))}
\end{align*}
To bound the first term, notice that the event $\left\{s_h = \alpha(\hats)\right\}$ is a superset of \[\left\{s_h = \alpha(\hats),
\alpha(\hat{f}_1(x_1)) =f_1(x_1),\ldots,\alpha(\hat{f}_h(x_h)) =f_h(x_h))
\right\}.\] and \begin{align*}
	&\left\{s_h = \alpha(\hats)\right\} \setminus \left\{s_h = \alpha(\hats),
	\alpha(\hat{f}_1(x_1)) =f_1(x_1),\ldots,\alpha(\hat{f}_h(x_h)) =f_h(x_h))
	\right\}\\
	 = & \left\{
	 s_h = \alpha(\hats), \exists h_1 \in [h], \alpha(\hat{f}_{h_1}(x_{h_1}) \neq f_{h_1}(x_{h_1})
	 \right\}
\end{align*}
Therefore, we can bound\begin{align*}
&\prob^{\pi^{true}}(\alpha(\hats)) - \prob^{\pi^{true}}(\alpha(\hats),\alpha(\hat{f}_1(x_1)) =f_1(x_1),\ldots,\alpha(\hat{f}_h(x_h)) =f_h(x_h)) \\
= & \prob^{\pi^{true}}\left(
\alpha(\hats), \exists h_1 \in [h], \alpha(\hat{f}_{h_1}(x_{h_1}) \neq f_{h_1}(x_{h_1})
\right)\\
\le &\prob^{\pi^{true}}\left(
\exists h_1 \in [h], \alpha(\hat{f}_{h_1}(x_{h_1}) \neq f_{h_1}(x_{h_1})
\right) \\
\le & \sum_{h_1=1}^{h}  \prob^{\pi^{true}}(\alpha(\hat{f}_{h_1}(x_{h_1}))\neq f_{h_1}(x_{h_1}))\\
\le &h\epsdec \\
\le &H\epsdec
\end{align*}

Similarly, we can bound \begin{align*}
\abs{\prob^{\pi^{learned}}\left[\alpha(\hats),\alpha(\hat{f}_1(x_1)) =f_1(x_1),\ldots,\alpha(\hat{f}_h(x_h)) =f_h(x_h))\right]  - \prob^{\pi^{learned}}(\alpha(\hats))}\le H\epsdec.
\end{align*}
Combing these two inequalities we have \begin{align} 
\abs{\prob^{\pi^{true}}(\alpha(\hats)) - \prob^{\pi^{learned}}(\alpha(\hats))}   \le 2H\epsdec. \tag*{\qedhere}
\end{align}
\end{proof}

Now we are ready to prove some consequences of this result which will be used in the remainder of the proof.
\begin{lem}[Restatement of Lemma~\ref{lem:from_induction_hypo_to_thm}]
	Assume Conditions~\ref{cond:bijection} and~\ref{cond:trans_prob} hold for all $h \in [H]$.
	For any $h \in [H]$ and $s \in \states_h$,  there exists $\hats \in \hatset_h$ that the  policy $\pi_{\hats}$ satisfies $\prob^{\pi_{\hats}}(s) \ge \mu(s) -2 H\epsdec-2H\epsp$.
\end{lem}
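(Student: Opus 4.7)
The plan is as follows. Fix any $s \in \states_h$ and define $\hats = \alpha_h^{-1}(s)$ using the bijection from Condition~\ref{cond:bijection}. I will argue that the policy $\pi_{\hats}$ produced by Algorithm~\ref{algo:stochastic} satisfies $\prob^{\pi_{\hats}}(s) \geq \mu(s) - 2H\epsdec - 2H\epsp$ by building a short chain of comparisons between four reaching-probability quantities, losing at most $2H\epsdec$ from a single invocation of Lemma~\ref{lem:diff_true_learned} and at most $2H\epsp$ from two applications of a standard transition-error simulation argument.

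By the block structure, the optimal reaching probability $\mu(s)$ is achieved by some latent-state policy $\psi^*: \states_{[h-1]} \to \actions$ executed via perfect decoding, so $\prob^{\pi^*}(s) = \mu(s)$ where $\pi^*(x_\ell) = \psi^*(f_\ell(x_\ell))$. Transport $\psi^*$ to the learned-state space by $\tilde\psi(\hats') := \psi^*(\alpha(\hats'))$. Since $\alpha$ is a bijection, the true latent-state MDP over $\states$ is identical (up to relabeling) to the MDP over $\hatset$ with transitions $\tilde p(\hats''|\hats',a) := p(\alpha(\hats'')|\alpha(\hats'),a)$, and the reaching probability of $\hats$ in this relabeled MDP under $\tilde\psi$ equals $\mu(s)$ exactly. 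Condition~\ref{cond:trans_prob} gives $\|\hp(\cdot|\hats',a) - \tilde p(\cdot|\hats',a)\|_1 \leq \epsp$ uniformly, so a standard per-layer telescoping argument yields
\[
\hat{\prob}^{\tilde\psi}(\hats) \;\geq\; \mu(s) - (h-1)\epsp \;\geq\; \mu(s) - H\epsp.
\]

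By the DP optimality of Algorithm~\ref{algo:dynamic}, the policy $\psi_{\hats}$ satisfies $\hat{\prob}^{\psi_{\hats}}(\hats) \geq \hat{\prob}^{\tilde\psi}(\hats) \geq \mu(s) - H\epsp$. Applying the simulation argument in reverse to $\psi_{\hats}$ lower-bounds the reaching probability of $\hats$ under $\psi_{\hats}$ in the relabeled true $\hatset$-MDP by $\mu(s) - 2H\epsp$, which via the same bijection equals $\prob^{\pi^{\mathrm{true}}}(s)$ for the perfect-decoding context policy $\pi^{\mathrm{true}}(x_\ell) := \psi_{\hats}(\alpha^{-1}(f_\ell(x_\ell)))$. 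Finally, $\pi_{\hats}$ differs from $\pi^{\mathrm{true}}$ only in that it uses $\hat{f}_\ell$ in place of $\alpha^{-1}\circ f_\ell$, so Lemma~\ref{lem:diff_true_learned} (applied with $\psi^{\mathrm{true}} := \psi_{\hats}\circ\alpha^{-1}$ and $\psi^{\mathrm{learned}} := \psi_{\hats}$) gives $|\prob^{\pi^{\mathrm{true}}}(s) - \prob^{\pi_{\hats}}(s)| \leq 2H\epsdec$. Chaining all bounds yields $\prob^{\pi_{\hats}}(s) \geq \mu(s) - 2H\epsp - 2H\epsdec$.

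The main obstacle is keeping three distinct probability spaces straight — the true BMDP over contexts, the true latent-state MDP pushed forward to $\hatset$ via $\alpha$, and the estimated tabular MDP over $\hatset$ with transitions $\hat p$ — and verifying that only a single invocation of Lemma~\ref{lem:diff_true_learned} is needed, so the decoding penalty is $2H\epsdec$ rather than $4H\epsdec$. The key point enabling this is that the translation between the true $\states$-MDP and the true $\hatset$-MDP is a pure relabeling by the bijection $\alpha$ and introduces no error; decoding errors only arise once, when we lift a learned-state policy back to a context policy via $\hat f$. The supporting simulation lemma itself is a routine layer-by-layer telescoping over the at most $h-1$ transition steps using the triangle inequality and Condition~\ref{cond:trans_prob}.
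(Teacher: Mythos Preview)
Your proposal is correct and follows essentially the same approach as the paper's proof: both use the bijection $\alpha$ to transport between $\states$ and $\hatset$, invoke the simulation/error-propagation lemma (Lemma~\ref{lem:error_prop}) twice to pass between the true and estimated tabular MDPs (accruing $2H\epsp$), use DP optimality of $\psi_{\hats}$ once, and invoke Lemma~\ref{lem:diff_true_learned} once to handle the decoding error (accruing $2H\epsdec$). The only difference is the order in which the steps are chained, which is immaterial.
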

\begin{proof}[Proof of Lemma~\ref{lem:from_induction_hypo_to_thm}]
For any given $s \in \states_h$, by our induction hypothesis, we know there exists $\hats \in \hatset_h$ such that $\alpha(\hats) = s$.

Now we lower bound $\prob^{\pi_{\hats}}(s)$.
First recall $\pi_{\hats}$ is of the form $\psi_{\hats}(\hat{f}_{h_1}(x_{h_1}))$ for $1 \le h_1 \le h-1$, $x_{h_1} \in \contexts$ and $\psi_{\hats}$ maximizes the reaching probability to $\hats$ given estimated transition probabilities.
To facilitate our analysis, we define an auxiliary policy for $h_1 = 1,\ldots
,h-1$\[
\bar{\pi}_{\hats}: \contexts \rightarrow \actions,\bar{\pi}_{\hats}(x_{h_1}) = \psi_{\hats}(\alpha^{-1}(f_h(x_{h_1})))
\]
i.e., we composite $\psi_{\hats}$ with the true decoding function.
We also define $\psi_{\hats} \circ \alpha^{-1}: \states_h \rightarrow \actions$, i.e., this policy acts on the true hidden state that it first maps a true hidden state to the corresponding learned state and then applies policy $\psi_{\hats}$.
Next,
we let $\psi_s: \states_{h_1} \rightarrow \actions$ be the policy that maximizes the reaching probability of $s$ (based on the true transition dynamics) and define \[
\pi_s: \contexts \rightarrow \actions,\pi_s(x_{h_1}) = \psi_{s}(f_h(x_{h_1})).
\]
Note this is the policy that maximizes the reaching probability to $s$.
We also define $\psi_{s} \circ \alpha: \hatset_h \rightarrow \actions$, i.e., this policy acts on the learned hidden state that it first maps a learned hidden state to the corresponding true hidden state and then applies policy $\psi_{s}$.

We will use the following correspondence in conjunction with Lemma~\ref{lem:error_prop} to do the analysis:
\begin{align*}
\states_h &\Leftrightarrow \hatset_h,\\
\vp_h &\Leftrightarrow \hvp_h, \\
\psi_{\hats}\circ \alpha^{-1} &\Leftrightarrow \psi_{\hats}, \\
\psi_s &\Leftrightarrow \psi_s \circ \alpha.
\end{align*}

Now we can lower bound $\prob^{\pi_{\hats}}(s)$.
\begin{align*}
\prob^{\pi_{\hats}}(s) \ge & \prob^{\bar{\pi}_{\hats}}(s) - 2H\epsdec  \tag{Lemma~\ref{lem:diff_true_learned}} \\
= & \prob^{\psi_{\hats}\circ \alpha^{-1}}(s) - 2H\epsdec \tag{definition of $\bar{\pi}^{\hats}$, probability refers to true hidden state dynamics} \\
\ge & \hat{\prob}^{\psi_{\hats}}(\hats) - 2H\epsdec  - H\epsp \tag{Lemma~\ref{lem:error_prop}, probability refers to estimated transition probability}\\
\ge & \hat{\prob}^{\psi_s \circ \alpha}(\hats) - 2H\epsdec  - H\epsp \tag{$\psi_{\hats}$ maximizes the probability to $\hats$ w.r.t.~$\hat{\prob}$}\\
\ge & \prob^{\psi_s}(s) - 2H\epsdec  - 2H\epsp \tag{Lemma~\ref{lem:error_prop}, probability refers to the true hidden state dynamics}  \\
= & \mu(s) - 2H\epsdec -2H\epsp. \tag*{\qedhere}
\end{align*}
\end{proof}

In the following we prove Lemma~\ref{lem:identification_stoch}.
We first collect some basically properties of the exploration policy $\eta_h$.
\begin{lem}\label{lem:min_reaching_prob_exp}
	If $\epsdec \le \frac{\mu_{\min}}{100H}$ and $\epsp \le \frac{\mu_{\min}}{100H}$,  we have $\prob^{\eta_h}(\hats) \ge \frac{\mu_{\min}}{2M}$ for any $\hats \in \hat{\states}_{h-1}$.
\end{lem}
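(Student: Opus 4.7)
The plan is to decompose the reaching probability under $\eta_h$ into three factors: (i) the probability of sampling the policy $\pi_{\hats}\in\Pi_{h-1}$ from the uniform mixture, (ii) the probability that $\pi_{\hats}$ actually reaches the true state $s:=\alpha(\hats)$ at level $h-1$, and (iii) the probability that the decoding function $\hat f_{h-1}$ correctly maps the observed context back to $\hats$. The uniform action tacked on at the end of $\eta_h$ is irrelevant to the event $\{\hat f_{h-1}(x_{h-1})=\hats\}$, so it drops out of the analysis.

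For factor (i), since $|\Pi_{h-1}|\le M$ by the output-size bound established for previous iterations of the algorithm, the mixture weight on $\pi_{\hats}$ is at least $1/M$. For factor (ii), I would apply Lemma~\ref{lem:from_induction_hypo_to_thm} at level $h-1$ (invoking the induction hypotheses that Conditions~\ref{cond:bijection} and~\ref{cond:trans_prob} already hold for all levels $\le h-1$): this yields
\[
  \prob^{\pi_{\hats}}(s)\;\ge\;\mu(s)-2H\epsdec-2H\epsp\;\ge\;\mu_{\min}-2H\epsdec-2H\epsp.
\]
For factor (iii), Condition~\ref{cond:bijection} at level $h-1$ gives $\prob_{x\sim q(\cdot\given s)}[\hat f_{h-1}(x)=\hats]\ge 1-\epsdec$, and this estimate is conditional on $s_{h-1}=s$, so it can be chained with (ii).

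Multiplying the three factors and using independence of the emission given $s_{h-1}$, I would obtain
\[
  \prob^{\eta_h}(\hats)\;\ge\;\frac{1}{M}\bigl(\mu_{\min}-2H\epsdec-2H\epsp\bigr)\bigl(1-\epsdec\bigr).
\]
Plugging in the hypothesized bounds $\epsdec,\epsp\le\mu_{\min}/(100H)$ makes the additive slack $2H\epsdec+2H\epsp+\epsdec\le\mu_{\min}/2$, so the bracketed product is at least $\mu_{\min}/2$, giving the desired $\prob^{\eta_h}(\hats)\ge\mu_{\min}/(2M)$.

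I do not expect any serious obstacle here; the only subtlety is bookkeeping — making sure that the two probabilities being multiplied refer to the same underlying randomness (trajectory under $\pi_{\hats}$ up to level $h-1$, and then the independent emission of $x_{h-1}$ from $q(\cdot\given s_{h-1})$), and that Lemma~\ref{lem:from_induction_hypo_to_thm} is invoked at the correct level so that the bijection $\alpha$ at level $h-1$ matches the one used to define the policy $\pi_{\hats}\in\Pi_{h-1}$. Once these alignments are made explicit, the bound follows from a short chain of inequalities.
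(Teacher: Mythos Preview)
Your proposal is correct and follows essentially the same approach as the paper's proof: both lower-bound $\prob^{\eta_h}(\hats)$ by $\frac{1}{M}\,\prob^{\pi_{\hats}}(\hats,s)$ with $s=\alpha(\hats)$, factor the joint event as $\prob^{\pi_{\hats}}(s)\cdot\prob_{x\sim q(\cdot\mid s)}[\hat f_{h-1}(x)=\hats]$, invoke Lemma~\ref{lem:from_induction_hypo_to_thm} for the first factor and Condition~\ref{cond:bijection} for the second, and then plug in the assumed bounds on $\epsdec,\epsp$. The paper states the numerical step slightly differently (writing $(1-\epsdec)\ge 0.99$ rather than passing to an additive slack), but the argument is the same.
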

\begin{proof}[Proof of Lemma~\ref{lem:min_reaching_prob_exp}]
	By Lemma~\ref{lem:from_induction_hypo_to_thm} we know $\prob^{\hat{\pi}_{\hats}}(s) \ge \mu_{\min} - 2H\epsdec-2H\epsp$.
	Notice \begin{align*}
	\prob^{\pi_{\hats}}(\hats) \ge & \prob^{\pi_{\hats}}(\hats,s) \\
	\ge & (\mu(s)- 2H\epsdec-2H\epsp)(1-\epsdec)\\
	\ge & (\mu(s)-2H\epsdec-2H\epsp) \cdot 0.99.
	\end{align*}
Since $\eta_h$ uniformly samples from policies $\{\pi_{\hats}\}_{\hats \in \hatset_{h-1}}$, we  have \begin{align*}
\prob^{\eta_h}(\hats) \ge \frac{(\mu(s) -2H\epsdec - 2H\epsp)\cdot 0.99}{M}.
\end{align*}
Lastly, plugging in the assumption on $\epsdec$ and $\epsp$, we prove the lemma.
\end{proof}
\begin{lem} \label{lem:min_reaching_prob_exp_true_states}
	If $\epsdec \le \frac{\mu_{\min}}{100H}$ and $\epsp \le \frac{\mu_{\min}}{100H}$,  we have $\prob^{\eta_h}(s') \ge \frac{\mu(s')}{2MK} \ge \frac{\mu_{\min}}{2MK}$ for any $s' \in \states_h$.
\end{lem}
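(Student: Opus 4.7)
The plan is to decompose the probability $\prob^{\eta_h}(s')$ over the preceding state--action pair $(s,a)\in\states_{h-1}\times\actions$, lower-bound each contribution using the inductive guarantee on $\Pi_{h-1}$, and finally match the upper bound on $\mu(s')$ expressed in terms of reaching probabilities at level $h-1$. Concretely, I will start by fixing $s'\in\states_h$ and reducing to an optimal Markovian policy: by the block structure, any context-based policy is equivalent to one that depends only on the latent state, so $\mu(s')=\max_{\psi:\states_{h-1}\to\actions}\prob^{\psi}(s')=\sum_{s}\prob^{\psi^*}(s)\,p(s'\mid s,\psi^*(s))$, which in turn is bounded by $\sum_{s\in\states_{h-1}}\mu(s)\max_{a\in\actions}p(s'\mid s,a)$.

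Next, for each $s\in\states_{h-1}$, I would pass to the learned counterpart $\hat s=\alpha_{h-1}^{-1}(s)$ provided by Condition~\ref{cond:bijection}. Lemma~\ref{lem:from_induction_hypo_to_thm} at level $h-1$ yields $\prob^{\pi_{\hat s}}(s)\ge \mu(s)-2H\epsdec-2H\epsp$, and the assumed bounds $\epsdec,\epsp\le\mu_{\min}/(100H)$ give $\prob^{\pi_{\hat s}}(s)\ge 0.96\,\mu(s)$. Since $\eta_h=\unif(\Pi_{h-1})\odot\unif(\actions)$ with $|\Pi_{h-1}|\le M$, the policy $\pi_{\hat s}$ is selected with probability at least $1/M$ and each action with probability $1/K$. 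Dropping the contributions from all $\hat s'\neq\hat s$ (which are nonnegative) yields, for every $a\in\actions$,
\[
\prob^{\eta_h}(s,a)\ge \frac{1}{MK}\,\prob^{\pi_{\hat s}}(s)\ge\frac{0.96\,\mu(s)}{MK}.
\]

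The final step is to chain these through the one-step dynamics:
\[
\prob^{\eta_h}(s')=\sum_{s,a}\prob^{\eta_h}(s,a)\,p(s'\mid s,a)\ge \frac{0.96}{MK}\sum_{s}\mu(s)\sum_{a}p(s'\mid s,a)\ge \frac{0.96}{MK}\sum_{s}\mu(s)\max_{a}p(s'\mid s,a)\ge \frac{0.96\,\mu(s')}{MK},
\]
which exceeds $\mu(s')/(2MK)$, and in particular $\mu_{\min}/(2MK)$.

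The only non-routine step is the upper bound $\mu(s')\le \sum_s\mu(s)\max_a p(s'\mid s,a)$: it requires the observation that optimal reaching policies can be assumed Markovian (by block structure) and the term-by-term inequality $\prob^{\psi^*}(s)\le\mu(s)$. Everything else is arithmetic following directly from the definition of $\eta_h$ and the inductive hypothesis. I do not anticipate any serious obstacle; the extra factor of $K$ relative to Lemma~\ref{lem:min_reaching_prob_exp} reflects precisely the uniform random action taken at step $h-1$.
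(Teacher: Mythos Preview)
Your proof is correct and follows essentially the same route as the paper: both arguments first establish $\prob^{\eta_h}(s,a)\ge c\,\mu(s)/(MK)$ from the inductive coverage guarantee on $\Pi_{h-1}$, then push this through the one-step decomposition $\prob^{\eta_h}(s')=\sum_{s,a}\prob^{\eta_h}(s,a)\,p(s'\mid s,a)$ and compare with the optimal reaching policy for $s'$ via $\prob^{\pi_{s'}^*}(s)\le\mu(s)$. The only cosmetic difference is that the paper writes the last step as a ratio $\prob^{\eta_h}(s,a)/\prob^{\pi_{s'}}(s,a)\ge 1/(2MK)$, whereas you first isolate the inequality $\mu(s')\le\sum_s\mu(s)\max_a p(s'\mid s,a)$; the content is the same.
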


\begin{proof}[Proof of Lemma~\ref{lem:min_reaching_prob_exp_true_states}]
By Lemma~\ref{lem:from_induction_hypo_to_thm} we know for any $s \in \states_{h-1}$, we have one policy $\pi_{\hats}$ such that $\prob^{\pi_{\hats}}(s) \ge \frac{\mu(s)}{2}$ because $\epsdec$ and $\epsp$ are sufficiently small.
Since for $\eta_h$, we uniformly sample a state $\hats \in \hatset_{h-1}$, we know for all state $s \in \states_{h-1}$, $\prob^{\eta_h} (s) \ge \frac{\mu(s)}{2M}$.
Thus because we uniformly sample actions, we have $\prob^{\eta_h}(s,a) \ge \frac{\mu(s)}{2MK}$ for every $(s,a) \in \states_{h-1} \times \actions$.
Let $\pi_{s'}$ be that policy such that $\prob^{\pi_{s'}} = \mu(s')$.
Note we have \begin{align*}
\prob^{\eta_h}(s') = &\sum_{s \in \states_{h-1}, a\in\actions} (s'\given s,a) \prob^{\eta_{h}}(s,a) \\
= &\sum_{s \in \states_{h-1}, a\in \actions} p(s'\given s,a) \prob^{\pi_{s'}}(s,a) \cdot \frac{\prob^{\eta_{h}}(s,a) }{\prob^{\pi_{s'}}(s,a)} \\
\ge & \sum_{s \in \states_{h-1}, a\in\actions} P(s'\given s,a) \prob^{\pi_{s'}}(s,a) \cdot \frac{\frac{\mu(s)}{2MK}}{\mu(s)} \\
= & \frac{\mu(s')}{2MK}\\
\ge & \frac{\mu_{\min}}{2MK}.
\end{align*}
\end{proof}

Now we ready to prove Lemma~\ref{lem:identification_stoch}.
\begin{lem}[Restatement of Lemma~\ref{lem:identification_stoch}]
  Assume $\epsdec \le \frac{\mu_{\min}^3\gamma}{100M^4K^3}$. Then the distributions $\hatb_{\nu_h}(\hats,a|s')$ are well separated for any pair $s_1',s_2' \in \states_h$:
  \begin{align*}
    \norm{\hvb_{\nu_h}(s_1') - \hvb_{\nu_h}(s_2')}_1 \ge \frac{\mu_{\min}\gamma}{3MK}.
  \end{align*}
  Furthermore, if $\Nexp = \Omega\left(\frac{M^3K^3}{\epsdec\mu_{\min}^3\gamma^2}\log\left(\frac{\abs{\cG}H}{\delta}\right)\right)$ we have with probability at least $1-\delta/H$, for every $s' \in \states_{h}$, $\hvg_h$ satisfies \begin{align*}
	\prob_{x' \sim q(\cdot\given s')}\left[\norm{\hvg_h(x') - \hvb_{\nu_h}(s')}_{1} \ge \frac{\gamma\mu_{\min}}{100MK}\right] \le \epsdec.
	\end{align*}
\end{lem}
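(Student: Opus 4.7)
\textbf{Proof plan for Lemma~\ref{lem:identification_stoch}.}

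\emph{Part 1: separation of $\hvb_{\nu_h}(s')$.} The plan is to view $\nu_h$ first as a distribution over true pairs $(s,a) \in \states_{h-1} \times \actions$ (via the bijection $\alpha_{h-1}$), and then account for the small perturbation introduced by the imperfect decoder $\hat f_{h-1}$. First, combining Lemma~\ref{lem:min_reaching_prob_exp_true_states} with the uniform-over-actions component of $\eta_h$, I get $\prob^{\eta_h}(s,a) \ge \mu_{\min}/(2MK)$ for all $(s,a)\in\states_{h-1}\times\actions$. Plugging this lower bound into Lemma~\ref{lem:roll_in_margin} with $\tau = \mu_{\min}/(2MK)$ gives the separation $\norm{\vb_{\nu_h}(s_1') - \vb_{\nu_h}(s_2')}_1 \ge \mu_{\min}\gamma/(4MK)$ for the backward vectors defined over \emph{true} predecessors. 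Second, I argue that switching from true $s$ to decoded $\hats$ perturbs the backward vector by at most $O(M\epsdec/\mu_{\min})$ in $\ell_1$: each entry $\hatb_{\nu_h}(\hats,a\given s')$ differs from $b_{\nu_h}(\alpha_{h-1}(\hats),a\given s')$ only because of the $\epsdec$-fraction of contexts that $\hat f_{h-1}$ mislabels, and the normalization denominator is bounded below by $\mu_{\min}/(2M)$. Under the assumed bound $\epsdec \le \mu_{\min}^3\gamma/(100M^4K^3)$, this perturbation is at most $\mu_{\min}\gamma/(24MK)$ on each side, and the triangle inequality then yields $\norm{\hvb_{\nu_h}(s_1') - \hvb_{\nu_h}(s_2')}_1 \ge \mu_{\min}\gamma/(3MK)$.

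\emph{Part 2: accuracy of the learned $\hvg_h$.} Here I apply the analog of Theorem~\ref{thm:sample_complexity_regression} to the stochastic setting. Realizability (Assumption~\ref{asmp:realizability}) combined with Theorem~\ref{thm:why_pop_risk_minimizer} identifies the population minimizer of the squared loss as the function $\vg_h^*(x') = \hvb_{\nu_h}(s')$ for $x'\in\contexts_{s'}$. A standard Bernstein-based ERM argument (identical to the deterministic-case proof, since losses remain bounded by $4$ and the variance is controlled by the excess risk) yields, with probability at least $1-\delta/H$,
\[
\expect_{\tilde\nu_h}\Bracks{\norm{\hvg_h(x') - \hvb_{\nu_h}(s')}_2^2}
\;\le\; C\,\frac{\log(\card{\gclass}H/\delta)}{\Nexp}.
\]
To get a per-state conditional statement, I use $\prob^{\eta_h}(s') \ge \mu_{\min}/(2MK)$ from Lemma~\ref{lem:min_reaching_prob_exp_true_states} to condition and lose a factor $2MK/\mu_{\min}$, then convert $\ell_2$ to $\ell_1$ via $\norm{\vv}_1 \le \sqrt{MK}\norm{\vv}_2$, giving
\[
\expect_{x'\sim q(\cdot\given s')}\Bracks{\norm{\hvg_h(x') - \hvb_{\nu_h}(s')}_1^2}
\;\le\; C'\,\frac{M^2K^2\log(\card{\gclass}H/\delta)}{\mu_{\min}\Nexp}.
\]
A Markov/Chebyshev-style bound then shows that the $\ell_1$ error exceeds $\gamma\mu_{\min}/(100MK)$ with probability at most $C''\,M^4K^4\log(\card{\gclass}H/\delta)/(\mu_{\min}^3\gamma^2\Nexp)$ under $q(\cdot\given s')$. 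Setting this $\le \epsdec$ gives the stated sample size $\Nexp = \tOmega(M^3K^3/(\epsdec\mu_{\min}^3\gamma^2)\cdot\log(\card{\gclass}H/\delta))$, after absorbing logarithmic factors.

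\emph{Main obstacles.} The main technical nuisance is not the ERM step, which is routine, but rather propagating the inductive decoding-error $\epsdec$ cleanly through the definition of $\nu_h$ and the backward-probability normalization in Equation~\eqref{eqn:back_stoch}: I must verify that $\hatb_{\nu_h}$ remains close to the ``clean'' backward vector $\vb_{\nu_h}$ on true predecessors, which in turn requires the lower bound $\mu_{\min}/(2M)$ on reaching probabilities from Lemma~\ref{lem:min_reaching_prob_exp}. A secondary, more bookkeeping-heavy obstacle is tracking the conversion from a population squared-error bound to a per-state tail bound in $\ell_1$, since one has to pay both the $1/\prob^{\eta_h}(s')$ conditioning factor and the dimension factor $MK$ from Cauchy-Schwarz; it is precisely this compound blow-up that dictates the $M^4K^4/(\mu_{\min}^3\gamma^2)$ scaling in $\Nexp$.
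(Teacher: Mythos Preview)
Your proposal is correct and follows essentially the same route as the paper: Lemma~\ref{lem:roll_in_margin} for the clean separation of $\vb_{\nu_h}$, a perturbation bound $\|\hvb_{\nu_h}(s')-\vb_{\nu_h}(s')\|_1$ to handle the decoder error, and then the Bernstein-based ERM bound plus conditioning, $\ell_2$-to-$\ell_1$ conversion, and Markov for Part~2. One quantitative slip to fix: your Part~1 perturbation estimate $O(M\epsdec/\mu_{\min})$ is too optimistic---the normalization denominator in~\eqref{eqn:back_stoch} is $\prob^{\eta_h}(s')\ge\mu_{\min}/(2MK)$ (not $\mu_{\min}/(2M)$), and the numerator perturbation also picks up extra $M,K$ factors; the paper obtains $O(M^3K^2\epsdec/\mu_{\min}^2)$ by comparing the unnormalized vectors and applying Lemma~\ref{lem:normalization_perturbation}, which is still absorbed by the assumed bound on $\epsdec$, so your argument goes through once you redo that step carefully.
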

\begin{proof}[Proof of Lemma~\ref{lem:identification_stoch}]
We first prove the property on $\hvb_{\nu_h}$.
First by Lemma~\ref{lem:min_reaching_prob_exp} and our definition of $\eta_h$, we know $\prob^{\eta_h}(s,a) \ge \frac{\mu_{\min}}{2MK}$ for any $s \in \states_{h-1}$ and $a \in \actions$.
Recall \[
\vb_{\nu_h}(s,a\given s_1') = \frac{p_{h-1}(s_1'\given s,a) \prob^{\nu_h}(s,a)}{\sum_{s_1,a_1} p_{h-1}(s_1'\given s_1,a_1)\prob^{\nu_h}(s,a_1)}.
\]
Invoking Lemma~\ref{lem:roll_in_margin}, we have for any $s_1',s_2' \in \states_h$ \[
\norm{\vb_{\nu_h}(s_1')- \vb_{\nu_h}(s_2')}_1 \ge \frac{\mu_{\min}\gamma}{2MK}.
\]
Next we show $\norm{\vb_{\nu_h}(s') -\hvb_{\nu_h}(s')}_1 \le \frac{\mu_{\min}\gamma}{6MK}$ for all $s' \in \states_h$.
Note this implies the first part of the lemma.
Consider a vector $\vQ(s') \in \mathbb{R}^{\abs{\states_{h-1} \times \actions}}$ with each entry defined as \begin{align*}
	Q(s')_{(s,a)} = p_{h-1}(s'\given s,a) \prob^{\nu_h}(s,a).
\end{align*}
Similarly we define $\widehat{Q}(s')  \in \mathbb{R}^{\abs{\states_{h-1} \times \actions}}$ with each entry being \begin{align*}
\widehat{Q}(s')_{(\hats,a)} = p_{h-1}^{\nu_h}(s'\given \hats,a) \prob^{\nu_h}(\hats,a).
\end{align*}
It will be convenient to assume that entries in $Q(s')$ and $\widehat{Q}(s')$ are ordered such that the $\widehat{Q}(s')_{(\hats,a)}$ corresponds to $Q(s')_{(\alpha(\hats),a)}$.
 Our strategy is to bound $\|\vQ(s') - \widehat{\vQ}(s')\|_1$, then invoke Lemma~\ref{lem:normalization_perturbation} which gives the perturbation bound on the normalized vectors.
We calculate the point-wise perturbation.
\begin{align*}
&p_{h-1}(s'\given \alpha(\hats),a) \prob^{\nu_h}(\alpha(\hats),a)  - p_{h-1}^{\nu_h}(s'\given \hats,a) \prob^{\nu_h}(\hats,a)\\
 = &\prob(\alpha(\hats),a) \left(
p_{h-1}^{\nu_h}(s'\given \alpha(\hats),a) - p_{h-1}^{\nu_h}(s'\given \hats,a)
\right) + p_{h-1}^{\nu_h}(s'\given \hats,a) \left(
\prob^{\nu_h}(\alpha(\hats),a) - \prob^{\nu_h}(\hats,a)
\right).
\end{align*}
For the second term, we can directly bound \begin{align*}
\abs{\prob^{\nu_h}(\alpha(\hats),a) - \prob^{\nu_h}(\hats,a)} =&\frac{1}{K}\abs{\prob^{\nu_h}(\hats)-\prob^{\nu_h}(\alpha(\hats))} \\
= & \frac{1}{K}\abs{\sum_{s_1 \in \states_{h-1}}\prob^{\nu_h}(\hats,s_1)-\sum_{\hats_1 \in \hatset_{h-1}}\prob^{\nu_h}(\alpha(\hats),\hats_1)}\\
\le & \frac{1}{K}\max\left\{\sum_{s_1 \in \states_{h-1},s_1\neq \alpha(\hats)}\prob^{\nu_h}(\hats,s_1), \sum_{\hats_1 \in \hatset_{h-1},\hats_1 \neq \hats}\prob^{\nu_h}(\alpha(\hats),\hats_1)\right\}
\end{align*}
Note \begin{align*}
\sum_{s_1 \in \states_{h-1},s_1\neq \alpha(\hats)}\prob^{\nu_h}(\hats,s_1) = &\sum_{s_1 \in \states_{h-1}}\prob^{\nu_h}(s_1) \prob_{x \sim q(\cdot\given s_1)} \left[
\hat{f}_h(x) = \hats
\right] \\
\le & \sum_{s_1 \in \states_{h-1}}\prob^{\nu_h}(s_1) \epsdec \\
\le & \epsdec
\end{align*}
where the first inequality we used the induction hypothesis on the decoding error and the second inequality we used $\sum_{s_1 \in \states_{h-1}}\prob^{\nu_h}(s_1)  \le 1$.
Similarly we can bound $\sum_{\hats_1 \in \hatset_{h-1},\hats_1 \neq \hats}\prob^{\nu_h}(\alpha(\hats),\hats_1) \le \epsdec$.
Therefore, we have $\abs{\prob^{\nu_h}(\alpha(\hats),a) - \prob^{\nu_h}(\hats,a)} \le \frac{\epsdec}{K}$.
For the first term, note\begin{align*}
p_{h-1}^{\nu_h}(s'\given \alpha(\hats),a) - p_{h-1}^{\nu_h}(s'\given \hats,a) = \frac{\prob^{\nu_h}(s',\hats,a)}{\prob^{\nu_h}(\hats,a)} - \frac{\prob^{\nu_h}(s',\alpha(\hats),a)}{\prob^{\nu_h}(\alpha(\hats),a)}.
\end{align*}
We already have bound the deviation on the denominator.
\begin{align*}
\abs{\prob^{\nu_h}(s',\hats,a) - \prob^{\nu_h}(s',\alpha(\hats),a)}  = &\abs{\sum_{s_1 \in \states_{h-1}}\prob^{\nu_h}(s',s_1,\hats,a) - \sum_{\hats_1 \in \hatset_{h-1}}\prob^{\nu_h}(s',\hats_1,\alpha(\hats),a)} \\
= &\abs{\sum_{s_1 \in \states_{h-1}, s_1 \neq \alpha(\hats)}\prob^{\nu_h}(s',s_1,\hats,a) - \sum_{\hats_1 \in \hatset_{h-1},\hats_1 \neq \hats }\prob^{\nu_h}(s',\hats_1,\alpha(\hats),a)} \\
\le & \max\left\{\sum_{s_1 \in \states_{h-1}, s_1 \neq \alpha(\hats)}\prob^{\nu_h}(s',s_1,\hats,a) , \sum_{\hats_1 \in \hatset_{h-1},\hats_1 \neq \hats }\prob^{\nu_h}(s',\hats_1,\alpha(\hats),a)\right\} \\
\le & \max\left\{\sum_{s_1 \in \states_{h-1}, s_1 \neq \alpha(\hats)}\prob^{\nu_h}(s_1,\hats,a) , \sum_{\hats_1 \in \hatset_{h-1},\hats_1 \neq \hats }\prob^{\nu_h}(\hats_1,\alpha(\hats),a)\right\} \\
= &  \frac{1}{K}\max\left\{\sum_{s_1 \in \states_{h-1},s_1\neq \alpha(\hats)}\prob^{\nu_h}(\hats,s_1), \sum_{\hats_1 \in \hatset_{h-1},\hats_1 \neq \hats}\prob^{\nu_h}(\alpha(\hats),\hats_1)\right\} \\
\le & \frac{\epsdec}{K}.
\end{align*}

Recall we have $\prob^{\nu_h}(s,a) \ge \frac{\mu_{\min}}{2MK}$, so applying Lemma~\ref{lem:scalar_perturbation} on $\frac{\prob^{\nu_h}(s',\hats,a)}{\prob^{\nu_h}(\hats,a)} - \frac{\prob^{\nu_h}(s',\alpha(\hats),a)}{\prob^{\nu_h}(\alpha(\hats),a)}$, we have \begin{align}
\abs{p_{h-1}^{\nu_h}(s_1'\given \hats,a) - p_{h-1}(s_1'\given \alpha(\hats),a)} \le \frac{4M\epsdec}{\mu_{\min}}. \label{eqn:forward_perb}
\end{align}
Therefore we have
\begin{align*}
\abs{p_{h-1}(s'\given\alpha(\hats),a) \prob^{\nu_h}(\alpha(\hats),a)  - p_{h-1}^{\nu_h}(s'\given\hats,a) \prob^{\nu_h}(\hats,a)} \le \frac{5M\epsdec}{\mu_{\min}}.
\end{align*}
Thus we have \begin{align*}
	\norm{\vQ(s')-\widehat{\vQ}(s')}_1 \le \frac{5M^2K\epsdec}{\mu_{\min}}.
\end{align*}
By Lemma~\ref{lem:min_reaching_prob_exp_true_states}, we know \begin{align*}
	\norm{\vQ(s')}_1=\sum_{(s,a) \in \states_{h-1} \times \actions } p_{h-1}(s'\given s,a) \prob^{\nu_h}(s,a) = \prob^{\eta_h}(s') \ge \frac{\mu_{\min}}{2MK}.
\end{align*}
Therefore applying Lemma~\ref{lem:normalization_perturbation} on $\vQ(s')$ and $\widehat{\vQ}(s')$, we have\begin{align*}
	\norm{\hvb_{\nu_h}(s') - \vb_{\nu_h}(s')}_1 \le \frac{100M^3K^2\epsdec}{\mu_{\min}^2}.
\end{align*}
Since $\epsdec \le \frac{\mu_{\min}^3\gamma}{100M^4K^3}$, it follows that $\norm{\hat{\vb}_{\nu_h}(s') - \vb_{\nu_h}(s')}_1 \le \frac{\mu_{\min}\gamma}{6MK}$.
Note that $\hvb_{\nu_h}(s')$ is a conditional probability, we can apply the same arguments used in proving Theorem~\ref{thm:why_pop_risk_minimizer} to show \begin{align*}
\vg_h(x') = \hvb_{\nu}(s') \text{ for }x' \sim q(\cdot\given s').
\end{align*}
%
%
%
%
%
%
%
%
%

Now we prove the second part of the Theorem about $\hvg_h$.
For simplicity, we set $\epsilon' = \frac{\mu_{\min}^3\gamma^2\epsdec}{20000M^4K^4}$ in the following analysis.
Using the same argument as Theorem~\ref{thm:sample_complexity_deterministic}, since we know $\Nexp = \Omega\left(\frac{M^4K^4}{\epsdec\mu_{\min}^3\gamma^2}\log(\frac{\abs{\cG}}{\delta})\right)= \Omega\left(\frac{1}{\epsilon'}\log\frac{\abs{\cG}}{\delta}\right)$, we have 
\begin{align*}
	\expect_{(\hats,a)\sim \nu_{h}, s' \sim \vp^{\eta_h}(\cdot\given \hats,a),x' \sim q(\cdot\given s')}\left[\norm{\hvg_h(x')-\hvb_{\nu_h}(s')}_2^2\right]  \le \epsilon'
\end{align*}
Therefore, since we know by Lemma~\ref{lem:min_reaching_prob_exp_true_states} for any $s' \in \states_{h}$, $\prob^{\eta_h}(s') \ge \frac{\mu_{\min}}{2MK}$,
 we have for all $s'$\begin{align*}
	\expect_{x' \sim q(\cdot\given s')}\left[\norm{\hvg_h(x')-\hvb_{\nu_h}(s')}_2^2\right]  \le \frac{2MK\epsilon'}{\mu_{\min}}.
\end{align*}
By Markov's inequality, we have
\begin{align*}
\prob_{x' \sim q(\cdot\given s')}\left(\norm{\hvg_h(x')-\hvb_{\nu_h}(s')}_{2}^2 \ge \frac{\gamma^2 \mu_{\min}^2}{10000M^3K^3}\right) \le \frac{20000M^4K^4\epsilon'}{\mu_{\min}^3\gamma^2} \le \epsdec
\end{align*}
Using the fact that $\|\cdot\|_1 \le \sqrt{MK}\|\cdot\|_2$,  we have
\begin{align*}
\prob_{x' \sim q(\cdot\given s')}\left(\norm{\hvg_h(x')-\hvb_{\nu_h}(s')}_{1} \ge \frac{\gamma \mu_{\min}}{100MK}\right) \le \epsdec.
\end{align*}

\end{proof}

\begin{lem}[Restatement of Lemma~\ref{lem:cluster_sample_complexity}]
	If $\Ncluster = \Theta\left(\frac{MK}{\mu_{\min}}\log(\frac{MH}{\delta})\right)$ and $\epsdec \le \frac{\delta}{100H\Ncluster}$ we have with probability at least $1-\frac{\delta}{H}$, (1) for every $s' \in \states_h$, there exists at least one point $\vz \in \cZ$ such that $\vz = \hvg_h(x')$ with $x' \sim q(\cdot\given s')$ and $\norm{\hvg_h(x')-\hvb_{\nu_h}(s')}_{1} \le \frac{\mu_{\min}\gamma}{100MK}$ and (2) for every $\vz = \hat{g}_h(x') \in \cZ$ with $x'\sim q(\cdot \given s')$, $\norm{\hvg_h(x')-\hvb_{\nu_h}(s')}_{1} \le \frac{\mu_{\min}\gamma}{100MK}$.
\end{lem}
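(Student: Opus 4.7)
The plan is to derive both parts from combining Lemma~\ref{lem:identification_stoch} (which gives a per-context tail bound on $\norm{\hvg_h(x')-\hvb_{\nu_h}(s')}_1$) with two elementary concentration arguments: a union bound over the $\Ncluster$ samples in $\cZ$ for part (2), and a coverage bound for part (1). The proof is essentially bookkeeping, with no delicate step; the only substantive choices are the failure-probability budgets that fix the sample size and the assumption on $\epsdec$.

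I would establish part (2) first, since it is slightly stronger and will be useful in part (1). Each point $\vz\in\cZ$ is produced as $\hvg_h(x)$ where $x=x_h^i$ is the level-$h$ context from an independent execution of $\eta_h$. Marginalizing over the hidden state $s'$ reached at level $h$, the conditional distribution of $x$ given $s'$ is exactly $q(\cdot\given s')$. Therefore Lemma~\ref{lem:identification_stoch} says that for every fixed $s'\in\states_h$,
\[
\prob\Bracks{\norm{\hvg_h(x)-\hvb_{\nu_h}(s')}_1\ge \tfrac{\mu_{\min}\gamma}{100MK}\,\BigGiven\,s_h=s'}\le\epsdec,
\]
and hence unconditionally the same bound holds for any single sample. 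A union bound over the $\Ncluster$ samples gives total failure probability at most $\Ncluster\cdot\epsdec\le \delta/(100H)$ by the hypothesis $\epsdec\le\delta/(100H\Ncluster)$, so with probability at least $1-\delta/(100H)$ every $\vz\in\cZ$ satisfies the required norm bound, proving part (2).

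For part (1), I would use Lemma~\ref{lem:min_reaching_prob_exp_true_states} to lower bound the per-state reaching probability: for each $s'\in\states_h$, $\prob^{\eta_h}(s')\ge \mu_{\min}/(2MK)$. Hence the probability that a given $s'$ is \emph{not} reached in any of the $\Ncluster$ independent executions is at most $\parens{1-\mu_{\min}/(2MK)}^{\Ncluster}\le \exp\bigParens{-\mu_{\min}\Ncluster/(2MK)}$. Choosing $\Ncluster=\Theta\bigParens{\frac{MK}{\mu_{\min}}\log(MH/\delta)}$ with a sufficiently large constant, this is at most $\delta/(2MH)$. A union bound over $\abs{\states_h}\le M$ states guarantees that with probability at least $1-\delta/(2H)$ every $s'\in\states_h$ is reached at least once by some trajectory, contributing at least one point $\vz=\hvg_h(x')$ with $x'\sim q(\cdot\given s')$ to $\cZ$.

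Combining the two high-probability events via one more union bound yields failure probability at most $\delta/H$: on that intersection, every $s'$ has at least one representative sample in $\cZ$ (from part (1)'s coverage), and that sample, being in $\cZ$, automatically satisfies the norm bound (from part (2)). The only thing to be careful about is keeping the failure budgets disjoint across the two stages and avoiding double-counting randomness from the emission $x\sim q(\cdot\given s')$ in the two arguments; this is handled cleanly by conditioning on the sequence of latent states reached by $\eta_h$ first, then applying the per-context tail bound. No step looks like an actual obstacle.
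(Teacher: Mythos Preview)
Your proposal is correct and follows essentially the same approach as the paper: use Lemma~\ref{lem:min_reaching_prob_exp_true_states} plus a geometric tail and union bound over $\abs{\states_h}\le M$ states for coverage, and use the per-context tail bound of Lemma~\ref{lem:identification_stoch} together with $\epsdec\le\delta/(100H\Ncluster)$ and a union bound over the $\Ncluster$ samples for the norm bound. Your version is in fact more carefully written than the paper's (which is quite terse and slightly conflates ``decoding correctly'' with the norm bound), but the argument is identical.
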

\begin{proof}[Proof of Lemma~\ref{lem:cluster_sample_complexity}]
	For any state $s' \in \states_{h}$, by Lemma~\ref{lem:min_reaching_prob_exp_true_states} we know $\prob^{\eta_h}(s') \ge \frac{\mu_{\min}}{2MK}$.
	The probability of not seeing one context generated from this state is upper bounded by $\left(1-\frac{\mu_{\min}}{2MK}\right)^{\Ncluster} \le \frac{\delta}{2MH}$.
	Now taking union bound over $\states_h$, we know with probability at least $1-\frac{\delta}{2H}$, we get one context from every state.
	Furthermore, because we know $\epsdec \le \frac{\delta}{100H\Ncluster}$, by union bound over $\Ncluster$ samples, we know we can decode every context correctly with probability at least $1-\frac{\delta}{2H}$.
\end{proof}


\begin{lem}[Restatement of Lemma~\ref{lem:trans_prob_sample_complexity}]
	If $\epsdec\le \frac{\epsp \mu_{\min}}{10M^2}$ and if $\Np = \Omega\left(\frac{M^2K}{\epsp^2}\log\frac{MHK}{\delta}\right)$, we have that with probability at least $1-\frac{\delta}{H}$ for every $\hat{s} \in \widehat{\states}_{h-1}$, $a \in \actions$\begin{align}
\norm{\hvp(\hats,a) - \vp(\alpha_{h-1}(\hats),a)}_1 \le \epsp,
	\end{align}
\end{lem}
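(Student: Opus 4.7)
\textbf{Proof plan for Lemma~\ref{lem:trans_prob_sample_complexity}.}

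The plan is to decompose the total error $\norm{\hvp(\hats,a) - \vp(\alpha_{h-1}(\hats),a)}_1$ into a \emph{variance} part (empirical fluctuation around the population conditional induced by $\nu_h$) and a \emph{bias} part (due to decoding errors at levels $h-1$ and $h$), and to bound each by $\epsp/2$. Combined with a triangle inequality, this yields the claim.

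First I would handle the sample counts. By Lemma~\ref{lem:min_reaching_prob_exp}, $\prob^{\eta_h}(\hats,a) = \prob^{\eta_h}(\hats)/K \ge \mu_{\min}/(2MK)$ for every $(\hats,a)\in\hatset_{h-1}\times\actions$. A multiplicative Chernoff bound, together with a union bound over the at most $MK$ pairs, shows that with probability at least $1-\delta/(2H)$ the number of trajectories $n_{\hats,a}$ in $\Dp$ with $(\hats_{h-1}^i,a_{h-1}^i)=(\hats,a)$ satisfies $n_{\hats,a} \ge \Np\mu_{\min}/(4MK)$. With the prescribed $\Np=\Omega(M^2K/\epsp^2\cdot\log(MKH/\delta))$ this is $\Omega(M/\epsp^2\cdot\log(MKH/\delta))$ samples per pair.

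Next I would handle the variance term. Conditional on the identity of $(\hats_{h-1}^i,a_{h-1}^i)$, the reported $\hats_h^i=\hatf_h(x_h^i)$ are i.i.d.\ draws from the conditional distribution $\hvp^{\nu_h}(\cdot\given\hats,a)$ over the set $\hatset_h$ of size at most $M$. Applying the $\ell_1$ concentration inequality for empirical distributions on $M$ atoms (Weissman et al.) and a union bound over the $MK$ pairs, I get that with probability at least $1-\delta/(2H)$,
\begin{equation*}
\bigNorm{\hvp(\hats,a) - \hvp^{\nu_h}(\cdot\given\hats,a)}_1 \le \epsp/2 \quad\text{for all }(\hats,a).
\end{equation*}

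The main obstacle, and the remaining step, is bounding the bias $\bigNorm{\hvp^{\nu_h}(\cdot\given\hats,a) - \vp(\alpha_{h-1}(\hats),a)}_1$ deterministically by $\epsp/2$, and it is handled in two substeps mirroring the argument used in Lemma~\ref{lem:identification_stoch}. (i)~Writing $\hvp^{\nu_h}(\hats'\given\hats,a)=\sum_{s'}Q(\hats'\given s')\,p^{\nu_h}(s'\given\hats,a)$ where $Q(\hats'\given s')=\prob_{x\sim q(\cdot\given s')}[\hatf_h(x)=\hats']$, Condition~\ref{cond:bijection} gives $Q(\hats'\given\alpha_h(\hats'))\ge 1-\epsdec$ and $\sum_{\hats'\ne\alpha_h^{-1}(s')}Q(\hats'\given s')\le\epsdec$. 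A direct computation (splitting diagonal and off-diagonal terms and using $\sum_{s'}p^{\nu_h}(s'\given\hats,a)=1$) then gives
\begin{equation*}
\sum_{\hats'\in\hatset_h}\BigAbs{\hvp^{\nu_h}(\hats'\given\hats,a) - p^{\nu_h}(\alpha_h(\hats')\given\hats,a)}\le 2\epsdec.
\end{equation*}
(ii)~For the remaining gap between $p^{\nu_h}(s'\given\hats,a)$ and $p(s'\given\alpha_{h-1}(\hats),a)$, I reuse the pointwise bound $\abs{p^{\nu_h}(s'\given\hats,a)-p(s'\given\alpha_{h-1}(\hats),a)}\le 4M\epsdec/\mu_{\min}$ that was already derived as Equation~\eqref{eqn:forward_perb} in the proof of Lemma~\ref{lem:identification_stoch}. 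Summing over the at most $M$ values of $s'$ yields an $\ell_1$ bias of $4M^2\epsdec/\mu_{\min}$. Combining (i)--(ii) by triangle inequality gives total bias at most $2\epsdec+4M^2\epsdec/\mu_{\min}\le 6M^2\epsdec/\mu_{\min}$, which under the hypothesis $\epsdec\le\epsp\mu_{\min}/(10M^2)$ is at most $\epsp/2$.

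Combining the variance and bias bounds via triangle inequality and a final union bound over the two $\delta/(2H)$ failure events completes the proof with total failure probability at most $\delta/H$. The main subtlety is the two-level decoding bias in step (i)--(ii); once one observes that step (ii) is exactly the calculation already performed in Lemma~\ref{lem:identification_stoch}, everything else is standard concentration.
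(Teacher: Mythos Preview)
Your approach is essentially identical to the paper's: the same bias/variance split, the same reuse of Equation~\eqref{eqn:forward_perb} for the level-$(h{-}1)$ decoding bias, the same appeal to Lemma~\ref{lem:min_reaching_prob_exp} for per-pair sample counts, and the same $\ell_1$ concentration (Theorem~\ref{thm:l1_concentration}) for the variance term. Two arithmetic nits worth fixing: (a)~your bias bound $6M^2\epsdec/\mu_{\min}$ under $\epsdec\le\epsp\mu_{\min}/(10M^2)$ gives $0.6\epsp$, not $\epsp/2$, so tighten the variance target (the paper uses $\epsp/10$) or adjust constants; (b)~from $n_{\hats,a}\ge \Np\mu_{\min}/(4MK)$ you actually get $\Omega(M\mu_{\min}/\epsp^2)$ samples per pair, so the argument really needs $\Np=\Omega\bigParens{M^2K/(\mu_{\min}\epsp^2)\cdot\log(MKH/\delta)}$---the paper's own proof inserts this $1/\mu_{\min}$ factor even though the lemma statement omits it.
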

\begin{proof}[Proof of Lemma~\ref{lem:trans_prob_sample_complexity}]
Using Equation~\eqref{eqn:forward_perb}
 and the decoding error bound on $\hat{f}_h$, we know for any $\hats \in \hatset_{h-1}, a \in \actions, \hats' \in \hatset_{h-1}$, we have
\begin{align*}
	\abs{p^{\eta_h}(\hats'\given \hats,a) - p(\alpha_{h}(\hats')\given \alpha_{h-1}(\hats),a)} \le \frac{4M\epsdec}{\mu_{\min}}.
\end{align*}
Summing over $\hatset_h$, we have
\begin{align*}
	\sum_{\hats' \in \hatset_h}\abs{p^{\eta_h}(\hats'\given \hats,a) - p(\alpha_{h}(\hats')\given \alpha_{h-1}(\hats),a)} \le \frac{4M^2\epsdec}{\mu_{\min}}.
\end{align*}
	
Next we bound 	$\norm{\hvp^{\eta_h}(\hats,a)-\vp^{\eta_h}(\hats,a)}_{1}$.
By Lemma~\ref{lem:min_reaching_prob_exp}, we know for every $(\hats,a) \in \hatset_{h-1} \times \actions$, $\prob^{\eta_h}(\hats,a) \ge \frac{\mu_{\min}}{2MK}$.
For each pair, by Theorem~\ref{thm:l1_concentration}, we need $\Omega\left(\frac{M}{\epsp^2}\right)$ samples.
Thus in total we need $N_p = \Omega\left(\frac{M^2K}{\mu_{\min}\epsp^2}\log\frac{MHK}{\delta}\right)$
to make $\norm{\hvp(\hats,a)-\vp^{\eta_h}(\hats,a)}_{1} \le \frac{\epsp}{10}$.
Now combining these two inequalities we have the desired result.
\end{proof}

\section{Technical Lemmas}
\label{sec:technical_lemmas}
\begin{lem}\label{lem:dist_mult_bound}
For any two vectors $u,v\in \mathbb{R}^d_{+}$ with $\norm{u}_1 = \norm{v}=1$ and $\norm{u-v}_1 = \gamma$, we have for any $\alpha > 0$, $\norm{\alpha u - v}_1 \ge \frac{\gamma}{2}$.
\end{lem}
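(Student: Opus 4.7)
The plan is to obtain two lower bounds on $\|\alpha u - v\|_1$ as a function of $|\alpha - 1|$ and then take their maximum, which is minimized precisely when the two bounds coincide at $\gamma/2$.

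First, using $\|u\|_1 = \|v\|_1 = 1$ together with the triangle inequality $\bigl|\|\alpha u\|_1 - \|v\|_1\bigr| \le \|\alpha u - v\|_1$, and noting that $\alpha > 0$ so $\|\alpha u\|_1 = \alpha$, I would derive
\[
\|\alpha u - v\|_1 \ge |\alpha - 1|.
\]
This bound captures the fact that $\alpha u$ cannot be close to $v$ when its total mass differs much from $1$.

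Second, using the reverse triangle inequality and writing $u - v = (u - \alpha u) + (\alpha u - v)$, I would derive
\[
\|\alpha u - v\|_1 \ge \|u - v\|_1 - \|(1-\alpha) u\|_1 = \gamma - |\alpha - 1|.
\]
This bound captures the fact that scaling $u$ by a factor close to $1$ cannot move it across the gap of size $\gamma$ to $v$.

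Combining the two inequalities yields $\|\alpha u - v\|_1 \ge \max\bigl(|\alpha-1|,\; \gamma - |\alpha-1|\bigr) \ge \gamma/2$, where the final inequality holds because the two arguments of $\max$ are a nonincreasing and nondecreasing function of $|\alpha-1|$ respectively (so their max is at least their average, which is $\gamma/2$). No case on $\alpha \lessgtr 1$ is needed because the bounds depend only on $|\alpha - 1|$. There is no real obstacle here; the only thing to be careful about is that the second bound uses $\alpha > 0$ only through $\|\alpha u\|_1 = \alpha$, so the argument works uniformly for all positive $\alpha$.
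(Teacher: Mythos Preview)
Your proof is correct and takes a genuinely different route from the paper's. The paper argues coordinatewise: it splits $[d]$ into $S_+=\{i:u_i>v_i\}$ and $S_-=\{i:u_i<v_i\}$, uses $\|u\|_1=\|v\|_1$ to show that each of $\sum_{S_+}(u_i-v_i)$ and $\sum_{S_-}(v_i-u_i)$ equals $\gamma/2$, and then cases on $\alpha\ge 1$ versus $\alpha<1$, using nonnegativity of $u$ to argue that one of these partial sums can only increase under the scaling. Your argument stays entirely at the level of norms via two applications of the triangle inequality, avoids any case split on $\alpha$, and in fact never uses the hypothesis $u,v\in\mathbb{R}^d_+$ --- only $\|u\|_1=\|v\|_1=1$. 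So your approach is both shorter and slightly more general. One small inaccuracy in your commentary: it is your \emph{first} bound, not the second, that uses $\alpha>0$ (to identify $\|\alpha u\|_1=|\alpha|\,\|u\|_1$ with $\alpha$); the second bound $\|\alpha u-v\|_1\ge\gamma-|1-\alpha|$ holds for all real $\alpha$.
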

\begin{proof}[Proof of Lemma~\ref{lem:dist_mult_bound}]
Denote $S_+ = \left\{ i \in [d]| u_i > v_i\right\}$ and $S_- = \left\{ i \in [d]| u_i < v_i\right\}$.
Because $\norm{u-v}_1 = \gamma$, we know \[\sum_{i \in S_+} (u_i - v_i) + \sum_{i \in S_-} (v_i-u_i) = \gamma.\]
Also note that 
\[\sum_{i \in S_+} (u_i - v_i) - \sum_{i \in S_-} (v_i-u_i) = \norm{u}_1-\norm{v}_1 = 0.\]
Therefore, \[
\sum_{i \in S_+} (u_i - v_i)  = \sum_{i \in S_-} (v_i-u_i) = \frac{\gamma}{2}.
\]
If $\alpha \ge 1$, we know \begin{align*}
	\norm{\alpha u-v}_1 \ge \sum_{i \in S_+} \alpha u_i - v_i \ge \frac{\gamma}{2}
\end{align*}
and if $\alpha < 1$, we know \begin{align*}
	\norm{\alpha u-v}_1 \ge \sum_{i \in S_+}  v_i -\alpha u_i\ge \frac{\gamma}{2}.
\end{align*}
We finish the proof.
\end{proof}

\begin{lem}\label{lem:error_prop}[Error Propagation Lemma for Tabular MDPs]
Consider two tabular MDPs, $\cM$ and $\widehat{\cM}$.
Let $\states_1,\ldots,\states_H$ be the state space of $\cM$ and $\widehat{\states}_1,\ldots,\widehat{\states}_H$ be the for $\widehat{\cM}$.
The state spaces satisfy that for every $h \in [H]$, $\states_h$ and $\widehat{\states}_h$ are bijective, i.e., there exists a bijective function $\alpha: \widehat{\states}_h \rightarrow \states_h$.
Let $\actions$ be $\cM$ and $\widehat{\cM}$'s shared action space.
For $h=1,\ldots,H$, let $\vp_h$ be the forward operator for $\cM$ and $\hvp_h$ be the forward operator model $\cM$.
For any policy on $\psi: \states_h \rightarrow \actions$ for $\cM$, because the $\states_h$ and $\widehat{\states}_h$ are bijective, $\psi$ induces a policy  for $\widehat{\cM}$, $\hat{\psi}: \widehat{\states}_h \rightarrow \actions$ that satisfies $\psi(\alpha_h(\hat{s})) = \hat{\psi}(\hat{s})$.
Then if \[
\norm{\hvp_h(\hat{s},a)-\vp_h(\alpha(\hat{s}),a)}_1 \le \epsilon
\] for all $h \in [H]$, $a \in \actions$ and $\hat{s} \in \widehat{\states}_h$ (the indices of the vector $\hvp_h(\hat{s},a)$ and $\hvp_h(\alpha(\hat{s}),a)$  are matched according to $\alpha$), we have for any policy $\psi$ for $\cM$, \[
\sum_{s_h \in \states_h}\abs{\hat{\prob}_h^{\hat{\psi}}(\alpha^{-1}(s_h))-\prob_h^\psi(s_h)}  \le h\epsilon
\] 
\end{lem}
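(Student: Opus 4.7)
The plan is to prove the bound by induction on $h$, with the stronger inductive hypothesis that $\sum_{s_h\in\states_h}|\hat{\prob}_h^{\hat{\psi}}(\alpha^{-1}(s_h))-\prob_h^\psi(s_h)|\le (h-1)\epsilon$. The base case $h=1$ is immediate: both MDPs start deterministically in their designated initial state, and under the bijection $\alpha$ the initial-state distributions match exactly, giving error $0$.

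For the inductive step, I would use the standard one-step unrolling of the reaching probabilities in each MDP. Writing
\[
\hat{\prob}_{h+1}^{\hat{\psi}}(\alpha^{-1}(s_{h+1}))=\sum_{\hat{s}_h}\hvp_h(\alpha^{-1}(s_{h+1})\given \hat{s}_h,\hat{\psi}(\hat{s}_h))\,\hat{\prob}_h^{\hat{\psi}}(\hat{s}_h)
\]
and analogously for $\prob_{h+1}^\psi(s_{h+1})$, I would reindex the latter sum via $\alpha$ so both sums range over the same index set. Then I insert and subtract the hybrid term $\vp_h(s_{h+1}\given \alpha(\hat{s}_h),\psi(\alpha(\hat{s}_h)))\,\hat{\prob}_h^{\hat{\psi}}(\hat{s}_h)$ and apply the triangle inequality, which splits the total variation between reaching distributions at level $h+1$ into (i) a \emph{model-error} term in which the two transition operators are compared but the distribution over the current state is shared, and (ii) a \emph{distribution-error} term where the operator is the same but the two distributions over the current state are compared.

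After summing over $s_{h+1}$, term (i) becomes $\sum_{\hat{s}_h}\norm{\hvp_h(\hat{s}_h,\hat{\psi}(\hat{s}_h))-\vp_h(\alpha(\hat{s}_h),\psi(\alpha(\hat{s}_h)))}_1\,\hat{\prob}_h^{\hat{\psi}}(\hat{s}_h)$; by the hypothesis of the lemma this $\ell_1$ distance is at most $\epsilon$, and since $\hat{\prob}_h^{\hat{\psi}}(\cdot)$ is a probability distribution, term (i) is at most $\epsilon$. For term (ii), summing $\vp_h(s_{h+1}\given\cdot,\cdot)$ over $s_{h+1}$ gives $1$, leaving $\sum_{\hat{s}_h}|\hat{\prob}_h^{\hat{\psi}}(\hat{s}_h)-\prob_h^\psi(\alpha(\hat{s}_h))|$, which by the inductive hypothesis is at most $(h-1)\epsilon$. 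Adding yields $h\epsilon$, closing the induction and matching the claim (actually a slightly stronger $h\epsilon$ vs.\ the asserted $h\epsilon$).

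The main thing to get right is purely bookkeeping: consistently matching indices through $\alpha$ so that the two sums are genuinely over the same ground set, and using the fact that the induced policy satisfies $\hat{\psi}(\hat{s})=\psi(\alpha(\hat{s}))$ so that the selected action is identical under the bijection. There is no deep analytic obstacle, only the careful add-subtract decomposition and making sure the probabilistic normalization (``rows sum to one'' for $\vp_h$ and total mass $1$ for $\hat{\prob}_h^{\hat{\psi}}$) is used in the right places.
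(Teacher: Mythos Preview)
Your proposal is correct and follows essentially the same approach as the paper's own proof: induction on the level, a one-step unrolling of reaching probabilities, the add--subtract hybrid decomposition into a model-error term and a distribution-error term, and bounding each by $\epsilon$ (via the $\ell_1$ assumption and the fact that $\hat{\prob}_h^{\hat\psi}$ is a distribution) and $(h-1)\epsilon$ (via the inductive hypothesis and the fact that transition rows sum to one), respectively. The only minor difference is that you track the sharper constant $(h-1)\epsilon$ at level $h$ (since the base case gives error $0$), whereas the paper states $h\epsilon$; your parenthetical ``slightly stronger $h\epsilon$ vs.\ the asserted $h\epsilon$'' presumably means $(h-1)\epsilon$ vs.\ $h\epsilon$.
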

\begin{proof}[Proof of Lemma~\ref{lem:error_prop}]
	We prove by induction.
\begin{align*}
	&\sum_{s_h \in \states_h}\abs{\hat{\prob}_h^\psi(\alpha^{-1}(s_h))-\prob_h^\psi(s_h)} \\
	= &\sum_{s_h \in \states_{h}}\abs{\sum_{s_{h-1}\in\states_{h-1}}
		\left(\widehat{\prob}^{\hat{\psi}}(\alpha^{-1}(s_{h-1}))\hp_{h-1}(\alpha^{-1}(s_h)\given \alpha^{-1}(s_{h-1}),\hat{\psi}(\alpha^{-1}(s_{h-1}))) - \prob^\psi( s_{h-1})p(s_h\given s_{h-1},\psi(s_{h-1}))\right) 
	} \\
	\le & \sum_{s_h \in \states_{h}}\abs{\sum_{s_{h-1}\in \states_{h-1}} (\widehat{\prob}^{\hat{\psi}}(\alpha^{-1}(s_{h-1}))-\prob^{\psi}(s_{h-1}))p(s_h\given s_{h-1},\psi(s_{h-1}))} \\
	& + \sum_{s_h \in \states_{h}} \sum_{s_{h-1}\in \states_{h-1}}\widehat{\prob}(\alpha^{-1}(s_{h-1}))\abs{\hp(\alpha^{-1}(s_h)\given\alpha^{-1}(s_{h-1}),\hat{\psi}(\alpha^{-1}(s_{h-1})))-p(s_h\given s_{h-1},\psi(s_{h-1}))}
	\end{align*}
	For the first term,
	\begin{align*}
	&\sum_{s_h \in \states_{h}}\abs{\sum_{s_{h-1}\in \states_{h-1}} (\widehat{\prob}^{\hat{\psi}}(\alpha^{-1}(s_{h-1}))-\prob^{\psi}(s_{h-1}))p(s_h\given s_{h-1},\psi(s_{h-1}))} \\
	\le &\sum_{s_h \in \states_{h}}\sum_{s_{h-1}\in \states_{h-1}} \abs{\widehat{\prob}^{\hat{\psi}}(\alpha^{-1}(s_{h-1})) -\prob^{\psi}(s_{h-1})}p(s_h\given s_{h-1},\psi(s_{h-1})) \\
	= &\sum_{s_{h-1}\in \states_{h-1}}\Bigg( \abs{\widehat{\prob}^{\hat{\psi}}(\alpha^{-1}(s_{h-1})) -\prob^{\psi}(s_{h-1})} \Big(\sum_{s_h \in \states_{h}} p(s_h\given s_{h-1},\psi(s_{h-1}))\Big)\Bigg) \\
	= &\sum_{s_{h-1}\in \states_{h-1}} \abs{\widehat{\prob}^{\hat{\psi}}(\alpha^{-1}(s_{h-1}))-\prob^{\psi}(s_{h-1})}  \tag{transition probabilities sum up to $1$} \\
	\le &(h-1)\epsilon. \tag{induction hypothesis}
	\end{align*}
	For the other term, \begin{align*}
	&\sum_{s_h \in \states_{h}} \sum_{s_{h-1}\in \states_{h-1}}\widehat{\prob}(\alpha^{-1}(s_{h-1}))\abs{\hp(\alpha^{-1}(s_h)\given\alpha^{-1}(s_{h-1}),\hat{\psi}(\alpha^{-1}(s_{h-1})))-p(s_h\given s_{h-1},\psi(s_{h-1}))} \\
	= &\sum_{s_{h-1} \in \states_{h-1}} \widehat{\prob}(\alpha^{-1}(s_{h-1}))\sum_{s_h \in \states_{h}} \abs{\hat{p}(\alpha^{-1}(s_h)\given\alpha^{-1}(s_{h-1}),\hat{\psi}(\alpha^{-1}(s_{h-1})))-p(s_h\given s_{h-1},\psi(s_{h-1}))}\\
	= &\sum_{s_{h-1} \in \states_{h-1}}\widehat{\prob}(\alpha^{-1}(s_{h-1})) \norm{\hvp(\alpha^{-1}(s_{h-1}),\hat{\psi}(\alpha^{-1}(s_{h-1})))-\vp(s_{h-1},\psi(s_{h-1}))}_1 \\
	\le & \sum_{s_{h-1} \in \states_{h-1}}\widehat{\prob}(\alpha^{-1}(s_{h-1})) \, \epsilon
	= \epsilon.
	\end{align*}
	Combining these two inequalities we have the desired result.
\end{proof}

\begin{lem}\label{lem:scalar_perturbation}
	For $a,b,c,d \in \mathbb{R}^+$ with $a\le b$ and $c\le d$, we have \begin{align*}
	\abs{\frac{a}{b}-\frac{c}{d}} \le \frac{\abs{d-b}+\abs{a-c}}{\max\left\{b,d\right\}}.
	\end{align*}
\end{lem}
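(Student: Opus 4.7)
The plan is to prove the identity $\tfrac{a}{b}-\tfrac{c}{d}=\tfrac{ad-bc}{bd}$ and then bound the numerator by rewriting the difference $ad-bc$ in two algebraically equivalent ways, each of which gives a bound with a different denominator, and finally pick whichever is better depending on $\max\{b,d\}$.

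\textbf{Key steps.} First, I would write
\[
\frac{a}{b}-\frac{c}{d}=\frac{ad-bc}{bd}
\]
and split the numerator in two ways: $ad-bc = a(d-b)+b(a-c)$ and $ad-bc = d(a-c)+c(d-b)$. Using the first decomposition together with the triangle inequality gives
\[
\left|\frac{a}{b}-\frac{c}{d}\right|
\le \frac{a}{b}\cdot\frac{|d-b|}{d}+\frac{|a-c|}{d}
\le \frac{|d-b|+|a-c|}{d},
\]
where the last step uses $a\le b$ so that $a/b\le 1$. Using the second decomposition symmetrically, together with $c\le d$, yields
\[
\left|\frac{a}{b}-\frac{c}{d}\right|
\le \frac{|a-c|}{b}+\frac{c}{d}\cdot\frac{|d-b|}{b}
\le \frac{|d-b|+|a-c|}{b}.
\]

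\textbf{Finishing.} I would then take whichever bound has the larger denominator: if $d\ge b$ use the first, otherwise use the second. This produces $\tfrac{|d-b|+|a-c|}{\max\{b,d\}}$ in either case, which is the claimed inequality. There is no real obstacle here beyond choosing the right algebraic split of $ad-bc$; the hypotheses $a\le b$ and $c\le d$ are exactly what is needed to discard the ratios $a/b$ and $c/d$ in the two respective bounds, so the argument goes through in a few lines.
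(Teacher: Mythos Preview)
Your proof is correct and essentially the same as the paper's: the paper uses the decomposition $ad-bc=a(d-b)+b(a-c)$, applies $a\le b$ to get the bound with $d$ in the denominator, and then simply says ``by symmetry between $b$ and $d$'' to obtain the other bound, which is exactly your second decomposition written out explicitly. If anything, your version is slightly clearer since you spell out where $c\le d$ is used.
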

\begin{proof}[Proof of Lemma~\ref{lem:normalization_perturbation}]
	\begin{align*}
	\abs{\frac{a}{b} - \frac{c}{d}}  = & \abs{\frac{ad-bc}{bd} }\\
	= &\abs{ \frac{ad-ab+ab-bc}{bd} }\\
	= & \abs{\frac{a(d-b)}{bd} + \frac{a-c}{d}} \\
	\le & \frac{\abs{d-b}+\abs{a-c}}{d}.
	\end{align*}
		By symmetry between $b$ and $d$, we obtain the desired result. 
\end{proof}

\begin{lem}\label{lem:normalization_perturbation}
	For any two vector $\vp,\vq \in \mathbb{R}^{d}_+$, we have \begin{align*}
	\norm{\frac{\vp}{\norm{\vp}_1}-\frac{\vq}{\norm{\vq}_1}}_1 \le \frac{2\norm{\vp-\vq}_1}{\max\left\{\norm{\vp}_1,\norm{\vq}_1\right\}}.
	\end{align*}
\end{lem}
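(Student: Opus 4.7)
\textbf{Proof plan for Lemma~\ref{lem:normalization_perturbation}.} The plan is to reduce to the case $\norm{\vp}_1 \geq \norm{\vq}_1$ by symmetry (the statement is symmetric in $\vp$ and $\vq$), so that $\max\{\norm{\vp}_1,\norm{\vq}_1\} = \norm{\vp}_1$, and then perform a standard ``add and subtract'' decomposition using the common denominator $\norm{\vp}_1$. Specifically, I would write
\[
\frac{\vp}{\norm{\vp}_1} - \frac{\vq}{\norm{\vq}_1}
= \frac{\vp - \vq}{\norm{\vp}_1} + \vq\Parens{\frac{1}{\norm{\vp}_1} - \frac{1}{\norm{\vq}_1}}
\]
and apply the triangle inequality in $\ell_1$ to split the bound into two pieces.

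The first piece is immediately bounded by $\norm{\vp - \vq}_1 / \norm{\vp}_1$. For the second piece, I would use the fact that $\vq$ has nonnegative entries so $\norm{\vq}_1 = \sum_i q_i$, giving
\[
\Norm{\vq\Parens{\tfrac{1}{\norm{\vp}_1} - \tfrac{1}{\norm{\vq}_1}}}_1
= \norm{\vq}_1 \cdot \frac{\bigAbs{\norm{\vp}_1 - \norm{\vq}_1}}{\norm{\vp}_1 \norm{\vq}_1}
= \frac{\bigAbs{\norm{\vp}_1 - \norm{\vq}_1}}{\norm{\vp}_1}.
\]
Then invoke the reverse triangle inequality $\bigAbs{\norm{\vp}_1 - \norm{\vq}_1} \leq \norm{\vp - \vq}_1$ to bound this second piece by $\norm{\vp - \vq}_1 / \norm{\vp}_1$ as well. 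Summing the two pieces yields $2\norm{\vp - \vq}_1 / \norm{\vp}_1$, which matches the claimed bound.

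There is no real obstacle here: the statement is a routine quotient-perturbation bound, analogous in spirit to Lemma~\ref{lem:scalar_perturbation} but with $\ell_1$ norms playing the role of absolute values. The only minor subtlety is handling the symmetry cleanly so that the $\max$ in the denominator comes out correctly, which is why I prefer to fix the ordering $\norm{\vp}_1 \geq \norm{\vq}_1$ at the start rather than carry both terms through.
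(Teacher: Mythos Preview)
Your proposal is correct and follows essentially the same approach as the paper: the paper writes the difference over a common denominator, adds and subtracts $\vq\norm{\vq}_1$ in the numerator (which is algebraically identical to your decomposition $\frac{\vp-\vq}{\norm{\vp}_1} + \vq\bigl(\tfrac{1}{\norm{\vp}_1}-\tfrac{1}{\norm{\vq}_1}\bigr)$), applies the triangle and reverse triangle inequalities to get $2\norm{\vp-\vq}_1/\norm{\vp}_1$, and then invokes symmetry to replace $\norm{\vp}_1$ by the max. The only cosmetic difference is that you fix the ordering $\norm{\vp}_1\ge\norm{\vq}_1$ at the outset rather than at the end.
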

\begin{proof}[Proof of Lemma~\ref{lem:normalization_perturbation}]
	\begin{align*}
	\norm{\frac{\vp}{\norm{\vp}_1}-\frac{\vq}{\norm{\vq}_1}}_1 = &\norm{\frac{\vp\norm{\vq}_1-\vq\norm{\vp}_1}{\norm{\vp}_1\norm{\vq}_1}}_1 \\
	= & \norm{\frac{\vp\norm{\vq}_1-\vq\norm{\vq}_1+\vq\norm{\vq}_1-\vq\norm{\vp}_1}{\norm{\vp}_1\norm{\vq}_1}}_1 \\
	\le & \frac{\norm{\vp-\vq}_1}{\norm{\vp}_1} + \frac{\abs{\norm{\vp}_1-\norm{\vq}_1}}{\norm{\vp}_1}\\
	\le & \frac{2\norm{\vp-\vq}_1}{\norm{\vp}_1}.
	\end{align*}
	By symmetry between $\vp$ and $\vq$, we obtain the desired result. 
\end{proof}

\begin{lem}[Perturbation of Point-wise Division Around Uniform Distribution]
	\label{lem:division_prob_perturb}
For any two vector $\vp_1,\vp_2 \in \mathbb{R}_+^d$, we have \begin{align*}
	\norm{ \frac{\vp_1 \oslash \vp_2}{\norm{\vp_1 \oslash \vp_2}_1}- \begin{pmatrix}
		1/d \\
		\ldots \\
		1/d
		\end{pmatrix}}_1 \le \frac{2  \norm{\vp_1-\vp_2}_1}{d\min_{s} \vp_2(s)}.
\end{align*} where $\oslash$ denotes pointwise division.
\end{lem}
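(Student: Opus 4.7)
The plan is to reduce this to the already-proved normalization perturbation bound (Lemma~\ref{lem:normalization_perturbation}) by choosing the all-ones vector as the comparison point. Observe that the target uniform distribution $(1/d,\dots,1/d)$ is itself the normalization of $\mathbf{1}\in\R^d$, i.e., $\mathbf{1}/\|\mathbf{1}\|_1 = (1/d,\dots,1/d)$. Thus if I set $\vr := \vp_1 \oslash \vp_2$, the quantity I want to control is exactly $\bigNorm{\vr/\|\vr\|_1 - \mathbf{1}/\|\mathbf{1}\|_1}_1$, which is precisely the form handled by Lemma~\ref{lem:normalization_perturbation}.

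Applying that lemma gives
\[
\Norm{\frac{\vr}{\|\vr\|_1} - \frac{\mathbf{1}}{d}}_1 \;\le\; \frac{2\,\|\vr - \mathbf{1}\|_1}{\max\{\|\vr\|_1,\,d\}} \;\le\; \frac{2\,\|\vr - \mathbf{1}\|_1}{d},
\]
where the second inequality simply drops the $\|\vr\|_1$ term in the denominator. The remaining task is then a straightforward entrywise estimate of $\|\vr - \mathbf{1}\|_1$: since
\[
|r_i - 1| \;=\; \Abs{\frac{\vp_1(i)}{\vp_2(i)} - 1} \;=\; \frac{|\vp_1(i) - \vp_2(i)|}{\vp_2(i)} \;\le\; \frac{|\vp_1(i) - \vp_2(i)|}{\min_s \vp_2(s)},
\]
summing over $i$ yields $\|\vr-\mathbf{1}\|_1 \le \|\vp_1 - \vp_2\|_1 / \min_s \vp_2(s)$. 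Plugging this into the display above gives exactly the claimed bound $2\|\vp_1-\vp_2\|_1/(d\min_s \vp_2(s))$.

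There is no real obstacle here; the only thing to be slightly careful about is the degenerate case where $\min_s \vp_2(s)=0$, in which case the right-hand side is vacuously $+\infty$ and the statement holds trivially, so I may assume $\min_s \vp_2(s)>0$ throughout. The whole proof is a two-line application of Lemma~\ref{lem:normalization_perturbation} followed by a pointwise division bound, and requires no new machinery.
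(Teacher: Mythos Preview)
Your proof is correct and follows essentially the same route as the paper: both bound $\|\vp_1\oslash\vp_2 - \mathbf{1}\|_1$ coordinatewise by $\|\vp_1-\vp_2\|_1/\min_s \vp_2(s)$ and then control the effect of normalization. The only difference is cosmetic---you invoke Lemma~\ref{lem:normalization_perturbation} directly, whereas the paper re-derives that same add-and-subtract bound inline; your version is slightly cleaner for that reason.
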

\begin{proof}[Proof of Lemma~\ref{lem:division_prob_perturb}]
Let $d' = \norm{\vp_1\oslash \vp_2}_1$ and $\vect{1}$ be the all one vector of dimension $d$.
\begin{align*}
	\norm{\frac{\vp_1 \oslash \vp_2}{\norm{\vp_1 \oslash \vp_2}_1}- \begin{pmatrix}
	1/d \\
	\ldots \\
	1/d
	\end{pmatrix}}_1 = & \norm{\frac{d \vp_1\oslash p_2 - d' \vect{1}}{d'd}}_1 \\
\le &  \norm{\frac{d \vp_1\oslash \vp_2 - d' \vp_1\oslash \vp_2}{dd'}}_1 + \norm{\frac{d' \vp_1\oslash \vp_2 - d' \vect{1}}{d'd}}_1 \\
= &  \abs{d-d'}\frac{\norm{\vp_1\oslash \vp_2}_1}{dd'} + \norm{\frac{\vp_1\oslash \vp_2 - \vect{1}}{d}}_1 \\
= & \frac{\abs{d-d'}}{d} + \norm{\frac{\vp_1\oslash \vp_2 - \vect{1}}{d}}_1.
\end{align*}
Note for any $s \in [d]$, we have \begin{align*}
	\abs{\frac{p_1(s)}{p_2(s)} - 1} = \frac{\abs{p_1(s)-p_2(s)}}{p_2(s)} \le \frac{\abs{p_1(s)-p_2(s)}}{\min_{s_1 \in [d]}p_2(s_1)}.
\end{align*}
Therefore, we have \begin{align*}
	\norm{\frac{\vp_1\oslash \vp_2 - \vect{1}}{d}}_1 \le \frac{\sum_s\abs{p_1(s)-p_2(s)}}{d\min_{s}p_2(s)} = \frac{\norm{\vp_1-\vp_2}_1}{d\min_s p_2(s)}.
\end{align*}
Also note that, $\frac{|d' - d|}{d} = \frac{|\norm{\vp_1\oslash \vp_2}_1 - \|\vect{1}\|_1|}{d}\le	\norm{\frac{\vp_1\oslash \vp_2 - \vect{1}}{d}}_1\le  \frac{\norm{\vp_1-\vp_2}_1}{d\min_s p_2(s)}$.
Plugging in these two bounds we obtain our desired result.
\end{proof}

\begin{lem}[Conditional Probability Perturbation Around Uniform Distribution]\label{lem:conditoinal_prob_perturb}
	Let $\vp_1,\vp_2 \in \mathbb{R}_+^d$ with $\norm{\vp_1}_1 = \norm{\vp_2}_1 = 1 $ and $\vp_2 = \left(1/d,\ldots,1/d\right)^\top$.
	Then for any $\vq \in \mathbb{R}_+^d$ we have \begin{align*}
	\norm{\frac{\vq\odot \vp_1}{\vq^\top \vp_1} -\frac{\vq\odot \vp_2}{\vq^\top \vp_2} }_1 \le 2d \norm{\vp_1-\vp_2}_1
	\end{align*} 
	where $\odot$ represents point-wise product.
\end{lem}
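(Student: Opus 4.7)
The plan is to reduce the statement to the already-proved normalization perturbation bound (Lemma~\ref{lem:normalization_perturbation}) by exploiting the very special structure of $\vp_2$. Since $\vp_2$ is uniform with entries $1/d$, we immediately have $\vq \odot \vp_2 = \vq/d$ and $\vq^\top \vp_2 = \|\vq\|_1/d$, so the second term equals $\vq/\|\vq\|_1$. In particular, the two ratios in the lemma are exactly the $\ell_1$-normalizations of the two nonnegative vectors $\va := \vq \odot \vp_1$ and $\vb := \vq \odot \vp_2$.

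Having recognized this, I would apply Lemma~\ref{lem:normalization_perturbation} to $\va$ and $\vb$, which yields
\[
\left\|\frac{\va}{\|\va\|_1} - \frac{\vb}{\|\vb\|_1}\right\|_1
\;\le\;
\frac{2\,\|\va-\vb\|_1}{\max\{\|\va\|_1,\|\vb\|_1\}}.
\]
The denominator is easy to control from below: $\|\vb\|_1 = \|\vq\|_1/d$, hence $\max\{\|\va\|_1,\|\vb\|_1\} \ge \|\vq\|_1/d$. This is where the factor of $d$ on the right-hand side of the target bound will come from.

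For the numerator, note $\|\va - \vb\|_1 = \sum_i q_i\,|p_1(i) - p_2(i)|$, i.e.\ a $\vq$-weighted version of $\|\vp_1-\vp_2\|_1$. Since all $q_i \ge 0$, this is upper bounded by $\|\vq\|_\infty \cdot \|\vp_1 - \vp_2\|_1$, and $\|\vq\|_\infty \le \|\vq\|_1$. Combining,
\[
\left\|\frac{\vq\odot\vp_1}{\vq^\top\vp_1} - \frac{\vq\odot\vp_2}{\vq^\top\vp_2}\right\|_1
\;\le\;
\frac{2\,\|\vq\|_1\,\|\vp_1-\vp_2\|_1}{\|\vq\|_1/d}
\;=\;
2d\,\|\vp_1-\vp_2\|_1,
\]
which is exactly the claim.

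There isn't really a hard step here: the one mild subtlety is the clean $\|\vq\|_\infty \le \|\vq\|_1$ estimate, which works only because $\vq$ is nonnegative (the hypothesis $\vq \in \mathbb{R}_+^d$) and is what allows a uniform $d$ factor on the right-hand side independent of how concentrated $\vq$ is. If one wanted a tighter bound one could keep $\|\vq\|_\infty / \|\vq\|_1$ instead of $1$, but the stated lemma only needs the weaker form, so no sharper estimate is required.
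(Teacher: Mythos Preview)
Your proof is correct and follows essentially the same approach as the paper: both exploit $\vq^\top\vp_2=\|\vq\|_1/d$ (the paper first normalizes to $\|\vq\|_1=1$ by scale invariance) and bound $\|\vq\odot(\vp_1-\vp_2)\|_1\le\|\vq\|_\infty\|\vp_1-\vp_2\|_1$ via H\"older together with $\|\vq\|_\infty\le\|\vq\|_1$. The only cosmetic difference is that the paper redoes the add-and-subtract decomposition of the two normalized vectors by hand, bounding each of the two resulting terms by $d\|\vp_1-\vp_2\|_1$, whereas you invoke Lemma~\ref{lem:normalization_perturbation} as a black box---your route is a touch more modular.
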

\begin{proof}[Proof of Lemma~\ref{lem:conditoinal_prob_perturb}]
	Note the left hand size is independent of the scale of $\vq$, so without loss of generality we assume $\norm{\vq}_1 = 1$.
	We calculate the quantity of interest.
	\begin{align*}
	\frac{\vq\odot \vp_1}{\vq^\top \vp_1} -\frac{\vq\odot \vp_2}{\vq^\top \vp_2} = 
	&\frac{\vq \odot \vp_1(\vq^\top \vp_2)-\vq\odot \vp_2 (\vq^\top \vp_1)}{\vq^\top \vp_1 \cdot
		\vq^\top \vp_2} \\
	=&\frac{\vq\odot \vp_1(\vq^\top \vp_2 -\vq^\top \vp_1)+\vq^\top \vp_1(\vq\odot \vp_1 - \vq\odot \vp_2)}{\vq^\top \vp_1 \cdot \vq^\top \vp_2}.
	\end{align*}
	By \holder inequality, we have $\abs{\vq^\top \vp_1 - \vq^\top \vp_2} \le \norm{\vq}_{\infty} \norm{\vp_1-\vp_2}_1$.
	Furthermore, note $\norm{\vq \odot \vp_1}_1 = \vq^\top \vp_1$ because of the positivity and $\vq^\top \vp_2 = \frac{1}{d}$ because $\vp_2$ is a uniform distribution. 
	Now we can bound \begin{align*}
	\norm{\frac{\vq\odot \vp_1(\vq^\top \vp_2 -\vq^\top \vp_1)}{\vq^\top \vp_1 \cdot \vq^\top \vp_2}}_1 \le  \frac{\norm{\vq}_{\infty}\norm{\vp_1-\vp_2}_1}{1/d} \le d \norm{\vp_1-\vp_2}_1.
	\end{align*}
	
	Next, apply \holder inequality again, we have $\norm{\vq\odot \vp_1 - \vq\odot \vp_2}_1 \le \norm{\vq}_{\infty}\norm{\vp_1-\vp_2}_1$.
	Therefore we can bound\begin{align*}
	\norm{\frac{\vq^\top \vp_1(\vq\odot \vp_1 - \vq\odot \vp_2)}{\vq^\top \vp_1 \cdot \vq^\top \vp_2}}_1 \le \frac{\norm{\vq}_\infty\norm{\vp_1-\vp_2}_1}{1/d} \le d\norm{\vp_1-\vp_2}_1. \tag*{\qedhere}
	\end{align*}
\end{proof}

\section{Concentration Inequalities}
\label{sec:concentration}

\begin{thm}[$L_1$ distance concentration bound (Theorem 2.2 of \cite{weissman2003inequalities})]
	\label{thm:l1_concentration}
	Let $p$ be a distribution over $\mathcal{A}$ with $\abs{\mathcal{A}}=a$.
	Let $X_1,\ldots,X_m \sim p$ and $\hat{p}_{X^m}$ be the empirical distribution.
	Then we have \begin{align*}
	\prob\left(\norm{p-\hat{p}_{X^m}}_1 \ge \epsilon\right) \le \left(2^a-2\right)\exp\left(-\frac{m\epsilon^2}{8}\right).
	\end{align*}
\end{thm}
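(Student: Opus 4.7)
The plan is to reduce the deviation of $\hat{p}_{X^m}$ from $p$ in $L_1$ to a deviation event for a single subset of $\mathcal{A}$, apply Hoeffding's inequality to each such event, and then union-bound over all such subsets. The first step rests on the variational identity $\norm{p - q}_1 = 2 \max_{A \subseteq \mathcal{A}} (p(A) - q(A))$, valid for any two probability distributions on the same alphabet and attained by $A^\star = \{x \in \mathcal{A}: p(x) \geq q(x)\}$. The empty set and the full alphabet both give zero, so only the $2^a - 2$ nontrivial subsets contribute, and the event $\{\norm{p - \hat{p}_{X^m}}_1 \geq \epsilon\}$ is contained in $\bigcup_{\emptyset \ne A \subsetneq \mathcal{A}} \{p(A) - \hat{p}_{X^m}(A) \geq \epsilon/2\}$.

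Next, I would fix a subset $A$ and note that $\hat{p}_{X^m}(A) = \tfrac{1}{m}\sum_{i=1}^m \mathbf{1}[X_i \in A]$ is an average of $m$ i.i.d.\ $\mathrm{Bernoulli}(p(A))$ indicators bounded in $[0,1]$, so a one-sided Hoeffding bound gives
\[
\prob\bigl(p(A) - \hat{p}_{X^m}(A) \geq \epsilon/2\bigr) \leq \exp(-m\epsilon^2/2) \leq \exp(-m\epsilon^2/8),
\]
the second inequality being a harmless loosening to match the constant in the statement. A union bound over the $2^a - 2$ nontrivial subsets then yields the claimed inequality directly.

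\textbf{Main obstacle.} There is no real obstacle here: all three steps are standard, and once the variational identity is invoked the remainder is essentially bookkeeping. The only subtlety worth noting is that the subset-based union bound produces the $2^a$ prefactor, which is loose for large alphabets; the sharper proof in Weissman et al.\ avoids this via the method of types to get a polynomial-in-$a$ prefactor, but such tightening is not needed here, since downstream usage (e.g.\ in Lemma~\ref{lem:trans_prob_sample_complexity}) only needs the $-m\epsilon^2/8$ exponent to absorb the $a \log 2$ term from $\log(2^a - 2)$, which the stated form already does.
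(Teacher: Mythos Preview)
The paper does not supply its own proof of this statement; it is quoted verbatim as Theorem~2.2 of \cite{weissman2003inequalities} and used as a black box. Your argument is correct and is in fact the standard proof of that result: the Scheff\'e identity $\norm{p-q}_1 = 2\max_{A}\bigl(p(A)-q(A)\bigr)$ reduces the $L_1$ event to a union over the $2^a-2$ nontrivial subsets, and one-sided Hoeffding on each $\hat p_{X^m}(A)$ gives $\exp(-m\epsilon^2/2)\le\exp(-m\epsilon^2/8)$, after which the union bound finishes. So there is nothing to compare against, and your proposal stands on its own; the only quibble is your aside that the method of types yields a polynomial-in-$a$ prefactor for \emph{this} bound---the $2^a-2$ factor is exactly what Weissman et al.\ state in their Theorem~2.2, and the polynomial prefactor you have in mind belongs to a different (Sanov-type) inequality in that paper, but this is immaterial to the correctness of your proof.
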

A directly corollary is the following sample complexity.
\begin{cor}\label{cor:l1_concentration}
	if we have $m \ge 8\left(\frac{a}{\epsilon^2}\log\frac{1}{\delta}\right)$ samples, then with probability at least $1-\delta$, we have $\norm{p-\hat{p}_{X^m}}_1 \ge \epsilon$.
\end{cor}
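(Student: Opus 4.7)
The plan is to invoke Theorem~\ref{thm:l1_concentration} directly and solve the resulting tail bound for the sample size $m$ needed to drive the failure probability below $\delta$ (reading the final inequality $\norm{p-\hat{p}_{X^m}}_1\ge\epsilon$ in the corollary statement as the apparent typo for $\norm{p-\hat{p}_{X^m}}_1\le\epsilon$, which is the content the corollary is meant to convey).

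Concretely, I would start from the bound
\[
\prob\bigParens{\norm{p-\hat{p}_{X^m}}_1\ge\epsilon}\;\le\;(2^a-2)\exp\!\Parens{-\tfrac{m\epsilon^2}{8}}
\]
provided by Theorem~\ref{thm:l1_concentration}, and require that the right-hand side be at most $\delta$. Taking logarithms, this is equivalent to asking for
\[
m\;\ge\;\frac{8}{\epsilon^2}\Parens{\log(2^a-2)+\log(1/\delta)}.
\]
Using the crude estimate $\log(2^a-2)\le a\log 2$, a sufficient condition is
\[
m\;\ge\;\frac{8}{\epsilon^2}\Parens{a\log 2+\log(1/\delta)},
\]
which, after absorbing the $\log 2$ factor and combining the two terms into their product (taking $\log(1/\delta)\ge 1$ without loss of generality, else one can always shrink $\delta$), is implied by the stated assumption $m\ge 8\,(a/\epsilon^2)\log(1/\delta)$. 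Substituting this choice of $m$ back into the tail bound then yields the desired high-probability guarantee.

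There is no real obstacle here; the only thing to watch is how the two additive terms $a$ and $\log(1/\delta)$ inside the parenthesis are combined to match the product form in the stated bound, which amounts to a standard (and loose) absorption of constants into the big-$O$ scaling. Everything else is a one-line algebraic inversion of the exponential tail bound from Theorem~\ref{thm:l1_concentration}.
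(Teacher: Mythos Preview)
Your approach is correct and matches the paper's treatment: the paper does not give a proof at all, simply stating that the bound is ``a direct corollary'' of Theorem~\ref{thm:l1_concentration}, which is precisely the one-line inversion of the exponential tail that you carry out. You also correctly flagged the typo in the inequality direction and the looseness in how the additive terms $a\log 2$ and $\log(1/\delta)$ are merged into the product $a\log(1/\delta)$; the stated constant $8$ is not literally sharp for all $\delta$, but the paper is only using the corollary at the level of $\Omega(\cdot)$ scaling anyway.
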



%
%

\end{document}